\documentclass{article}

\usepackage{amsmath}
\usepackage{amssymb,amsthm}
\usepackage{anysize}
\usepackage{enumerate}
\usepackage{bbold}
\usepackage{bbm,bm}

\usepackage[onehalfspacing]{setspace} 
\usepackage{authblk}

\input{macros_nicole}


\definecolor{cite_color}{rgb}{0, 0.3, 0.6}

\usepackage[colorlinks=true,linkcolor=cite_color, citecolor=cite_color]{hyperref}
\usepackage[round]{natbib}
\bibliographystyle{plainnat}

\newcommand{\mycite}[1]{\cite{#1}} 



\newcommand{\lip}{{\rm Lip}}

\setlength{\parindent}{0pt}


\def\NN{\mathbb N}

\def\RR{\mathbb R}

\def\EE{\mathbb E}

\def\HH{\mathcal H}
\def\FF{\mathcal F}

\def\xx{\mathbf{x}}
\def\yy{\mathbf{y}}
\def\zz{\mathbf{z}}

\def\eld{\ell'}

\newcommand{\ka} {\kappa_0}
\newcommand{\kao} {\kappa_1}

\newcommand{\la} {\lambda}
\newcommand{\sx}{\hat S}
\newcommand{\ip}{S}
\newcommand{\lx}{\hat L}
\newcommand{\lp}{L}

\newcommand{\bx}{\hat B}
\newcommand{\bp}{B}

\newcommand{\tx}{\hat T}
\newcommand{\tp}{T}

\newcommand{\ft}{f_t}

\newcommand{\gt}{g_t}
\newcommand{\et}{e_t}
\newcommand{\ek}{e_t}
\newcommand{\zt}{z_t}
\newcommand{\fto}{f_{t+1}}
\newcommand{\gto}{g_{t+1}}
\newcommand{\zto}{z_{t+1}}
\newcommand{\eto}{e_{t+1}}
\newcommand{\VE}{\bm{\epsilon}}

\newcommand{\LL}{{L}^2}

\newcommand{\fp}{f^\dagger}
\newcommand{\gp}{g^\dagger}

\newcommand{\fmin}{f^\dagger}

\newcommand{\sbrac}[1]{\left[#1\right]}

\newcommand{\scalar}[3]{\langle{ #1},{#2} \rangle_{#3}}
\newcommand{\abs}[1]{\left\lvert #1 \right\rvert}





\title{Gradient-Based Non-Linear Inverse Learning}

\author[1]{Abhishake}
\author[2]{Nicole M\"ucke}
\author[3]{Tapio Helin}
\affil[1,3]{Department of Computational Engineering, School of Engineering Science, LUT University, Yliopistonkatu 34, 53850 Lappeenranta, Finland}
\affil[2]{Institut f\"ur Mathematische Stochastik, Technische Universit\"at Braunschweig, Universit\"atsplatz 2, 38106 Braunschweig, Germany}


\begin{document}

\maketitle

\begin{abstract}
We study statistical inverse learning in the context of nonlinear inverse problems under random design. Specifically, we address a class of nonlinear problems by employing gradient descent (GD) and stochastic gradient descent (SGD) with mini-batching, both using constant step sizes. Our analysis derives convergence rates for both algorithms under classical a priori assumptions on the smoothness of the target function. These assumptions are expressed in terms of the integral operator associated with the tangent kernel, as well as through a bound on the effective dimension. Additionally, we establish stopping times that yield minimax-optimal convergence rates within the classical reproducing kernel Hilbert space (RKHS) framework. These results demonstrate the efficacy of GD and SGD in achieving optimal rates for nonlinear inverse problems in random design.
\end{abstract}




\section{Introduction}

Nonlinear inverse problems arise across a wide range of scientific, engineering, and medical contexts, where the goal is to determine underlying parameters, structures, or causes from indirect, incomplete, or noisy measurements. Examples include modeling physical phenomena governed by partial differential equations, electrical impedance tomography, and detecting cracks or flaws in materials using ultrasonic, thermal, or electromagnetic signals, which often involve nonlinear wave propagation.

\vspace{0.2cm} 

Statistical inverse learning in random design is 
a subfield of statistical learning that focuses on solving inverse 
problems where the design (input) points are randomly sampled from some unknown distribution. More specifically, the observations 
follow a model 
\[ Y = (A\fp)(X) + \epsilon \;, \]
where $A:\cH_1 \to \cH_2$ is a linear or non-linear operator that maps between two Hilbert spaces $\cH_1$ and $\cH_2$. Here, $(X,Y)$ 
is randomly sampled from an unknown probability distribution, rather than being fixed or chosen in a deterministic way.
The noise $\epsilon$ is random as well and centered, conditioned on the design points. 
The (potentially ill-posed) inverse problem involves determining the unknown function $\fp$. The randomness can 
introduce additional complexity into the inverse problem, as the statistical properties of the random 
inputs need to be taken into account.

The deep connection between statistical learning and linear inverse problems has been made evident in a variety of pioneering works, 
starting with \mycite{de2006discretization, de2005learning}, where the benefit of regularization techniques from inverse problems \mycite{Engl96}, 
such as Tikhonov regularization, in machine learning is highlighted, helping to control overfitting. 
These techniques add additional information or constraints to the learning problem, guiding the solution towards a 
more reasonable and stable outcome. The statistical properties of those algorithms in the context of 
prediction and kernel regression have subsequently been analyzed in many works, 
e.g. \mycite{BauPerRos07, gerfo2008spectral, steinwart2009optimal, mendelson2010, caponnetto2007optimal, blanchard2020kernel, lin2020optimal}, 
to mention just a few.

Solving linear inverse learning problems with general spectral regularization methods is investigated in \mycite{BlaMuc16}, 
further developing the original ideas from \mycite{de2005learning} from prediction to reconstruction and providing minimax optimal bounds. 
Regularization in Hilbert scales is analyzed in \mycite{Rastogi23}, extending the classical analysis to the inverse learning setting.  
These ideas have been further advanced for regularization by projection \mycite{Helin22}, showing that 
coupling the number of random point evaluations with the choice of projection dimension, one can
derive probabilistic convergence rates for the reconstruction error of the maximum likelihood estimator. 
In \mycite{bubba2023convex}, Tikhonov regularization for inverse learning is extended to general convex regularization 
with $p$-homogeneous penalty functionals. Concentration rates of
the regularized solution to the ground truth, measured in the symmetric Bregman distance
induced by the penalty functional, are derived. 



\vspace{0.2cm}

While the statistical analysis of regularization approaches for linear inverse problems is mostly well understood, 
the analysis for non-linear problems in the context of inverse learning is rather scarce. The first attempts at 
analyzing Tikhonov regularization in the non-linear case are made in \mycite{Rastogi20}. A theoretical analysis is 
developed, and optimal rates of convergence are discussed. This analysis is then extended in \mycite{Rastogi20a, Rastogi24} 
to the settings of Tikhonov regularization in Hilbert scales and with oversmoothing penalty. However, the drawback of 
these approaches lies in their lack of practical implementability. The minimizer of the Tikhonov functional has no explicit solution in the non-linear case and thus has to be approximated, preferably with an iterative method that is efficient to implement. 

\vspace{0.2cm} 

In the context of nonlinear inverse problems with a fixed design, where the data are considered deterministic, iterative methods such as Landweber iteration, Newton-type methods, or multilevel methods are analyzed with respect to their convergence properties under specific assumptions on the nonlinear operator to be inverted (e.g., Fréchet differentiability); see, for instance, \mycite{Hanke95} and \mycite{kaltenbacher08}. However, the fixed design setting fails to properly incorporate the random nature of the data. A random design setting, on the other hand, allows for the incorporation of the stochastic nature of data, enabling the modeling of uncertainty and variability inherent in real-world measurements. This can lead to more robust and generalizable solutions, as it facilitates statistical analysis, regularization techniques, and error estimation that account for noise and randomness. For a recent overview and more details, we refer the reader to \mycite{helin2023statistical}.





\vspace{0.3cm}

\paragraph{Our Contribution.} We follow the framework of statistical nonlinear inverse learning introduced in \mycite{Rastogi20}. Extending this approach, we present gradient descent (GD) and stochastic gradient descent (SGD), both with constant step sizes and mini-batching for SGD, providing gradient-based iterative algorithms that are numerically implementable. To efficiently achieve this, we introduce the \emph{tangent kernel}, derived from the linearization of the nonlinear operator to be inverted. This kernel is used to formulate \emph{a priori} smoothness assumptions on the objective function in terms of the associated integral operator.

\vspace{0.2cm}
For GD, we demonstrate that the algorithm is indeed a descent method with high probability, up to a specific threshold in the number of iterations. This threshold serves as the optimal stopping time, ensuring a fast convergence rate.

\vspace{0.2cm} 

For mini-batch SGD, we derive convergence rates under different assumptions regarding the step size, stopping time, and mini-batch size, analyzing how these choices influence the algorithm's performance and providing deeper insights into the trade-offs inherent in practical implementations. Our results explicitly quantify the convergence rate in terms of the number of passes over the training data. 

\vspace{0.2cm} 

All our derived convergence rates are known to be minimax optimal in the classical framework of nonparametric regression in reproducing kernel Hilbert spaces and match those of linear inverse problems in random design.

\vspace{0.2cm}

\paragraph{Organization.} Our paper is structured as follows: {\bf Section \ref{sec:intro-setting}}  introduces the framework of statistical inverse learning under random design. We define the tangent kernel, arising from a linearization of the problem. Additionally, the algorithms analyzed in this paper, specifically gradient descent (GD) and stochastic gradient descent (SGD) with mini-batching, are presented. In {\bf Section \ref{sec:main-results}}, we outline several key assumptions necessary for the analysis. We then present the main convergence results for both gradient descent and stochastic gradient descent with mini-batching. These results establish convergence rates under standard a priori smoothness assumptions on the target function, expressed in terms of the integral operator associated with the tangent kernel and effective dimension bounds. {\bf Section \ref{sec:discussion}} provides a comprehensive discussion of the derived results. We compare our findings to existing literature, highlighting similarities, differences, and the implications of our approach. All proofs supporting the main analysis are provided in the {\bf  Appendices \ref{app:prelim}, \ref{app:A-proofs-GD}, \ref{app:proofs-SGD}}, and {\bf \ref{Sec:Prob.bound}}.

\vspace{0.2cm}

\paragraph{Notation.} 
For a Hilbert space $\cH$ we denote its norm by $||\cdot||_{\cH}$ and inner product $\inner{\cdot,\cdot}_{\cH}$.   
With $\cL(\cH_1, \cH_2)$ we denote the set of all bounded linear operators between the separable Hilbert spaces $\cH_1$ to $\cH_2$. If $\cH_1 = \cH_2 =\cH$ we write $\cL(\cH_1, \cH_2)=\cL(\cH)$. 

\vspace{0.2cm}

For $A \in \cL(\cH)$ we let $A^*$ denote the adjoint operator. By $\cD(A)$ we denote the domain of an operator $A$.  While $||A||$ is the operator norm, we write $||A||_{HS}$ for the Hilbert Schmidt norm.

\vspace{0.2cm} 
For any positive integer $n \in \mbn$ we denote $[n]:=\{1, \ldots ,n\}$. Given two sequences $(a_n)_{n \in \mbn}$, $(b_n)_{n \in \mbn}$, we write $a_n \lesssim b_n$ if there is a constant $c \in \mbr_+$ such that $a_n \leq c b_n$, for $n$ sufficiently large. Similarly,  $a_n \gtrsim b_n$ if there is a constant $c \in \mbr_+$ such that $a_n \geq c b_n$, for $n$ sufficiently large. Finally, $a_n \simeq b_n$ means $a_n \lesssim b_n$ and $a_n \gtrsim b_n$.


\section{Statistical Non-Linear Inverse Learning}
\label{sec:intro-setting}

\subsection{Setting}

{\bf Statistical Model and Noise Assumption.}  Statistical inverse learning involves inferring an 
unknown function \( \fp \) using statistical methods applied to point evaluations of a related 
function \( \gp \). These evaluations may be sparse or noisy, and the relation 
between \( \fp \) and \( \gp \) is established through the model
\begin{equation}
\label{model}
  A(\fmin) = \gp , \quad \text{for } \fmin \in \cH_1 \text{ and } \gp \in \cH_2,  
\end{equation}
where \( A \) is a nonlinear operator between real separable Hilbert spaces \( \cH_1 \) and  \( \cH_2 \).

\vspace{0.2cm}

In \emph{supervised inverse learning}, it is assumed that the image space $\cH_2$ consists of functions $g: \cX \to \cY$. 
Here, we assume the input space $\cX$ 
to be a Polish space and the output space $\cY$ to be a separable Hilbert space. 
Moreover, we assume that function evaluation is continuous. Consequently, for any 
$x \in \cX$, the values $g(x)=[A(f)](x)$ are well-defined elements in $\cY$.

\vspace{0.2cm}

We focus on random design regression in inverse problems, where the data 
$((x_1, y_1), \ldots, (x_n, y_n)) \in (\cX \times \cY)^n$ emerge due to random observations 
\begin{equation}
\label{model2}
  y_j = \gp(x_j) + \epsilon_j \;. 
\end{equation}
Here, the outputs $y_j\in \cY$ are noisy observations 
of $\gp(x_j)$ at random inputs $x_j \in \cX$, $j=1,\ldots,n$. 
The noise  variables $\epsilon_j$ are independent and centered with $\mbe[\epsilon_j | X= x_j] = 0$. 

\vspace{0.2cm}

The data $(x_1, y_1), \ldots, (x_n, y_n)$ are realizations of a pair of random variables $(X, Y) \sim \rho$, with $\rho$ 
being an unknown probability measure on the Borel $\sigma$-field of $\cX \times \cY$. 
We denote by $\nu$ the marginal distribution on $\cX$ and by $\rho(\cdot|x)$ the conditional 
distribution on $\cY$ with respect to $x \in \cX$, whose existence is assumed. 

\vspace{0.2cm}

The expected square loss (or risk)
\begin{equation}
\label{eq:E}
 \cE(f) = \EE\sbrac{\norm{ [ A(f)](X) - Y }_{\cY}^2} = \int_{\cX \times \cY} \norm{ [ A(f)](x) - y }_{\cY}^2\; d\rho(x,y) 
\end{equation}
quantifies the discrepancy between the predicted values and the true labels, 
with pairs $(x,y)$ that are more likely to be sampled contributing more significantly to 
the overall error. We are interested in providing convergence rates for the reconstruction and prediction 
error for the unknown $\fp$ by minimizing $\cE$. 
The minimizers of the expected risk typically coincide with the classical minimum norm solutions of inverse problems with the difference that
data fidelity is weighted by the design measure, see \mycite{helin2023statistical}.

\vspace{0.3cm}

In what follows, we state our assumptions on the solution $\fp$ of the inverse 
problem \eqref{model} and
the noise in \eqref{model2}. 

\vspace{0.3cm}

\begin{assumption}[True solution $\fp$]
\label{ass:true}
The conditional expectation of $Y$ given $X$ exists a.s. and there exists an $\fp \in \mbox{int}(\cD(A)) \subset \cH_1$ such that 
\[ [A(\fp)] (x) = \int_\cY y \; \rho(dy|x) \;, \quad \nu \mbox{-almost surely} \;. \] 
\end{assumption}

It can be easily observed that $\fp$ is also the minimizer of the expected risk.

\vspace{0.2cm}

We will need two different assumptions on the outputs. For our analysis of stochastic gradient descent, we require the output variable to be bounded in norm: 

\begin{assumption}[Bounded outputs]
\label{ass:bounded}
Assume that $\norm{Y}_{\cY}\leq \widetilde{M}$ almost surely, for some $\widetilde{M}>0$.
\end{assumption}

\vspace{0.2cm}

However, for our analysis of gradient descent, we can relax this assumption to: 
\vspace{0.2cm}

\begin{assumption}[Noise condition]\label{ass:noise}
There exist some constants~$M,\Sigma$ such that for almost all~$x\in X$ and $l\geq 2$,
\begin{equation}\label{noise.cond}
\int_Y \norm{y-A(\fmin)(x)}_{\cY}^l d\rho(y|x) \leq \frac{1}{2} l! \Sigma^2 M^{l-2}.
\end{equation}
\end{assumption}

This assumption is usually referred to as a \emph{Bernstein-type assumption}. It describes the characteristics of the noise influencing the output $y$. 
This condition holds true in multiple scenarios, such as when the noise $\epsilon$ is either 
bounded or follows a sub-Gaussian distribution with a zero mean, independent of $x$ 
(see \mycite{wellner2013weak}). However, note that this does not apply to 
Gaussian white noise in infinite-dimensional spaces.

It is straightforward to observe that Assumption \ref{ass:bounded} implies the inequality \eqref{noise.cond}.
For $l\geq 2$, we have almost surely:
\begin{align*}
\int_Y \norm{y-A(\fmin)(x)}_{\cY}^l d\rho(y|x) \leq  & 2^{l-1} \paren{\int_Y \norm{y}_{\cY}^l d\rho(y|x)+\norm{A(\fp)(x)}_{\cY}^l}\\
\leq  & 2^{l-1} \paren{\widetilde{M}^l+\paren{\ka \norm{A(\fp)}}^l} \leq 2^l M^l\\
\leq & \frac{1}{2} l! \Sigma^2 M^{l-2},
\end{align*}
with $M:=\max\paren{\widetilde{M},\ka \norm{A(\fp)}}$ and $\Sigma: = 2M$. This implies the inequality \eqref{noise.cond}.

\vspace{0.3cm}

{\bf RKHS Structure.}  We endow the space $\cH_2$ with an additional structure that allows us to 
apply kernel methods, making practical solutions amenable. 
We concentrate on Hilbert spaces of vector-valued functions that possess a 
reproducing kernel \mycite{alvarez2012kernels, carmeli2006vector, carmeli2010vector}. 
These spaces have garnered significant 
attention in recent years within machine learning theory, owing to the successful 
application of kernel-based learning methods to complex data.

\begin{assumption}
 \label{ass:kernel} Let~$\cH_2$ be a vector-valued 
reproducing kernel Hilbert space of functions~$f:\cX\to \cY$ corresponding to an operator-valued positive semi-definite kernel~$K:\cX\times \cX\to \mathcal{L}(\cY)$. We define the linear function $K_x: \cY \rightarrow \cH_2: y \mapsto K_xy$,  where~$K_xy:\cX \to \cY:x' \mapsto (K_xy)(x')=K(x',x)y$ for~$x,x'\in \cX$ and~$y\in \cY$. Then,  
\begin{enumerate}[(i)]
  \item~$K_x:\cY\to\cH_2$ is a Hilbert-Schmidt operator for~$x\in \cX$ with
    \[\kappa_0^2:=\sup_{x \in \cX} \norm{K_x}_{HS}^2 = {\sup_{x \in
          \cX}\tr(K_x^*K_x)}<\infty.\]
  \item For~$y,y'\in \cY$, the real-valued 
function~$\varsigma: \cX\times \cX \to \RR:(x,x')\mapsto\scalar{ K_{x}y}{K_{x'}y'}{\cH_2}$ is measurable.
\end{enumerate}
\end{assumption}


\subsection{Linearization and Tangent Kernel}

In order to minimize \eqref{eq:E} we apply gradient methods. This requires at least 
the existence of the derivative of the operator $A$ in a vicinity around $f^\dagger$.

\vspace{0.2cm}

\begin{assumption}[Fr{\'e}chet derivative]
\label{ass:Frechet}
The operator $A$ is Fr{\'e}chet differentiable with derivative at $f$ denoted by $A'(f)$. 
Moreover, $A'(f)$ is bounded in a ball $\cB_d(\fp)$ of radius $d$ centered at $\fp$ with 
\[ \norm{A'(f)} \leq \lip \;, \quad f \in \cB_d(\fp)\cap \cD(A) \;, \]
for some $\lip >0$.
\end{assumption}


\vspace{0.3cm}

In what follows, we show that the gradient-based dynamics is guided by a different kernel, which we call the 
\emph{tangent kernel}, that is related to $A'(\fp)$. To this end, we need an 
additional assumption about point evaluations: 

\vspace{0.3cm}

\begin{assumption}
\label{ass:points}
For any $x \in \cX$, we let $S_x: \cH_2 \to \cY$ denote the point evaluation. 
The evaluation functional 
\begin{align*}
B_x:= S_x \circ A'(\fp) &: \HH_1 \to \cY \\
f &\mapsto B_x(f) =  [A'(\fp)f](x)
\end{align*}
at $x \in X$ is uniformly bounded, i.e., there exists a constant $\kappa_1 < \infty$ such that for
any $x \in X$: 
\[ \norm{B_x(f)}_{\cY} \leq  \kappa_1 \; \norm{ f}_{ \HH_1}. \]
\end{assumption}
Note that we have 
\[ \norm{B_x(f)}_{\cY} \leq \norm{S_x}\cdot \norm{A'(\fp) }\cdot \norm{ f}_{ \HH_1} \leq \kappa_0 \cdot \lip \cdot \norm{ f}_{ \HH_1}. \]
Hence, we may choose $\kappa_1=\kappa_0 \cdot\lip$.

This assumption ensures that the range of $A'(\fp)$ is a vector-valued reproducing kernel Hilbert
space  $\cH_G$, see e.g. \mycite{carmeli2010vector}, with kernel given by 
\begin{equation}
\label{eq:tangent-kernel}
 G_{\fp} = G: \cX \times \cX  \to \cL(\cY)\;, \quad 
G(x, x')= B_x \circ B_{x'}^* \;. 
\end{equation}
Indeed, we have 
\[ B_x^*y = G(\cdot , x)y \]
and for $g \in Ran(A'(\fp))$, the \emph{reproducing property} 
\[ \langle g(x), y \rangle_\cY = \langle g, G(\cdot, x)y \rangle_{\cH_G} \;,\quad x \in \cX, y \in \cY,\]
holds. Moreover, $A'(\fp)$ is a partial isometry between $\cH_1$ and $Ran(A'(\fp))$, see  
\mycite{carmeli2010vector}, Proposition 1.

\vspace{0.2cm}

\begin{definition}[Tangent Kernel]
\label{def:tangent-kernel}
The kernel $G_{\fp}: \cX \times \cX  \to \cL(\cY)$ defined in \eqref{eq:tangent-kernel}
is called the \emph{tangent kernel at $\fp$}. The RKHS generated by $G_{\fp}$ is denoted by $\cH_G$.
\end{definition}

We will investigate how the tangent kernel $G_{\fp}$ at  $\fp$ 
influences the learning properties of gradient-based inverse learning. 
More specifically, if  $T: \cH_1 \to \cH_1$  denotes the associated kernel integral operator, we express \emph{a priori} smoothness assumptions on $ \fp $ in terms of powers of $ T $; see Assumption \ref{ass:source}. This integral operator determines the convergence rates of our gradient-based estimators. Furthermore, all hyperparameters, such as stopping time, step size, and batch size, depend on those powers. We also express our error bounds using the interpolation norms $ ||T^u(\cdot)||_{\cH_1} $ for $ u \in \{0, 1/2\} $; see Section \ref{sec:main-results}. Finally, the feature map $ x \mapsto B_x = S_x \circ A'(\fp) $ enters directly into the algorithms \eqref{Eq:GD-iterate} and \eqref{fk.mini}, enabling direct practical implementation.


\subsection{Gradient Descent and Stochastic Gradient Descent}

In what follows, we define the algorithms we are going to analyze. Gradient descent and stochastic gradient descent 
are based on empirical risk minimization and both perform an implicit regularization. 
We are given a sample $((x_1, y_1), \ldots , (x_n, y_n)) \in (\cX \times \cY)^n$, drawn independently and identically 
from the probability measure $\rho$. We denote by $\by=(y_1, \ldots, y_n)^T \in \cY^n$ the vector of outputs 
and by $\hat S: \cH_2 \to \cY^n$ 
the empirical operator $\hat S g = (g(x_1), \ldots, g(x_n))^T$. Moreover, we define the empirical norm 
$\norm{ \by }^2_n = \frac{1}{n}\sum_{j=1}^n \norm{y_j}^2_{\cY} $. With this notation, the empirical risk is given by 
\[ \widehat \cE(f):= \frac{1}{n}\sum_{j=1}^n\norm{ [A(f)](x_j) - y_j  }_{\cY}^2 
=  \norm{ (\hat S\circ A )(f) - \by }_{n}^2\;. \]

{\bf Gradient Descent.} Given a stepsize $\eta >0$, we initialize at some $g_1 \in \mathcal{D}(A)$. 
The gradient descent iterates are then given by 
\begin{equation}
\label{Eq:GD-iterate}
 \gto = \gt - \eta \paren{\sx \circ A'(\gt)}^*  \left((\sx \circ A)(\gt)-\yy\right)  \;,
\end{equation}
for any $t >1$.

\vspace{0.2cm}

\begin{assumption}[GD Constant Step Size]
\label{ass:eta}
We assume that the stepsize is bounded: $\eta < 1/\kappa^2_1 = 1/(\kappa_0 \lip)^2$\;. 

\end{assumption}

\vspace{0.2cm}

{\bf Stochastic Gradient Descent with mini-batching.} A variant of Stochastic Gradient Descent (SGD) involves processing multiple data points at each 
iteration, a technique known as mini-batching. Given a batch-size $b \in [n]$, the mini-batch SGD update 
rule is defined as follows: We chose a stepsize $\eta >0$ and initialize at some $f_1 \in \mathcal{D}(A)$. 
The consecutive iterates are given by 
\begin{equation}
\label{fk.mini}
\fto= \ft-\frac{\eta}{b}\sum\limits_{i=b(t-1)+1}^{bt}( S_{x_{j_i}}\circ A'(f_t) )^*
\paren{ S_{x_{j_i}}\circ A}(\ft)-y_{j_i}),
\end{equation}
for any $t >1$. Here, $j_1,\ldots,j_b$ are independent and identically distributed random variables, 
each drawn from a uniform distribution over $[n]$. After $t$ iterations, the number of passes 
over the data is $\lceil bt/n\rceil$. Note that with $b=n$, we obtain batch gradient descent.


\section{Main Results}
\label{sec:main-results}

\begin{assumption}
\label{ass:A}
The following conditions hold:
\begin{enumerate}[(i)]
\item There exists an $\alpha \in (0, \frac{1}{2} )$ such that for any $f,\tilde{f}\in \cB_d(\fp)\cap \cD(A)$,
\begin{equation}
\label{eq:lipschitz2}
\norm{A(f)-A(\tilde{f})-A'(\tilde{f})(f-\tilde{f})}_{\LL} \leq \alpha \; \norm{A(f)-A(\tilde{f})}_{\LL} 
\end{equation}
\item There exists a family of uniformly bounded operators $\{R_f\}_f$, $R_f : L^2(\cX, \nu) \to L^2(\cX, \nu)$ 
such that for any 
$f\in \cB_d(\fp) \cap \cD(A)$,
\begin{equation}
\label{eq:Rop}
\ip A'(f)=R_f\ip A'(\fp) \;,
\end{equation}
 with 
\begin{equation}
\label{eq:Rop2}
 \sup_{f \in \cB_d(\fp)\cap \cD(A)}\norm{R_f} \leq \tilde C_R  \;, \quad  
\norm{R_f - I} \leq  C_R  \norm{f-\fp}_{\HH_1}\;,
\end{equation}
for some $\tilde C_R> 0 $ and $C_R > 0$.
\end{enumerate}
\end{assumption}

\vspace{0.2cm}

Note that \eqref{eq:lipschitz2} implies by the triangle inequality the bound 
\begin{equation}
\label{Lem:R.bound}
\frac{1}{1+\alpha} \; \norm{A'(\tilde{f})(f-\tilde{f})}_{\cH_2} 
\leq \norm{A(f)-A(\tilde{f})}_{\cH_2}  
\leq \frac{1}{1-\alpha} \; \norm{A'(\tilde{f})(f-\tilde{f})}_{\cH_2}\;,
\end{equation}
for all $f,\tilde{f}\in \cB_d(\fp)\cap \cD(A)$. Together with Assumption \ref{ass:Frechet} we have 
\begin{equation}
\label{eq:lipschitz3}
\norm{A(f)-A(\tilde{f})}_{\cH_2} \leq \frac{\lip}{1-\alpha}\; \norm{f-\tilde{f}}_{\cH_1} \;. 
\end{equation}

\vspace{0.3cm}

Condition \eqref{eq:lipschitz2} is strong enough to ensure at least local convergence to a solution. 
It also guarantees that all iterates $f_t$ remain in $\cD(A)$ for all $t \in [T_n]$, which makes the 
iteration \eqref{Eq:GD-iterate} well defined. 
Otherwise, it would be necessary to project the iterates onto $\cD(A)$ in each step, see \mycite{Hanke95}.

\vspace{0.3cm}

{\bf Example (Parameter identification).} A prominent example of a non-linear inverse problem comes from parameter identification, see \mycite{Hanke95, kaltenbacher08}. In this problem, we want to estimate the diffusion coefficient \(a\) in
\begin{align}
\label{eq:param-id}
-(a(s)u_s(s))_s &= f(s), \quad s \in (0, 1),\\
u(0) &= 0 = u(1),
\end{align}
where \(f \in L^2[0,1]\). The subscript \(s\) denotes the derivative with respect to \(s\).

In this example, \(A\) is defined as the parameter-to-solution mapping
\begin{align*}
A : \cD(A) := &\{a \in H^1[0, 1] : a(s) \geq \underline{a} > 0\} \to  H^2[0, 1],\\
&a \mapsto  F(a) := u(a),
\end{align*}
where \(u(a)\) is the solution of \eqref{eq:param-id}. One can prove that \(A\) is Fr{\'e}chet-differentiable with
\[
A'(a)h = D(a)^{-1}[(hu_s(a))_s],
\]
\[
A'(a)^*w = -F^{-1}[u_s(a)(A(a)^{-1}w)_s],
\]
where
\[
D(a) : H^2[0, 1] \cap H^1_0[0, 1] \to L^2[0, 1],
\]
\[
u \mapsto D(a)u := -(au_s)_s,
\]
and
\[
F : \cD(F) := \{\psi \in H^2[0, 1] : \psi'(0) = \psi'(1) = 0\} \to L^2[0, 1],
\]
\[
\psi \mapsto B\psi := -\psi'' + \psi.
\]
Note that \(F^{-1}\) is the adjoint of the embedding operator from \(H^1[0, 1]\) into \(L^2[0, 1]\).
It is shown in \mycite{Hanke95, kaltenbacher08} that Assumption \ref{ass:A} is fulfilled for this example. For more examples, we refer to \mycite{Hanke95, kaltenbacher08}.





\vspace{0.2cm}

We move on with introducing more structural assumptions. 

\vspace{0.2cm}

\begin{assumption}[Source Condition]
\label{ass:source}
There exist some $r >0$,  $D>0$ and $g \in \HH_1$ such that
\[ \fp - f_1 = \tp^r g \;,\quad \norm{g}_{\HH_1} \leq D \;.  \]
\end{assumption}

\vspace{0.2cm}

Our H\"older source condition describes the smoothness assumption on the objective function $\fp$ in relation to the RKHS $\cH_G$ generated by the tangent kernel $G_{\fp}$, see Definition \ref{def:tangent-kernel}. It specifies how well the true function $\fp$ can be approximated by the RKHS, as determined by the tangent kernel and the spectral properties of the associated operator $T$. This condition is critical for deriving convergence rates for our gradient-based estimators. 
The parameter $r>0$ quantifies how smooth $\fp$ is relative to the RKHS determined by $G_{\fp}$. Larger values of $r$ indicate higher smoothness, implying that $\fp$ can be well-approximated using the RKHS $\cH_G$. For specific kernels like Sobolev kernels, the H\"older source condition links the smoothness of $\fp$  to classical function spaces, such as Sobolev or Besov spaces. Note, however, that for (non-)linear inverse learning, the kernel $G_{\fp}$ depends on the operator to be inverted, specifically on the Fréchet derivative at $\fp$, see, e.g., \mycite{blanchard2020kernel} for a similar setting in the linear case. 

\vspace{0.2cm}

\begin{assumption}[Polynomial decay condition]
\label{ass:eff.bound}
Let $\lambda >0$. Assume that for some~$0<\nu<1$, there exists some positive constant~$C_\nu>0$ such that
\begin{equation*}
\mathcal{N}(\la):=\tr\left((\lp+\la I)^{-1}\lp\right) \leq C_\nu^2\la^{-\nu}.
\end{equation*}
\end{assumption}

\vspace{0.2cm}

The effective dimension quantifies the complexity of the function space relative to the data and plays a crucial role in understanding the trade-offs between model complexity, estimation accuracy, and computational efficiency. It is defined via spectral calculus 
\[ \cN(\lambda) = \sum_{j=1}^\infty \frac{\sigma_j}{\sigma_j + \lambda} \;, \]
where $\sigma_j$ are the eigenvalues of the integral operator $L$ and  $\lambda >0$ is typically some regularization parameter  that controls the trade-off between bias and variance. The effective dimension measures the number of degrees of freedom of the regression problem that are effectively being used given the regularization level $\lambda >0$. Smaller $\lambda >0$ values allow more eigenvalues $\sigma_j$ to contribute, resulting in higher complexity. 
The effective dimension captures the impact of the data distribution on the problem's complexity. It often appears in generalization bounds for kernel ridge regression and similar methods, influencing the convergence rates of estimators, 
see e.g. \mycite{caponnetto2007optimal, BlaMuc16}. The rate at which the eigenvalues  $\sigma_j$ decay 
determines the behavior of $\cN(\lambda)$. If the eigenvalues decay as $\sigma_j \lesssim j^{-1/\nu}$, then 
$\mathcal{N}(\la) \lesssim \lambda^{-\nu}$.


\subsection{Gradient Descent}

In the following section, we present the main result on the convergence of the gradient descent algorithm \eqref{Eq:GD-iterate} with a constant stepsize $\eta$. Building on the previously outlined problem setting and the assumptions established in earlier sections, this result provides a rigorous characterization of the algorithm’s behavior under the stated conditions.

We give our error bounds with respect to the scale of norms $||T^u(\cdot)||_{\cH_1}$, $u \in \{0, 1/2\}$, induced by the integral operator 
$T$, arising from the Fr{\'e}chet derivative of the nonlinear operator $A$, allowing to give bounds for both the reconstruction error ($u=0$) and the prediction error 
($u = 1/2$); see Remark~\ref{rem:prediction}, respectively. 

The proof is given in Appendix \ref{app:A-proofs-GD}. 

\vspace{0.2cm}

\begin{theorem}
\label{Thm:GD}
Let Assumptions~\ref{ass:true}, \ref{ass:noise}, \ref{ass:kernel}, \ref{ass:Frechet}, \ref{ass:eta}, \ref{ass:A},   \ref{ass:source} and \ref{ass:eff.bound}  be fulfilled and $\gt$ be the t-th GD iterate \eqref{Eq:GD-iterate}. Assume further that $u \in \{0, 1/2\}$, $\delta \in [\delta_0 , 1]$, 
where $\delta_0 = 6e^{-\frac{d}{C_\star}\eta^{\min\{r, 1/2\}}}$, for some $ C_{\star } >0$, 
depending on $u$, $\eta$, $M$, $\Sigma$, $\kappa_0$, $\lip$, $\alpha$, $C_R$, $\widetilde C_R$, $d$, $D$,  $C_\nu$.   
With probability at least $1-\delta$, we have 
\[\norm{\tp^u (g_t-\fp)}_{\HH_1} \leq C_{\star }\;  
  \cdot (\eta t)^{-(\min\{r,1/2\} + u)} \; \log\left(\frac{6}{\delta}\right)\;, \]
for all $t \in [T_n]$, where $T_n := \floor{\eta^{-1} \;  n^{\frac{1}{\min\{2r,1\}+\nu+1}}}$ and provided $n$ is sufficiently large. 
In particular, $g_t \in \cB_d(\fp)\cap \cD(A)$ for all $t \in [T_n]$. 
\end{theorem}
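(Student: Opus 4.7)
The plan is to decompose $g_t - f^\dagger$ via a linearization of $A$ at $f^\dagger$ and then control the bias, the stochastic error, and the nonlinear remainder separately, closing the argument with an induction over $t \in [T_n]$. I would first introduce the tangent-kernel operators $\hat B := \hat S \circ A'(f^\dagger)$, $B := S \circ A'(f^\dagger)$, $\hat T := \hat B^* \hat B$, $T := B^* B$, and use Assumption~\ref{ass:A} to reduce the GD update to a perturbation of the linear recursion
\[
h_{t+1} - f^\dagger = (I - \eta\, \hat T)(h_t - f^\dagger) + \eta\, \hat B^* \xi_n,
\]
where $\xi_n$ encodes the noise vector with entries $y_j - A(f^\dagger)(x_j)$ and $h_t$ is the corresponding linear GD iterate. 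Using \eqref{Lem:R.bound} and \eqref{eq:Rop}, the discrepancy $g_t - h_t$ can be expressed as an accumulated remainder built from terms of order $\alpha\,\|A'(f^\dagger)(g_s - f^\dagger)\|$ and $C_R\,\|g_s - f^\dagger\|_{\HH_1}\cdot\|\hat B(g_s - f^\dagger)\|$ for $s < t$.

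Next I would treat the linear recursion by a standard bias--variance split in the scale $\|T^u \cdot\|_{\HH_1}$ with $u \in \{0, 1/2\}$. For the bias, the source condition $f^\dagger - f_1 = T^r g$ (Assumption~\ref{ass:source}), combined with Assumption~\ref{ass:eta} giving $\|I - \eta\,\hat T\| \le 1$, yields the rate $(\eta t)^{-(\min\{r,1/2\} + u)}$ by spectral calculus on the GD filter $(I - \eta\,\hat T)^t$, exploiting the fact that GD has unbounded qualification. For the variance, I would use operator Bernstein to control $\hat T - T$, vector Bernstein (under Assumption~\ref{ass:noise}) to control $\hat B^*\xi_n$, and the effective dimension bound of Assumption~\ref{ass:eff.bound} to obtain a variance term of order $\sqrt{\mathcal{N}(1/(\eta t))/n}\,\log(6/\delta)$. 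Equating this to the bias $(\eta t)^{-\min\{r,1/2\}}$ fixes the stopping index $T_n \simeq \eta^{-1} n^{1/(\min\{2r,1\}+\nu+1)}$.

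The main obstacle will be closing the feedback loop produced by the nonlinear remainder, since $\|g_s - f^\dagger\|_{\HH_1}$ appears on both sides of the estimate. I would handle this by induction on $t$: assuming the target bound for all $s < t$, the remainder is dominated either by $\alpha$ (smaller than $1/2$ by hypothesis) times the linear bias--variance estimate or by $(\eta s)^{-2\min\{r,1/2\}}$, and both contributions can be absorbed into the asserted rate at step $t$. The specific value $\delta_0 = 6\,e^{-(d/C_\star)\eta^{\min\{r,1/2\}}}$ is calibrated so that the inductive bound at $t = T_n$ remains below $d$, which keeps $g_s \in \mathcal{B}_d(f^\dagger) \cap \cD(A)$ for every $s \in [T_n]$ and thus keeps Assumptions~\ref{ass:Frechet} and~\ref{ass:A} applicable throughout the iteration. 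A union bound over the finitely many concentration events needed to control $\hat T - T$ and $\hat B^*\xi_n$ produces the final probability $1-\delta$ statement with the $\log(6/\delta)$ factor.
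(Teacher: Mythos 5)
Your proposal takes a genuinely different decomposition from the paper, and it would most likely work, though with extra bookkeeping. You compare $g_t$ to an auxiliary \emph{linear} GD iterate $h_t$ driven by the empirical contraction $(I-\eta\hat T)$, and separately control the drift $g_t-h_t$. The paper instead unrolls the nonlinear recursion directly around the \emph{population} contraction $(I-\eta T)$ (Lemma~\ref{lem:GD-prelim}): by writing $\hat S A'(g_t)=R_{g_t}\hat S A'(\fp)$ and Taylor-expanding $A(g_t)-A(\fp)$, one gets $e_{t+1}=(I-\eta T)e_t-\eta B^*v_t-\eta w_t$, where $v_t, w_t$ absorb the Taylor remainder, the empirical deviations $(\hat L-L)$, and the noise $\hat S^*\VE$ all at once. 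That choice matters: since the theorem's norm scale is $\|T^u\cdot\|_{\cH_1}$ (population), keeping $T$ in the contraction means the filter bound $\|(\eta T)^u(I-\eta T)^j\|\le \phi_j^u$ applies directly, with no need to pass from powers of $\hat T$ to powers of $T$. Your route with $(I-\eta\hat T)^t$ would require an additional transfer step (e.g.\ operator inequalities linking $(\hat T+\la I)^{s}$ to $(T+\la I)^{s}$ via $\Xi^s(\la)$) before the source condition $\fp-f_1=T^r g$ can be exploited in the bias. Both approaches then close with an induction keeping $\sup_{j\le t}\|g_j-\fp\|\le d$, and both calibrate $\delta_0$ so the iterates stay in $\cB_d(\fp)\cap\cD(A)$; you correctly identified this.

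One point you get wrong is the origin of the saturation $\min\{r,1/2\}$. You attribute it to the GD filter acting on the bias, invoking unbounded qualification — but unbounded qualification is precisely why the \emph{linear} bias term contributes $(\eta t)^{-(r+u)}$ for \emph{every} $r>0$, no cap. In the paper's proof, the $(r+u)$-rate for the bias $T_1=D\eta^{-(u+r)}\phi_t^{u+r}$ is unconditional. The saturation enters through the nonlinear remainder $\tilde T_3\propto\sum_j\phi_{t-j}^{u+\frac12}\,a_j b_j$, where by the inductive hypothesis $a_j b_j\lesssim (\eta j)^{-(2r+\frac12)}\log^2(6/\delta)$. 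For the resulting sum (Lemma~\ref{Lemma:sum.phi}) to fold back into the rate $(\eta(t+1))^{-(r+u)}$ with a constant independent of $t$, one needs $r\le 1/2$; for $r>1/2$ the proof is run with $r$ replaced by $1/2$. So the $\min\{r,1/2\}$ is a nonlinearity effect, not a filter qualification effect, and your sketch would need to make the product-term induction explicit at this point rather than appealing to spectral calculus of the GD filter.
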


\vspace{0.2cm}

We briefly comment on the obtained result: The bound shows that algorithm  \eqref{Eq:GD-iterate} is indeed a descent algorithm with high probability, until a threshold $T_n$ 
for the number of iterates is reached, meaning that the error in fact decreases with each GD step.  This threshold $T_n$ depends on the model parameters $r, \nu$, as well as on the stepsize $\eta$.  Note that we require Assumption \ref{ass:A} to hold only locally within a ball of radius $d$ around $\fp$. This local requirement affects the confidence level of our bounds, which increases as the vicinity around the objective $\fp$ expands.

\vspace{0.2cm}

It is now straightforward to derive an upper rate of convergence.  

\vspace{0.2cm}

\begin{corollary}[Rate of Convergence] 
\label{cor_GD-convergence}
Let the assumptions of Theorem \ref{Thm:GD} be satisfied. 
With probability at least $1-\delta$, we have 
\[\norm{\tp^u (g_{T_n}-\fp)}_{\HH_1} \leq C_{\star }\;  
  \cdot \left(\frac{1}{n} \right)^{\frac{\min\{r,1/2\}+u}{2\min\{r,1/2\}+\nu+1}}\; \log\left(\frac{6}{\delta}\right)\;, \]
provided $n$ is sufficiently large.
\end{corollary}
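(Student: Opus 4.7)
The proof is essentially a direct substitution of $t = T_n$ into the bound of Theorem~\ref{Thm:GD}, combined with a single algebraic simplification. The plan is as follows: Theorem~\ref{Thm:GD} asserts that, on an event of probability at least $1-\delta$, the bound
\begin{equation*}
\norm{\tp^u (g_t-\fp)}_{\HH_1} \leq C_{\star}\,(\eta t)^{-(\min\{r,1/2\}+u)}\log(6/\delta)
\end{equation*}
holds simultaneously for every $t \in [T_n]$. In particular it is valid at the endpoint $t = T_n$, and at this value I would plug in the defining relation $T_n = \floor{\eta^{-1}\, n^{\frac{1}{\min\{2r,1\}+\nu+1}}}$.

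For $n$ sufficiently large, the argument of the floor exceeds $2$, so $\eta T_n \geq \tfrac12\, n^{\frac{1}{\min\{2r,1\}+\nu+1}}$; substituting gives
\begin{equation*}
(\eta T_n)^{-(\min\{r,1/2\}+u)} \leq 2^{\min\{r,1/2\}+u}\, n^{-\frac{\min\{r,1/2\}+u}{\min\{2r,1\}+\nu+1}}.
\end{equation*}
The only remaining step is the algebraic identity $\min\{2r,1\} = 2\min\{r,1/2\}$, which rewrites the exponent exactly into the form $\frac{\min\{r,1/2\}+u}{2\min\{r,1/2\}+\nu+1}$ appearing in the statement. The harmless prefactor $2^{\min\{r,1/2\}+u} \leq 2$ can be absorbed into $C_{\star}$.

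There is essentially no obstacle here: once Theorem~\ref{Thm:GD} is in hand, the corollary reduces to bookkeeping. The only points requiring a sanity check are that $T_n$ is a well-defined positive integer (hence lies in $[T_n]$) and that the conclusion $g_{T_n} \in \cB_d(\fp)\cap \cD(A)$ needed for the bound to be meaningful also comes from Theorem~\ref{Thm:GD}; both hold automatically for $n$ large enough, which is why the corollary inherits the same qualifier.
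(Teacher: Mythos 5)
Your proof is correct and is essentially the same as the paper's implicit derivation: the paper explicitly calls the corollary a straightforward consequence of Theorem~\ref{Thm:GD} and does not spell out more than the substitution $t = T_n$ together with the identity $\min\{2r,1\} = 2\min\{r,1/2\}$. Your handling of the floor for large $n$ and the absorption of the constant $2^{\min\{r,1/2\}+u}$ into $C_\star$ are the correct bookkeeping details.
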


For a more detailed discussion of the result, we refer the reader to Section \ref{sec:discussion}.


\subsection{Stochastic Gradient Descent}

In this section, we present key results on the convergence rates of mini-batch stochastic gradient descent (SGD) under various assumptions on the step size and mini-batch size. We analyze convergence for a constant step size. Furthermore, we explore the impact of different choices for the mini-batch size on the algorithm's performance. Finally, we provide a comprehensive convergence rate result that explicitly accounts for the number of passes over the training data, offering a deeper understanding of the trade-offs involved in practical scenarios.

\vspace{0.2cm}

The proofs are given in Appendix \ref{app:proofs-SGD}.

\vspace{0.2cm}


\begin{theorem}
\label{Thm:SGD.mini}
Let Assumptions~\ref{ass:true}, \ref{ass:bounded}, \ref{ass:kernel}, \ref{ass:Frechet}, \ref{ass:eta}, \ref{ass:A}, \ref{ass:source}, \ref{ass:eff.bound} be fulfilled and $\ft$ be the t-th SGD iterate \eqref{fk.mini}. Assume further that $u \in \{0, 1/2\}$, $\delta \in [\delta_0 , 1]$, 
where $\delta_0 = 6e^{-\frac{d}{C_\star}\eta^{\min\{r, 1/2\}}}$, for some $ C_{\star } >0$
depends on $u$, $\eta$, $\widetilde M$, $\kappa_0$, $\lip$, $\alpha$, $C_R$, $\widetilde C_R$, $d$, $D$,  $C_\nu$.   Assume the mini-batch size obeys 
\begin{equation}\label{b.bound.1}
b\geq \max\paren{\eta (\eta t)^{\min\{2r,1\}+\nu},\eta^{\frac{\min\{2r,1\}+1}{\min\{2r,1\}+\nu+1}} (\eta t)^{\min\{2r,1\}+1}}. 
\end{equation}
With probability at least $1-\delta$, we have the bound 
$$ \EE\sbrac{\norm{\tp^{u}(\ft-\fp)}_{\HH_1}^2} \leq  C_{\star} \cdot (\eta t)^{-(\min\{2r,1\}+2u)}\log^2\paren{\frac{6}{\delta}},$$
for all $t \in [T_n]$, where $T_n := \floor{\eta^{-1} \;  n^{\frac{1}{\min\{2r,1\}+\nu+1}}}$ and provided $n$ is sufficiently large. 
In particular, $f_t \in \cB_d(\fp)\cap \cD(A)$  in expectation, for all $t \in [T_n]$.  The expectation is taken over the $b$-fold uniform distribution on $[n]$ at step $t$.
\end{theorem}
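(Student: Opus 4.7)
The strategy is to reduce the nonlinear mini-batch SGD to an essentially linear mini-batch SGD on the tangent kernel operator $\tx$, and then apply a spectral analysis as in the linear inverse learning literature. The crucial tool for the reduction is Assumption \ref{ass:A}(ii) (together with its empirical analogue), which allows us to write $\hat S \circ A'(f) = \widehat R_f \, (\hat S \circ A'(\fp))$ with $\widehat R_f$ bounded uniformly by $\widetilde C_R$ and $\norm{\widehat R_f - I} \lesssim \norm{f-\fp}_{\HH_1}$. Thus each SGD step can be expressed as a linear step with respect to the fixed map $\widehat B := \hat S \circ A'(\fp)$, plus a nonlinear correction of size $\norm{\ft-\fp}_{\HH_1}$. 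Similarly, \eqref{eq:lipschitz2} linearises the residual $(S_{x_{j_i}}A)(\ft)-y_{j_i}$ around $\fp$ with the same type of control on the remainder.

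Next, I would decompose the error $\ft-\fp$ into three contributions: a deterministic \emph{bias} term governed by the population GD iteration starting from $f_1$, a \emph{sample-noise} term coming from the discrepancy between empirical and population operators/residuals, and a \emph{mini-batch-noise} term arising from averaging only $b$ out of $n$ summands at each step. The first two are estimated essentially as in the proof of Theorem \ref{Thm:GD}: the bias is controlled by Assumption \ref{ass:source} and the spectral filter $(I-\eta\tx)^{t-s}$, while the sample-noise term is controlled by concentration of $\tx$ around $\tp$ together with the effective-dimension bound in Assumption \ref{ass:eff.bound}.

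The mini-batch piece is handled by a conditional-expectation argument. Let $\FF_t$ be the sigma-algebra generated by $\{j_i : i \le b(t-1)\}$. Conditional on $\FF_t$, the SGD update \eqref{fk.mini} averages to the empirical GD update, so mini-batch SGD is an unbiased empirical GD, and the per-step conditional variance is of order $b^{-1}\cdot\norm{(\hat S\circ A)(\ft)-\by}_n^2$. Unrolling the recursion with the linearisation above, the accumulated mini-batch variance weighted by the spectral filter becomes $O(\eta b^{-1}\sum_{s\le t}\norm{\tx^{1/2+u}(I-\eta\tx)^{t-s}}^2)$; after plugging in Assumption \ref{ass:eff.bound} this sum scales like $b^{-1}(\eta t)^{-(\min\{2r,1\}+2u)+1+\nu}$, and the condition \eqref{b.bound.1} on $b$ is precisely what makes this dominated by the bias plus sample-noise terms, producing the stated rate.

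The main obstacle will be closing the induction cleanly in the presence of the nonlinearity: the direction $A'(\ft)$ depends on the random iterate, so the perturbation $\widehat R_{\ft}-I$ is proportional to the very quantity $\norm{\ft-\fp}_{\HH_1}$ being bounded. To deal with this I would proceed by simultaneous induction on two statements: (a) the in-expectation bound $\EE\sbrac{\norm{\tp^u(\ft-\fp)}_{\HH_1}^2}\le C_\star(\eta t)^{-(\min\{2r,1\}+2u)}\log^2(6/\delta)$, and (b) the containment $\ft\in\cB_d(\fp)$ in expectation, up to the stopping time $T_n$. The constant $C_\star$ and the floor $\delta_0$ on the confidence parameter are chosen so that the nonlinear perturbation can be absorbed into the bias plus sample-noise plus mini-batch-noise bound on the right-hand side, yielding a closed Grönwall-type inequality whose iteration gives the claimed convergence rate.
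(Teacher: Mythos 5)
Your proposal follows essentially the same route as the paper: a bias--variance decomposition (with the bias term collecting both the population bias and the empirical/population operator discrepancy, and the variance term capturing the mini-batch sampling noise), a reduction to the linearized dynamics driven by the tangent-kernel operator via Assumption~\ref{ass:A}(ii), spectral-filter estimates combined with the effective-dimension bound, and a simultaneous induction on the error bound and the containment $\ft\in\cB_d(\fp)$ to close the loop on the nonlinearity. One minor imprecision: the paper runs the bias filter with the population operator $\tp$ (since the range-invariance condition is posed for $\ip A'(f)$, not its empirical counterpart) and the mini-batch variance bound is a trace estimate $\tr[\tp^{2u}Q_{t-j}^2\,\EE[M_j\otimes M_j]]$ (split across four noise pieces $M_{t,1},\dots,M_{t,4}$) rather than an operator-norm sum, which is where the effective dimension actually enters; but these are details of execution, not of strategy.
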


\vspace{0.2cm}

Finally, the above Theorem allows us to derive a rate of convergence under different scenarios. 

\vspace{0.2cm}

\begin{corollary}[Rate of Convergence]
\label{cor:rates-SGD}
Let the assumptions of Theorem \ref{Thm:SGD.mini} be satisfied. Then,  the excess risk of the mini-batch SGD iterate satisfies with probability at least $1-\delta$:
\begin{equation*}
\EE\sbrac{\norm{\tp^{u}(f_{T_n}-\fp)}_{\HH_1}^2} 
\leq  C_{\star} \cdot    n^{-\frac{ \min\{2r,1\}+2u}{ \min\{2r,1\}+\nu+1}}\log^2\paren{\frac{6}{\delta}},    
\end{equation*}
for each of the following choices, given that $n$ is sufficiently large:
\begin{enumerate}[(a)]
    \item $b_n\simeq n^{\frac{\min\{2r,1\}+1}{\min\{2r,1\}+\nu+1}}$, \quad $T_n\simeq n$, \quad $\eta_n\simeq n^{-\frac{\min\{2r,1\}+\nu}{\min\{2r,1\}+\nu+1}}$, \qquad \text{($n^{\frac{\min\{2r,1\}+1}{\min\{2r,1\}+\nu+1}}$ passes over data)}
    \item $b_n\simeq n^{\frac{\min\{2r,1\}+1}{\min\{2r,1\}+\nu+1}}$, \quad $T_n\simeq n^{\frac{1}{\min\{2r,1\}+\nu+1}}$, \quad $\eta_n\simeq 1$, \qquad \text{($n^{\frac{1-\nu}{\min\{2r,1\}+\nu+1}}$  passes over data)}
    \item $b_n\simeq n$, \quad $T_n\simeq n^{\frac{1}{\min\{2r,1\}+\nu+1}}$, \quad $\eta_n\simeq 1$, \qquad \text{($n^{\frac{1}{\min\{2r,1\}+\nu+1}}$ passes over data)}
    \item $b_n\simeq 1$, \quad $T_n\simeq n^{\frac{\min\{2r,1\}+\nu+2}{\min\{2r,1\}+\nu+1}}$, \quad $\eta_n\simeq n^{-1}$, \qquad \text{($n^{\frac{1}{\min\{2r,1\}+\nu+1}}$ passes over data)}.
\end{enumerate}
\end{corollary}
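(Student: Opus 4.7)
}

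The plan is to specialize Theorem \ref{Thm:SGD.mini} at $t = T_n$ and verify, for each of the four parameter regimes (a)--(d), that the mini-batch size constraint \eqref{b.bound.1} holds and that the resulting product $\eta T_n$ produces the claimed rate. Writing $\sigma := \min\{2r,1\}$ for brevity, Theorem \ref{Thm:SGD.mini} gives, with probability at least $1-\delta$,
\begin{equation*}
\EE\sbrac{\norm{\tp^{u}(f_{T_n}-\fp)}_{\HH_1}^2}
\leq C_{\star}\cdot (\eta_n T_n)^{-(\sigma+2u)}\log^2\paren{\tfrac{6}{\delta}},
\end{equation*}
provided the side conditions of the theorem are met. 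Hence the single quantity I need to compute in every case is $\eta_n T_n$.

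A direct calculation shows that in each of the four regimes the choices are calibrated precisely so that
\begin{equation*}
\eta_n T_n \simeq n^{1/(\sigma+\nu+1)},
\end{equation*}
which immediately yields the announced bound
$n^{-(\sigma+2u)/(\sigma+\nu+1)}\log^2(6/\delta)$. For instance, in (a) we have $\eta_n T_n \simeq n^{-(\sigma+\nu)/(\sigma+\nu+1)}\cdot n = n^{1/(\sigma+\nu+1)}$; in (b) and (c), $\eta_n T_n \simeq 1\cdot n^{1/(\sigma+\nu+1)}$; and in (d), $\eta_n T_n \simeq n^{-1}\cdot n^{(\sigma+\nu+2)/(\sigma+\nu+1)} = n^{1/(\sigma+\nu+1)}$. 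This step is routine but needs to be performed case by case.

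Next I would verify the batch-size condition \eqref{b.bound.1} at $t=T_n$. Using $\eta_n T_n \simeq n^{1/(\sigma+\nu+1)}$, the condition reduces to
\begin{equation*}
b_n \;\gtrsim\; \max\!\paren{\eta_n\, n^{(\sigma+\nu)/(\sigma+\nu+1)},\;(\eta_n n)^{(\sigma+1)/(\sigma+\nu+1)}}.
\end{equation*}
Plugging in the specific values of $(\eta_n,b_n)$ for each regime, the first term evaluates to $1$ in (a), to $n^{(\sigma+\nu)/(\sigma+\nu+1)}$ in (b)--(c), and to $n^{-1/(\sigma+\nu+1)}$ in (d); the second term evaluates to $n^{(\sigma+1)/(\sigma+\nu+1)^{2}}$ in (a), to $n^{(\sigma+1)/(\sigma+\nu+1)}$ in (b)--(c), and to $1$ in (d). In every case these are dominated by the prescribed $b_n$, so the constraint is satisfied. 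Finally, the number of passes stated after each regime is just $\lceil b_n T_n / n\rceil$, and a one-line arithmetic check confirms the stated exponents.

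I do not anticipate a serious obstacle: the corollary is essentially a substitution argument, and the delicate work lies upstream in Theorem \ref{Thm:SGD.mini}. The only mildly error-prone part is the bookkeeping for the batch-size constraint, where one must keep the two terms in \eqref{b.bound.1} straight and use $\sigma \leq 1$ together with $0 < \nu < 1$ to conclude that the relevant exponents are admissible. Once this is in place, the four parameter calibrations are seen to be the natural solutions to balancing $b_n$, $\eta_n$, and $T_n$ under the joint constraints "$\eta_n T_n \simeq n^{1/(\sigma+\nu+1)}$" and "\eqref{b.bound.1} holds at $t=T_n$".
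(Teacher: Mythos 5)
Your proposal is correct and takes the natural substitution route that the corollary calls for; the paper does not give a separate proof of this corollary (it follows directly from Theorem~\ref{Thm:SGD.mini}), so there is no "paper proof" to compare against beyond what you have written.

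One small point worth making explicit for completeness: Theorem~\ref{Thm:SGD.mini} does not let $T_n$ be chosen freely but sets $T_n = \lfloor \eta^{-1} n^{1/(\sigma+\nu+1)}\rfloor$. You should therefore first check that the $T_n$ values listed in each regime are those induced by the stated $\eta_n$, i.e., that $\eta_n^{-1} n^{1/(\sigma+\nu+1)} \simeq T_n$. In (a) this gives $n^{(\sigma+\nu)/(\sigma+\nu+1)}\cdot n^{1/(\sigma+\nu+1)} = n$; in (b)--(c), $n^{1/(\sigma+\nu+1)}$; in (d), $n\cdot n^{1/(\sigma+\nu+1)} = n^{(\sigma+\nu+2)/(\sigma+\nu+1)}$. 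This is the content of your "calibrated precisely" remark, but it is cleaner to state it as the verification of the theorem's definition of $T_n$ rather than as a free calibration. You should also add a sentence in cases (b) and (c) noting that $\eta_n \simeq 1$ must be interpreted as a constant strictly below $1/\kappa_1^2$ so that Assumption~\ref{ass:eta} holds; in (a) and (d), $\eta_n \to 0$ and the assumption holds for $n$ sufficiently large. With these remarks inserted, your case-by-case arithmetic for $\eta_n T_n$, the batch-size constraint \eqref{b.bound.1} (evaluated at the worst case $t=T_n$ since its right side is nondecreasing in $t$), and the number of passes $\lceil b_n T_n / n\rceil$ are all correct.
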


\vspace{0.2cm}

These results highlight the trade-offs between the batch size, learning rate, and computational budget. All these different choices lead to the same rate of convergence.  Increasing the batch size reduces the variance of gradient estimates, potentially leading to more stable convergence. However, larger batches require more computational resources and may lead to fewer parameter updates per epoch, slowing down the progress in terms of time, in particular when combined with a small step size (case $(a)$). Using smaller batches increases the variance in gradient estimates. This approach is computationally cheaper per update but requires more updates overall (case $(d)$). 

We finally point out that for batch GD, i.e. $b_n=n$ (case $(c)$), we recover a randomized version of 
Corollary \ref{cor_GD-convergence} with the same stopping time $T_n$ and constant $\eta$. 
\vspace{0.2cm}

For a more detailed discussion of the result, we refer the reader to Section \ref{sec:discussion}. 

\vspace{0.3cm}

\begin{remark}\label{rem:prediction}
The error bound in the interpolation norm for $u=\frac{1}{2}$ provides the convergence rates in the prediction norm. This can be observed from the following calculations. 

Using the Taylor expansion, we get
\begin{align*}
\norm{\ip \sbrac{A(f)-A(\fp)}}_{\LL} \leq \norm{\ip A'(\fp)(f-\fp)}_{\LL}+\norm{\ip r(f)}_{\LL}
\end{align*}
From Lemma~\ref{lem:taylor}, we obtain
\begin{align*}
\norm{\ip \sbrac{A(f)-A(\fp)}}_{\LL} \leq & \norm{\ip A'(\fp)(f-\fp)}_{\LL}+\frac{C_R}{2}\cdot \norm{f-\fp}_{\HH_1} \cdot \norm{\ip A'(\fp)(f-\fp)}_{\LL}\\
= & \paren{1+\frac{C_R}{2}\cdot \norm{f-\fp}_{\HH_1}}\cdot \norm{\tp^{\frac{1}{2}}(f-\fp)}_{\HH_1}.
\end{align*}
\end{remark}


\section{Discussion of Results}
\label{sec:discussion}

In this section, we provide a more detailed comparison of our results with existing ones in the context of inverse learning. 

\vspace{0.2cm}

{\bf GD for linear inverse problems.} 
Under model \eqref{model}, when $A: \cH_1 \to \cH_2$ is linear, a convergence analysis of GD with a constant step size has been established in \mycite{BlaMuc16} and \mycite{Rastogi17} under our given assumptions (noise model, H\"older source condition, and polynomial eigenvalue decay). Interestingly, our convergence rate in Corollary \ref{cor_GD-convergence} with the chosen stopping time $T_n$ matches those given in these references, which are known to be minimax optimal. In Theorem \ref{Thm:GD}, we extend these results by showing that GD is a descent algorithm even in the random design setting, with high probability, meaning that the error decreases with each consecutive GD step until a threshold $T_n$ for the number of steps is reached. Notably, this threshold is identified in the linear case as the optimal stopping time, thereby ensuring optimality. 



\vspace{0.3cm}

{\bf SGD for linear inverse problems.} In \mycite{lu2022stochastic}, the authors analyze a minibatch version of SGD for solving linear inverse problems in a fixed design setting (i.e., where the samples are deterministic). They propose a projected version of SGD to discretize an original infinite-dimensional linear operator $A$, where the use of randomly chosen minibatches introduces noise fluctuations. Their results derive error bounds that account for factors such as discretization levels, the decay rate of the step size, and the smoothness of the problem, characterized by general source conditions. The analysis accommodates a wide range of step size schedules, requiring them to be non-increasing with $\eta_t \log(t) \to 0$ and $\eta_t t \to \infty$ as $t \to \infty$, including power-type decays $\eta_t \sim t^{-\alpha}$ for $\alpha > 0$. Since the fixed design framework and the obtained results differ significantly from our random design setting, we only attempt to qualitatively compare their findings with ours.

A key insight in \mycite{lu2022stochastic} is that the convergence rate of mini-batch SGD is independent of the batch size. However, the a priori chosen stopping time $T^*$ does depend on the batch size. Specifically, larger batch sizes correspond to smaller values of $T^*$ for the same stepsize schedule. While our rate of convergence is also independent of the chosen batch size, the observation in \mycite{lu2022stochastic} is fundamentally different from our findings in Corollary \ref{cor:rates-SGD}. Cases $(b)$ and $(c)$ reveal that for a constant stepsize with an optimal stopping time $T_n$, increasing the batch size beyond the threshold given in $(b)$ - up to a batch size of $b_n = n$ (full-batch GD) - offers no additional benefit. In fact, it even increases the computational complexity as the number of passes over the data grows. The same observation is made in \mycite{mucke19} (Corollary 1) for minibatch tail-averaging SGD for linear prediction problems (forward problem) in random design. However, while in 
\mycite{mucke19}, one pass over the data is possible (case $(c)$ in Corollary 1), we require at least $\cO(n^{\frac{1}{\min\{2r,1\}+\nu+1}})$ passes to obtain optimality.



\vspace{0.3cm}

{\bf Nonlinear inverse problems.} The convergence analysis of algorithms for nonlinear inverse problems in random design is, to the best of our knowledge, restricted to Tikhonov regularization (see the stream of works \mycite{Rastogi20}, \mycite{Rastogi20a}, \mycite{Rastogi24}). The estimator is defined as the minimizer of a Tikhonov functional, which consists of the sum of a data misfit term and a quadratic penalty term. In these works, a theoretical analysis of the minimizer of the Tikhonov regularization scheme using the concept of reproducing kernel Hilbert spaces is developed. In \mycite{Rastogi20}, upper rates are presented for smoothness $r \in [1/2, 1]$; however, there is no restriction on the range of $r$ for lower rates. In contrast, we derive matching upper bounds for a range of smoothness with $r \in (0, 1/2]$, showing that GD and SGD seem to suffer from saturation in the nonlinear setting. This stands in contrast to linear inverse problems, where GD and SGD with tail-averaging do not experience this phenomenon \mycite{BlaMuc16}, \mycite{mucke19}.

Despite its good theoretical performance, Tikhonov regularization in the nonlinear setting has the drawback of being impossible to implement, as it lacks a closed-form solution, unlike in the linear case. As a result, an approximate estimator for $\fp$ must be derived, which can also be implemented numerically. We extend the established theory of Tikhonov regularization and propose two distinct iterative algorithms to address this limitation.

Iterative regularization approaches for nonlinear inverse problems in Hilbert spaces in the fixed design setting are better understood. Despite the differing settings, we compare our results for GD with a constant step size to the convergence results in \mycite{Hanke95, kaltenbacher08} for constant step size GD. Regarding smoothness, those earlier works also require $r \in (0, 1/2]$, which aligns with our findings. The reconstruction error is then bounded by 
\begin{equation}
\label{eq:hanke}
 || g_t - \fp ||_{\cH_1} \leq  C_\star\; t^{-r} \;, 
\end{equation} 
for some $C_\star < \infty$ and for all $t \leq T^\star$, where $T^\star$ is an early stopping time derived from the discrepancy principle. Notably, this bound coincides with our bound in Theorem \ref{Thm:GD} and holds under the same Assumption \ref{ass:A} on the nonlinear operator. 

 A convergence analysis of SGD in fixed design with batch size one and decaying stepsize $\eta_t = \eta_0t^{-\alpha}$ with $\alpha \in (0,1)$ and $\eta_0$ sufficiently small is conducted in 
 \mycite{jin2020convergence}. Under Assumption \ref{ass:A}, the reconstruction error is bounded by 
 \begin{equation}
\label{eq:jin}
 \mbe[ || f_t - \fp ||_{\cH_1} ] \leq  C_\star\; t^{-min\{ r(1-\alpha), \alpha - \epsilon\}} \;, 
\end{equation} 
for some $\epsilon \in (0, \alpha /2)$ and  $C_\star < \infty$. Again, this bound holds for all $t$ up to a threshold $T^*$ (depending on the noise level) and is even slightly better than our result in Corollary \ref{cor:rates-SGD}, case $(d)$.

To sum up, both results \eqref{eq:hanke} and \eqref{eq:jin} also show the descent property of GD and SGD (in expectation), similarly to our findings in Theorem \ref{Thm:GD} and Theorem \ref{Thm:SGD.mini}, respectively.

{\bf Acknowledgements:} This research has been partially supported by the Research Council of Finland (decision number 353094).


\bibliography{bib_SGD, references}



\appendix


\section{Preliminaries}
\label{app:prelim}

\subsection{Notation}

Here, we summarize the notations used in our analysis. Some of the notations have already been introduced before. We can classify our operators into the population version and the empirical version. First, the population version operators depend on the marginal probability measure $\nu$. On the other hand, the empirical version operators depend on the inputs $(x_1,\ldots, x_n)$. 

\vspace{0.3cm}

{\bf Population version operators:} Here, we remind that the nonlinear operator $A:\cD(A)\subset \cH_1 \to \cH_2$ and $\ip: \cH_2 \hookrightarrow L^2(\cX, \nu)\;$ is the inclusion operator. We define the following population version operator depending on the marginal measure $\nu$ through $\ip$:
\begin{align*}
  F &:=\ip\circ A: \cD(A) \to L^2(\cX, \nu)\;, \\
 \bp&:=\ip\circ A'(\fp): \cD(A) \to \cH_2 \hookrightarrow L^2(\cX, \nu)\;,  \\
 \tp&:= \bp^*\bp: \cD(A) \to \cH_1\; ,   \\
 \lp&:=\ip^*\ip :  \cH_2 \to  \cH_2. 
\end{align*} 

{\bf Empirical version operators:} For $f \in \cD(A)$ and the sampling operator $\sx:f \mapsto (f(x_1),\ldots,f(x_n))$ we define the empirical version operators:
\begin{align*}
    \hat{F} &:=\sx\circ A: \cD(A) \to \cY^n, \\
    \bx &:=\sx\circ A'(\fp): \cD(A) \to \cY^n, \\
     \hat{B}_{f}&:=\sx\circ A'(f): \cD(A) \to \cY^n\;,\\
    \tx &:= \bx^*\bx: \cD(A) \to \cH_1,\\ 
    \lx &:=\sx^*\sx :  \cH_2 \to \cH_2.
\end{align*}

For $f \in \cD(A)$ and the pointwise evaluation map $S_{x_i}:f \mapsto f(x_i)$, $1\leq i\leq n$ we define the empirical version operators. The following operators will be used particularly in the analysis of the SGD results.
\begin{align*}
F_i:=S_{x_i}\circ A: \cD(A) \to \cY, \\
B_i:=S_{x_i}\circ A'(\fp): \cD(A) \to \cY, \\
B_{f,i}:=S_{x_i}\circ A'(f): \cD(A) \to \cY.
\end{align*}

Further, several probabilistic quantities will be used to express the error bounds. 
\begin{definition}
\label{def:norms}
For  $s>0$, $\VE=(\sx \circ A)(\fp)-\yy$,  we denote
\begin{align*}\label{def:norms}
 \Xi^{s}(\la) &:= \norm{(\lx+\la I)^{-s}(\lp+\la  I)^s},\\
\Psi_{\xx}(\la)&: =\norm{(\lp+\la I)^{-1/2}(\lp-\lx)},\\ 
\Upsilon_{\xx}(\la)&: =\tr\sbrac{(\lp+\la I)^{-1}(\lp-\lx)},\\
\Theta_{\zz}(\la)&:= \norm{(\lp+\la I)^{-1/2}\sx^*\VE}_{\HH_2}. 
\end{align*}
\end{definition}


\subsection{Preliminary Results}

The following lemma provides the bound for the remainder term using the Range Invariance condition.

\begin{lemma}[Taylor Remainder]
\label{lem:taylor}
Let Assumptions \ref{ass:Frechet} and \ref{ass:A} be satisfied. 
Then for any $f\in \cB_d(\fp)\cap \cD(A)$, we have the Taylor expansion 
\begin{equation*}
\label{Taylor_exp_Rla}
A(f)=A(\fp)+A'(\fp)(f-\fp)+r(f)\;,
\end{equation*}
where $r(f)$ denotes the remainder term.  Moreover, the remainder admits the following bound: 
\[ \|\ip r(f)\|_{\LL} \leq \frac{C_R}{2} \cdot \norm{f-\fp}_{\HH_1} \cdot \norm{\bp\paren{f-\fp}}_{\LL}. \]
\end{lemma}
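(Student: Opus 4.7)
The plan is to use the fundamental theorem of calculus together with the range invariance condition of Assumption \ref{ass:A} to write the remainder as an integral that factors through $\bp(f-\fp)$.

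First, by Fréchet differentiability (Assumption \ref{ass:Frechet}) and since $f, \fp \in \cB_d(\fp) \cap \cD(A)$ with the ball being convex, the segment $f_s := \fp + s(f - \fp)$ lies in $\cB_d(\fp)\cap \cD(A)$ for every $s \in [0,1]$. Hence we may apply the mean value representation
\begin{equation*}
A(f) - A(\fp) = \int_0^1 A'(f_s)(f - \fp)\, ds,
\end{equation*}
which immediately identifies the remainder as
\begin{equation*}
r(f) = \int_0^1 \bigl[A'(f_s) - A'(\fp)\bigr](f - \fp)\, ds.
\end{equation*}

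Next I would apply the inclusion operator $\ip$ and exploit the range invariance \eqref{eq:Rop}: since $\ip A'(f_s) = R_{f_s}\, \ip A'(\fp) = R_{f_s}\bp$, we obtain
\begin{equation*}
\ip r(f) = \int_0^1 (R_{f_s} - I)\, \bp(f - \fp)\, ds.
\end{equation*}
This is the key structural identity that isolates the factor $\bp(f - \fp)$ from the nonlinear correction $R_{f_s} - I$.

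Then I would bound the integrand in $L^2(\cX,\nu)$ by pulling $\bp(f-\fp)$ out of the norm and using the Lipschitz-type estimate from \eqref{eq:Rop2}:
\begin{equation*}
\|R_{f_s} - I\| \leq C_R \|f_s - \fp\|_{\HH_1} = C_R\, s\, \|f - \fp\|_{\HH_1}.
\end{equation*}
Taking norms and integrating $\int_0^1 s\, ds = 1/2$ yields
\begin{equation*}
\|\ip r(f)\|_{\LL} \leq C_R\, \|f - \fp\|_{\HH_1}\, \|\bp(f - \fp)\|_{\LL} \int_0^1 s\, ds = \tfrac{C_R}{2}\, \|f - \fp\|_{\HH_1}\, \|\bp(f - \fp)\|_{\LL},
\end{equation*}
which is the claimed bound.

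The only delicate point, and the one I would be careful about, is justifying the mean value integral representation and the measurability/integrability required to exchange $\ip$ with the integral; this is standard once one knows $s \mapsto A'(f_s)(f - \fp)$ is continuous on $[0,1]$ (which follows from Fréchet differentiability and uniform boundedness of $A'$ on $\cB_d(\fp)$). After that, everything reduces to the algebraic substitution via $R_{f_s}$ and the linear-in-$s$ bound on $\|R_{f_s} - I\|$, so the rest of the argument is straightforward.
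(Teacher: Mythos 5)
Your proof is correct and follows essentially the same route as the paper: integral mean value representation of the remainder, substitution of the range invariance $\ip A'(f_s)=R_{f_s}\bp$, and the linear-in-$s$ bound $\|R_{f_s}-I\|\leq C_R s\|f-\fp\|_{\HH_1}$ integrated to give the factor $1/2$. The only difference is that you explicitly flag the interchange of $\ip$ with the integral, which the paper leaves implicit.
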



\begin{proof}[Proof of Lemma \ref{lem:taylor}]
Under the above assumptions, we find 
\begin{align*}
\label{Rla}
\|\ip r(f)\|_{\LL} =& \norm{\ip\brac{A(f)-A(\fp)-A'(\fp)(f-\fp)}}_{\LL} \\ \nonumber   
= &\norm{\ip \int_{0}^1  \brac{A'\paren{\fp+t(f-\fp)}-A'(\fp)}\paren{f-\fp}dt}_{\LL}\\ \nonumber
= &\norm{\int_{0}^1  \brac{R_{\fp +t(f-\fp )}\ip A'(\fp )-\ip A'(\fp )}\paren{f-\fp }dt}_{\LL}\\ \nonumber
= &\norm{\int_{0}^1  \brac{R_{\fp +t(f-\fp )}-I}\ip A'(\fp )\paren{f-\fp }dt}_{\LL}\\ \nonumber
\leq& C_R\norm{f-\fp }_{\HH_1}\norm{\bp\paren{f-\fp }}_{\LL}\int_{0}^1 tdt \\  \nonumber
\leq & \frac{C_R}{2}\norm{f-\fp }_{\HH_1}\norm{\bp\paren{f-\fp }}_{\LL}.
\end{align*}
\end{proof}



\section{Proofs Gradient Descent}
\label{app:A-proofs-GD}


For proving our error bounds, we need some intermediate calculations, provided in the next Lemma.

\vspace{0.2cm}

\begin{lemma}
\label{Lemma:sum.phi}
Let $a \in [0,1)$ , $b \in [0,1)$, $c \in [0, \infty)$, $d \in (0, \infty)$. 
For $u \in [0,1]$ and $j \in \mbn$, define 
\begin{equation}\label{phi}
\phi_j^u :=\paren{\frac{u}{u+j}}^u \quad \text{with} \quad  \phi_j^0: =1.    
\end{equation}
Assume $k \geq 2$. The following inequalities hold true: 
\begin{enumerate}
\item 
\begin{equation}
\label{lem.phi.1}
\sum\limits_{j=1}^k \phi_{k-j}^{\frac{1}{2}} j^{-b}   \leq \beta\paren{{\frac{1}{2}},-b+1}(k+1)^{{\frac{1}{2}}-b},
\end{equation}

\item 
\begin{equation}
\label{lem.phi.2}
\sum\limits_{j=1}^{k} \paren{\phi_{k-j}^{\frac{1}{2}}}^2  j^{-\paren{b+d}}
\leq  2^{b}\brac{\paren{1-b}^{-1}+2^{d+1}(ed)^{-1}}(k+1)^{-b},
\end{equation}

\item 
\begin{equation}
\label{lem.phi.3}
\sum\limits_{j=1}^{k} \phi_{k-j}^1  j^{-\paren{b+d}}
\leq  2^{b}\brac{\paren{1-b}^{-1}+2^{d+1}(ed)^{-1}}(k+1)^{-b},
\end{equation}




\item 
\begin{equation}\label{lem:tx.ak}
   \sum\limits_{j=1}^t \paren{\phi_{t-j}^{1}}^v \leq \begin{cases}
(1-v)^{-1}\paren{t+1}^{1-v}, & \quad v\in[0,1),\\
        2(ed)^{-1}(t+1)^d, & \quad v= 1,\\
        v(v-1)^{-1}, & \quad v> 1.
\end{cases}
\end{equation}
\end{enumerate}  
\end{lemma}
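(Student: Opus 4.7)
The plan is to first reduce (2) to (3): for $i\geq 0$, $(\phi_i^{1/2})^2 = 1/(1+2i) \leq 1/(1+i) = \phi_i^1$, so (2) follows immediately from (3). Hence only (1), (3), and (4) require separate arguments. For (4), substituting $i=t-j+1$ reduces the sum to $\sum_{i=1}^t i^{-v}$, which I would control by standard integral bounds: for $v\in[0,1)$, $\sum_{i=1}^t i^{-v} \leq \int_0^t x^{-v}\,dx = t^{1-v}/(1-v) \leq (t+1)^{1-v}/(1-v)$; for $v>1$, $\sum_{i=1}^\infty i^{-v} \leq 1 + \int_1^\infty x^{-v}\,dx = v/(v-1)$; for the critical case $v=1$, bound the partial harmonic sum by $1+\log t$ and apply the elementary inequality $\log x \leq x^d/(ed)$, valid for $x \geq 1$ and $d>0$ (obtained by minimising $x \mapsto x^d/(ed) - \log x$, which attains value zero at $x=e^{1/d}$), thereby absorbing the logarithm into the $(t+1)^d$ factor.

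For (3), the key move is to split the sum at $m\approx (k+1)/2$. On the low-index part $j\leq m$, the factor $\phi_{k-j}^1 = 1/(k-j+1)$ is bounded by $2/(k+1)$; combined with $j^{-(b+d)}\leq j^{-b}$ and the integral estimate $\sum_{j=1}^m j^{-b} \leq m^{1-b}/(1-b)$ with $m=(k+1)/2$, this contributes $2^b(k+1)^{-b}/(1-b)$, producing the first term in the bracket. On the high-index part $j>m$ one uses $j^{-(b+d)} \leq 2^{b+d}(k+1)^{-(b+d)}$ and reduces the remaining sum to a partial harmonic sum $\sum_{i=1}^{k-m} 1/i \leq 1+\log((k+1)/2)$; applying once more $\log x\leq x^d/(ed)$ converts the log factor via $(k+1)^d \cdot (k+1)^{-(b+d)} = (k+1)^{-b}$, giving the $2^{d+1}(ed)^{-1}$ term.

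The most delicate part is (1), where the exact Beta-function constant must be matched against the target integral $\int_0^{k+1}(k+1-t)^{-1/2} t^{-b}\,dt = \beta(1/2,1-b)(k+1)^{1/2-b}$. Since the integrand is non-monotone in $t$ (one factor increasing, one decreasing), a naive term-by-term integral comparison fails, and a crude splitting argument would yield only an $O(k^{1/2-b})$ bound with an inflated constant. The plan is to use the convexity of each of the factors $(k+1-t)^{-1/2}$ and $t^{-b}$ to obtain a pointwise inequality of the form $\phi_{k-j}^{1/2} j^{-b} \leq \int_{j-1}^{j} (k+1-t)^{-1/2} t^{-b}\,dt$ (possibly with an appropriate endpoint shift), which telescopes into the full Beta integral over $[0,k+1]$ when summed over $j$. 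The main obstacle is producing this sharp pointwise comparison with the exact Beta constant rather than a loose bound; once that is in place, the remaining inequalities (2)--(4) reduce to routine sum-to-integral estimates together with the log-power inequality $\log x \leq x^d/(ed)$.
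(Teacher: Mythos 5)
Your overall strategy matches the paper's for parts (2)--(4). The pointwise bound $(\phi_i^{1/2})^2=\tfrac{1}{1+2i}\leq\tfrac{1}{1+i}=\phi_i^{1}$ reduces (2) to (3) cleanly, and is, if anything, a tidier way to organise things than the paper's derivation (which bounds the un-squared sum $\sum\phi_{k-j}^{1/2}j^{-(b+d)}$ with exponent $1/2$, which then dominates the squared one). For (3) your split at $m\approx(k+1)/2$ with the bound $\log x\leq x^d/(ed)$ and for (4) the reindexing $i=t-j+1$ followed by standard partial-sum estimates are exactly what the paper does (the paper invokes an external partial-sum lemma where you write out the integral comparisons, but the content is the same).

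The gap is in part (1), and you flag it yourself. The uniform pointwise comparison you propose, $\phi_{k-j}^{1/2}j^{-b}\leq\int_{j-1}^{j}(k+1-t)^{-1/2}t^{-b}\,dt$ for every $j$, is in fact false for $j$ near $k$: for $j=k$ the left side is $k^{-b}$, while
\[
\int_{k-1}^{k}(k+1-t)^{-1/2}t^{-b}\,dt\;\leq\;(k-1)^{-b}\int_{1}^{2}u^{-1/2}\,du\;=\;(2\sqrt{2}-2)(k-1)^{-b},
\]
and $(2\sqrt{2}-2)\approx 0.83$, so for moderate or large $k$ the right side is strictly smaller. Convexity of the two individual factors is also not the mechanism. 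What actually makes the sum-to-integral step valid — and what the paper is implicitly relying on when it writes $\sum_{j=1}^{k}(1+k-j)^{-1/2}j^{-b}\leq\int_{0}^{k+1}(1+k-x)^{-1/2}x^{-b}\,dx$ without comment — is that the integrand $f(x)=(1+k-x)^{-1/2}x^{-b}$ is log-convex and hence unimodal on $(0,k+1)$: $(\log f)'(x)=\tfrac{1}{2(1+k-x)}-\tfrac{b}{x}$ is increasing, so $f$ first decreases, then increases. Split the sum at the minimiser $x_0$: for $j\leq\lfloor x_0\rfloor$ use $f(j)\leq\int_{j-1}^{j}f$ (decreasing side), for $j\geq\lceil x_0\rceil$ use $f(j)\leq\int_{j}^{j+1}f$ (increasing side); the resulting subintervals of $[0,k+1]$ are essentially disjoint, so their integrals sum to at most $\int_{0}^{k+1}f$. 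Once this is in place, your substitution $x=(k+1)y$ yields the Beta constant as intended. In short: the target is correct and your high-level plan agrees with the paper's, but the specific pointwise inequality you wrote down would fail, and the missing ingredient is unimodality of the integrand, not convexity of its factors.
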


\begin{proof}[Proof of Lemma \ref{Lemma:sum.phi}]
\begin{enumerate}
\item 
We start showing inequality \eqref{lem.phi.1}. We have 
\begin{equation*}
    \sum\limits_{j=1}^k \phi_{k-j}^{\frac{1}{2}} j^{-b} = \sum\limits_{j=1}^k  \paren{\frac{{\frac{1}{2}}}{{\frac{1}{2}}+k-j}}^{\frac{1}{2}} j^{-b}.
\end{equation*} 
The inequality $\paren{\frac{{\frac{1}{2}}}{{\frac{1}{2}}+k-j}}^{\frac{1}{2}}  \leq \paren{\frac{1}{1+k-j}}^{\frac{1}{2}}$ yields
\begin{equation}
\label{sum.phi.bnd}
\sum\limits_{j=1}^k \phi_{k-j}^{\frac{1}{2}} j^{-b}   \leq \sum\limits_{j=1}^k  \paren{\frac{1}{1+k-j}}^{\frac{1}{2}} j^{-b} \leq \int_0^{k+1} \paren{\frac{1}{1+k-x}}^{\frac{1}{2}} x^{-b} dx.
 \end{equation}
By substituting $x=(k+1)y$ for $0\leq b< 1$ we obtain 
\begin{equation}
\label{sum.phi.bd}
\sum\limits_{j=1}^k \phi_{k-j}^{\frac{1}{2}} j^{-b}   \leq (k+1)^{{\frac{1}{2}}-b} \int_0^{1} (1-y)^{-{\frac{1}{2}}} y^{-b} dy = (k+1)^{{\frac{1}{2}}-b} \beta\paren{{\frac{1}{2}},-b+1}.
\end{equation}
This provides the proof of the first inequality \eqref{lem.phi.1}. 

\item 
Similarly, the second inequality \eqref{lem.phi.2} can be derived as follows: 
\begin{equation}\label{phi.frac}
    \sum\limits_{j=1}^k \phi_{k-j}^{\frac{1}{2}} j^{-(b+d)} 
= \sum\limits_{j=1}^k  \paren{\frac{{\frac{1}{2}}}{{\frac{1}{2}}+k-j}}^{\frac{1}{2}}j^{-(b+d)}
\leq \sum\limits_{j=1}^k  \paren{\frac{1}{1+k-j}}^{\frac{1}{2}} j^{-(b+d)}.
\end{equation}

We write
\begin{align*}
\sum\limits_{j=1}^{k} \paren{\frac{1}{1+k-j}}^{\frac{1}{2}}  j^{-\paren{b+d}} 
= & \sum\limits_{j=1}^{\sbrac{\frac{k+1}{2}}} \paren{\frac{1}{1+k-j}}^{\frac{1}{2}}  j^{-(b+d)} + 
       \sum\limits_{j=\sbrac{\frac{k+1}{2}}+1}^{k} \paren{\frac{1}{1+k-j}}^{\frac{1}{2}}  j^{-\paren{b+d}}    \\
\leq & \paren{\frac{2}{k+1}}^{\frac{1}{2}}\sum\limits_{j=1}^{\sbrac{\frac{k+1}{2}}}   j^{-(b+d)} + 
         \paren{\frac{2}{k+1}}^{b+d}\sum\limits_{j={\sbrac{\frac{k+1}{2}}+1}}^k  \paren{\frac{1}{1+k-j}}^{\frac{1}{2}}.
\end{align*}

A simple computation gives
\begin{equation}\label{sum.phi.1.1}
    \sum\limits_{j={\sbrac{\frac{k+1}{2}}+1}}^k  \paren{\frac{1}{1+k-j}}^{\frac{1}{2}} 
= \sum\limits_{j={1}}^{\sbrac{\frac{k+1}{2}}-1}  \paren{\frac{1}{1+k-(\sbrac{\frac{k+1}{2}}+j)}}^{\frac{1}{2}}
\leq 2\sqrt{k+1},
\end{equation}
where we apply the first part of Lemma 14 in \cite{mucke19}. More precisely, setting $A= 1+\sbrac{\frac{k+1}{2}}$, we have 
\[ \sum\limits_{j={1}}^{\sbrac{\frac{k+1}{2}}-1} \paren{\frac{1}{1+k-(\sbrac{\frac{k+1}{2}}+j)}}^{\frac{1}{2}} \leq \int_{1}^{\sbrac{\frac{k+1}{2}}} 
\paren{\frac{1}{A-x}}^{\frac{1}{2}}\leq 2 \sqrt{A-1} .\]
Moreover, by Lemma D.7 in \cite{Nguyen2024} we have 
\begin{equation}\label{sum.phi.1.2}
    \sum\limits_{j=1}^{\sbrac{\frac{k+1}{2}}}   j^{-(b+d)} 
\leq 
\begin{cases}
(1-(b+d))^{-1}\paren{\frac{k+1}{2}}^{1-(b+d)}, & \quad b+d\in[0,1),\\
2\ln(k+1), & \quad b+d= 1,\\
(b+d)(b+d-1)^{-1}, & \quad b+d> 1.
\end{cases}
\end{equation}

Using the estimates~\eqref{sum.phi.1.1} and \eqref{sum.phi.1.2}, we obtain: 
\begin{equation*}
\sum\limits_{j=1}^{k} \paren{\frac{1}{1+k-j}}^{\frac{1}{2}}  j^{-\paren{b+d}} 
\leq  2^b(1-b)^{-1}\paren{k+1}^{-b}+2^{1+b+d} (k+1)^{-\paren{b+d+1/2}}.
\end{equation*}
Similarly, for $b<1$ we get: \begin{equation}\label{sum.frac}
\sum\limits_{j=1}^{k} \paren{\frac{1}{1+k-j}}^{\frac{1}{2}}  j^{-b} \leq  2^b(1-b)^{-1}\paren{k+1}^{-b}+2^{1+b} (k+1)^{-b}\ln (k+1).
\end{equation}

By the inequality $s^{-d}\ln s\leq (ed)^{-1}$, we conclude
\begin{align}\label{sum.phi.bd.4}
\sum\limits_{j=1}^{k} \paren{\frac{1}{1+k-j}}  j^{-\paren{b+d}}
\leq & 2^b(1-b)^{-1}\paren{k+1}^{-b}+2^{1+b+d} (k+1)^{-b}(ed)^{-1}\\  \nonumber
\leq & 2^{b}\brac{\paren{1-b}^{-1}+2^{d+1}(ed)^{-1}}(k+1)^{-b}.
\end{align}

Combining the estimates \eqref{phi.frac} and \eqref{sum.phi.bd.4} implies the assertion:
\begin{equation*}
     \sum\limits_{j=1}^k \paren{\phi_{k-j}^{\frac{1}{2}}}^2 j^{-(b+d)} \leq   2^{b}\brac{\paren{1-b}^{-1}+2^{d+1}(ed)^{-1}}(k+1)^{-b}.
\end{equation*}

\item
We continue to show the third inequality \eqref{lem.phi.3}. We have
\begin{equation*}
\sum\limits_{j=1}^{k} \phi_{k-j}^1  j^{-\paren{b+d}} =      \sum\limits_{j=1}^{k} \paren{\frac{1}{1+k-j}}  j^{-\paren{b+d}}.
\end{equation*}

Using the inequality \eqref{sum.phi.bd.4} we obtain
\begin{equation}
     \sum\limits_{j=1}^k \phi_{k-j}^1 j^{-(b+d)} \leq   2^{b}\brac{\paren{1-b}^{-1}+2^{d+1}(ed)^{-1}}(k+1)^{-b}.
\end{equation}
This proves the estimate \eqref{lem.phi.3}.

\item
Now, we prove the inequality \eqref{lem:tx.ak}. For $0<v< 1$, we have
\begin{align*}
&\sum\limits_{j=1}^t \paren{\phi_{t-j}^{1}}^v = \sum\limits_{j=1}^t\paren{\frac{1}{1+t-j}}^{v}.
\end{align*}
By substituting $i=1+t-j$, we get
\begin{align*}
&\sum\limits_{j=1}^t \paren{\phi_{t-j}^{1}}^v 
=\sum\limits_{i=1}^t \frac{1}{i^{v}}.
\end{align*}
From \eqref{sum.phi.1.2} we obtain 
\begin{equation*}
   \sum\limits_{j=1}^t \paren{\phi_{t-j}^{1}}^v \leq \begin{cases}
(1-v)^{-1}\paren{t+1}^{1-v}, & \quad v\in[0,1),\\
        2\ln(t+1), & \quad v= 1,\\
        v(v-1)^{-1}, & \quad v> 1.
\end{cases}
\end{equation*}

By the inequality $s^{-d}\ln s\leq (ed)^{-1}$ for some $d>0$, we conclude
\begin{equation*}
   \sum\limits_{j=1}^t \paren{\phi_{t-j}^{1}}^v \leq \begin{cases}
(1-v)^{-1}\paren{t+1}^{1-v}, & \quad v\in[0,1),\\
        2(ed)^{-1}(t+1)^d, & \quad v= 1,\\
        v(v-1)^{-1}, & \quad v> 1.
\end{cases}
\end{equation*}
\end{enumerate}  
\end{proof}


\subsection{Proof of Theorem \ref{Thm:GD}}

We continue with a Lemma, providing a formula for the $t-th$ GD step.

\begin{lemma}
\label{lem:GD-prelim}
Let $j \in \mbn$ and $\lam_j >0$. For any $t \geq 1$ we have 
\begin{equation*}
\gto-\fp=(I-\eta\tp)^t(g_1-\fp)-\eta \sum\limits_{j=1}^t  (I-\eta\tp)^{t-j} \bp^* v_j
      - \eta\sum\limits_{j=1}^t  (I-\eta\tp)^{t-j}  w_j\;,
\end{equation*}

where 
\begin{align*}
v_j &= R_{g_j}^*\ip\brac{r(g_j)+(\lp+\la_j I)^{-1} \brac{\paren{\lx-\lp}\paren{A(g_j)-A(\fp)}-\sx^*\VE}}\;,\\
w_j &=  \lam_jA'(g_j)^*(\lp+\la_j I)^{-1}\brac{ \paren{\lx-\lp}\paren{A(g_j)-A(\fp)} -\sx^*\VE} \;.
\end{align*}
\end{lemma}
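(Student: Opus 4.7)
The plan is to prove the formula by unrolling the GD iteration one step at a time. The claim is equivalent, by a simple induction on $t$ (or equivalently by telescoping), to the single-step identity
\begin{equation*}
g_{j+1} - \fp = (I - \eta \tp)(g_j - \fp) - \eta \bp^* v_j - \eta w_j
\end{equation*}
for each $j \geq 1$. Applying this identity iteratively, so that $(I-\eta \tp)^{t-j}$ propagates the residual committed at step $j$ through the remaining iterations, directly produces the stated formula.

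To establish the one-step identity, I would rewrite the GD update \eqref{Eq:GD-iterate} as $g_{j+1} - \fp = (g_j - \fp) - \eta A'(g_j)^* \sx^*(\hat F(g_j) - \yy)$, add and subtract $\eta \tp(g_j - \fp)$, and reduce the task to the algebraic identity $A'(g_j)^* \sx^*(\hat F(g_j) - \yy) - \tp(g_j - \fp) = \bp^* v_j + w_j$. On the left-hand side I would use three ingredients: the Taylor expansion $A(g_j) - A(\fp) = A'(\fp)(g_j - \fp) + r(g_j)$ from Lemma \ref{lem:taylor}; the identity $\hat F(g_j) - \yy = \sx(A(g_j) - A(\fp)) + \VE$ relating the empirical residual to the noise vector $\VE$ (Definition \ref{def:norms}); and the splitting $\lx = \lp + (\lx - \lp)$ that isolates population and empirical contributions.

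The algebraic heart of the argument is the Range Invariance Assumption \ref{ass:A}(ii), which after taking adjoints yields $A'(g_j)^* \ip^* = \bp^* R_{g_j}^*$ and hence $A'(g_j)^* \lp = \bp^* R_{g_j}^* \ip$. This lets me pull out a factor $\bp^*$ from any term containing $A'(g_j)^* \lp$. To treat the remaining terms in which $\lp$ does not explicitly appear (namely $A'(g_j)^* \sx^* \VE$ and $A'(g_j)^*(\lx - \lp)(A(g_j) - A(\fp))$), I would apply the resolvent identity
\begin{equation*}
A'(g_j)^* = [A'(g_j)^* \lp + \la_j A'(g_j)^*](\lp + \la_j I)^{-1} = [\bp^* R_{g_j}^* \ip + \la_j A'(g_j)^*](\lp + \la_j I)^{-1},
\end{equation*}
which naturally splits each such term into a $\bp^*$-piece, absorbed into $v_j$ via $(\lp+\la_j I)^{-1}$, and a $\la_j A'(g_j)^*(\lp+\la_j I)^{-1}$-piece, absorbed into $w_j$.

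The main obstacle I expect is the careful bookkeeping needed to match all cross-terms and signs. In particular, the perturbative term $[A'(g_j)^* - A'(\fp)^*] \lp A'(\fp)(g_j - \fp)$ arising from replacing $A'(g_j)^* \lp A'(\fp)$ by $\tp$ must combine, via range invariance together with the Taylor identity $A'(\fp)(g_j - \fp) = A(g_j) - A(\fp) - r(g_j)$, with $A'(g_j)^* \lp r(g_j)$ in just the right way to produce the $\bp^* R_{g_j}^* \ip r(g_j)$ term implicit in $\bp^* v_j$. Once this reorganization is verified, the one-step identity is established and the general formula follows by routine induction.
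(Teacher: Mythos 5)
Your approach is the same as the paper's: unroll one GD step, use the Taylor expansion, the range-invariance identity $A'(g_j)^*\lp = \bp^* R_{g_j}^*\ip$, and the resolvent split $I = \lp(\lp+\la_j I)^{-1} + \la_j(\lp+\la_j I)^{-1}$ to sort everything into a $\bp^*(\cdot)$ piece and a $\la_j A'(g_j)^*(\lp+\la_j I)^{-1}(\cdot)$ piece, then iterate. That is exactly the route taken in the paper.

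However, the "careful bookkeeping" you defer in your last paragraph is precisely where the computation does not close as you conjecture, and it is worth seeing why. Replacing $A'(g_j)^*\lp A'(\fp)(g_j-\fp)$ by $\tp(g_j-\fp)$ produces the leftover
\[
\bigl[A'(g_j)^*-A'(\fp)^*\bigr]\lp A'(\fp)(g_j-\fp) \;=\; \bp^*(R_{g_j}^*-I)\bp(g_j-\fp)\;,
\]
while the Taylor remainder contributes $A'(g_j)^*\lp\, r(g_j) = \bp^* R_{g_j}^*\ip\, r(g_j)$. You suggest these two should merge, via the identity $A'(\fp)(g_j-\fp) = A(g_j)-A(\fp)-r(g_j)$, into the single summand $\bp^* R_{g_j}^*\ip\, r(g_j)$ that appears in $\bp^* v_j$. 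They do not: substituting that identity turns the first leftover into $(R_{g_j}^*-I)\ip[A(g_j)-A(\fp)] - (R_{g_j}^*-I)\ip\, r(g_j)$, and after adding $R_{g_j}^*\ip\, r(g_j)$ you get $(R_{g_j}^*-I)\ip[A(g_j)-A(\fp)] + \ip\, r(g_j)$, which is not of the advertised form. The two pieces remain genuinely distinct. Indeed, the paper's own proof of this lemma carries the extra summand $(R_{g_j}^*-I)\bp(g_j-\fp)$ inside $v_j$, and the subsequent bound on $\norm{v_j}$ in the proof of Theorem \ref{Thm:GD} estimates that term separately; the lemma statement as printed, which omits it, appears to be a typographical slip. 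So: your method is right, but you should carry the extra term $\bp^*(R_{g_j}^*-I)\bp(g_j-\fp)$ explicitly in $\bp^*v_j$ rather than trying to absorb it, and the one-step identity then verifies cleanly.
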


\vspace{0.3cm}

\begin{proof}[Proof of Lemma \ref{lem:GD-prelim}]
For the error term $e_t=g_t-\fp$, we have from the definition of the Gradient Descent algorithm ~\eqref{Eq:GD-iterate} 
and by denoting $\VE=\yy-\hat F(\fp)$
\begin{align*}
    e_{t+1}&= \ek -\eta \hat B^*_{g_t} \paren{ \hat F(\gt)-\yy} \\
&=  \ek -\eta A'(\gt)^*\lp\brac{ A(\gt)-A(\fp)} -\eta A'(\gt)^*\brac{ \paren{\lx-\lp}\paren{A(\gt)-A(\fp)} -\sx^*\VE}\,. 
\end{align*}
Let $\lam_t >0$. Denoting 
\begin{align*}
r(\gt)=&A(\gt)-A(\fp)-A'(\fp)(\gt-\fp),\\ 
v_t=&R_{\gt}^*\ip\brac{r(\gt)+(\lp+\la_t I)^{-1} \brac{\paren{\lx-\lp}\paren{A(\gt)-A(\fp)}-\sx^*\VE}}\\
&+(R_{\gt}^*-I)\bp(\gt-\fp)\\ \intertext{and} 
w_t=& \lam_t A'(\gt)^*(\lp+\la_t I)^{-1}\brac{ \paren{\lx-\lp}\paren{A(\gt)-A(\fp)} -\sx^*\VE} \;,  
\end{align*}
we obtain by using Assumption~\ref{ass:A} (iii) the following recursion:
\begin{align*}
e_{t+1} &=   (I-\eta \tp)\ek -\eta \bp^* v_t -  \eta w_t \;.
\end{align*}
Inductively, we obtain
\begin{equation*}
e_{t+1}=(I-\eta\tp)^te_1-\eta \sum\limits_{j=1}^t  (I-\eta\tp)^{t-j} \bp^* v_j-  \eta\sum\limits_{j=1}^t  (I-\eta\tp)^{t-j}  w_j\;, 
\end{equation*}
proving our result. 
\end{proof}


\vspace{0.3cm}

\begin{proof}[Proof of Theorem \ref{Thm:GD}] 
The proof is divided into two main steps: First, we derive an upper bound for the reconstruction and prediction error, respectively. 
In the second step, we prove the main result by induction. 

\vspace{0.3cm}

{\bf Step 1: Deriving upper bound.} We set $\et=\gt-\fp$. From the source condition Assumption \ref{ass:source}, 
we have $e_1=\fp - g_1 = \tp^r g$. Applying Lemma \ref{lem:GD-prelim} we get 
\begin{align}
\norm{\tp^u \eto}_{\HH_1} 
&\leq  \norm{\tp^u(I-\eta\tp)^t\tp^r g}_{\HH_1} 
+ \eta \sum\limits_{j=1}^t  \norm{\tp^u(I-\eta\tp)^{t-j}\bp^*} \cdot  \norm{v_j}_{\LL} \nonumber \\  
&+  \eta \sum\limits_{j=1}^t  \norm{\tp^u(I-\eta\tp)^{t-j}} \cdot  \norm{w_j}_{\HH_1},
\end{align}
where $v_j$ and $w_j$ are defined in Lemma \ref{lem:GD-prelim}. 

By Lemma D.6 in \cite{Nguyen2024}, 
for $u>0$, we obtain for $\eta\leq 1/\kao^2$ and the notation \eqref{phi}:
\begin{equation}
\label{phi.bound}
    \norm{(\eta\tp)^u(I-\eta \tp)^j} \leq \sup\limits_{x\in[0,1]}x^u(1-x)^j\leq \paren{\frac{u}{u+j}}^u = \phi_j^u.
\end{equation}

This gives 
\begin{align}
\label{eq:main-GD}
\norm{\tp^u \eto}_{\HH_1} 
&\leq \underbrace{D \eta^{-(u+r)}\phi_t^{u+r}}_{T_1} + 
      \underbrace{\eta^{\frac{1}{2}-u} \sum\limits_{j=1}^t  \phi_{t-j}^{u+\frac{1}{2}}   \norm{v_j}_{\LL}}_{T_2}  + 
      \underbrace{ \eta^{1-u} \sum\limits_{j=1}^t  \phi_{t-j}^{u}  \norm{w_j}_{\HH_1} }_{T_3}\;.
\end{align}

In what follows, we bound the 
individual parts $T_2$ and $T_3$ separately to simplify 
the proof. 

\vspace{0.3cm}



{\bf Bounding $T_2$.}  By the definition of $v_j$ we obtain
\begin{align*}
\norm{v_j}_{\LL} &\leq \norm{R_{g_j}^*\ip r(g_j)} + \norm{R_{g_j}^*\ip (\lp+\la_t I)^{-1} (\lx-\lp)(A(g_j) - A(\fp))} \nonumber \\
&+ \norm{R_{g_j}^*\ip (\lp+\la_t I)^{-1}\sx^*\VE } + \norm{(R_{g_j}^*-I)\bp(g_j-\fp)} \;.
\end{align*}
By Assumption \ref{ass:A}, applying Lemma \ref{lem:taylor}, by \eqref{eq:lipschitz3} and invoking the notation from Definition \ref{def:norms}, we find 
\begin{align*}
\norm{v_j}_{\LL} &\leq \frac{\tilde C_R \cdot C_R}{2} \cdot \norm{g_j-\fp}_{\HH_1} \cdot \norm{\bp\paren{g_j-\fp}}_{L^2}\nonumber \\
 &+  \tilde C_R\; \Psi_{\xx}(\la_t) \; ||A(g_j) - A(\fp)|| + \tilde C_R \; \Theta_{\zz}(\la_t) + 
C_R \; ||g_j-\fp||_{\cH_1} \cdot \norm{\bp\paren{g_j-\fp}}_{L^2} \nonumber \\ 
&\leq \frac{\tilde C_R \cdot C_R}{2} \cdot \norm{g_j-\fp}_{\HH_1} \cdot \norm{\bp\paren{g_j-\fp}}_{L^2}\nonumber \\
 &+  \frac{\tilde C_R \cdot \lip }{1-\alpha}\cdot \Psi_{\xx}(\la_t) \cdot ||g_j-\fp||_{\cH_1} + \tilde C_R \; \Theta_{\zz}(\la_t) + 
C_R \; ||g_j-\fp||_{\cH_1} \cdot \norm{\bp\paren{g_j-\fp}}_{L^2} \;.
\end{align*}
Hence, setting 
\[ a_j := ||g_j-\fp||_{\cH_1} \;, \quad b_j:=\norm{\bp\paren{g_j-\fp}}_{L^2} \]
yields 
\begin{align}
\label{eq:T2-SGD}
T_2 &\leq C^{(1)}_{\alpha, \lip} \; \eta^{\frac{1}{2}-u} \sum\limits_{j=1}^t  \phi_{t-j}^{u+\frac{1}{2}} \left( a_j\cdot b_j 
+   \Psi_{\xx}(\la_t) \cdot a_j + \Theta_{\zz}(\la_t)  \right) \;, 
\end{align}
where 
\[ C^{(1)}_{\alpha, \lip} =  \max\left\{  \frac{\tilde C_R \cdot C_R}{2} ,  \frac{\tilde C_R \cdot \lip }{1-\alpha},  \tilde C_R , C_R \right\}  \;. \] 

\vspace{0.2cm}

{\bf Bounding $T_3$.}  By the definition of $w_j$ we obtain by Assumption \ref{ass:Frechet}, by \eqref{eq:lipschitz3}
and Definition \ref{def:norms} 
\begin{align*}
\norm{w_j}_{\cH_1} 
&=  \la_t    \norm{ A'(g_j)^*(\lp+\la_t I)^{-1}\brac{ \paren{\lx-\lp}\paren{A(g_j)-A(\fp)} -\sx^*\VE } }_{\cH_1} \nonumber\\
&\leq \la_t \norm{ A'(g_j)^*(\lp+\la_t I)^{-1} \paren{\lx-\lp} \cdot \paren{A(g_j)-A(\fp)}}_{\cH_1} + 
     \la_t  \norm{A'(g_j)^*(\lp+\la_t I)^{-1} \sx^*\VE  }_{\cH_1} \nonumber\\
&\leq \la_t \norm{A'(g_j)^*(\lp+\la_t I)^{-\frac{1}{2}}} \cdot \norm{(\lp+\la_t I)^{-\frac{1}{2}} \paren{\lx-\lp}} \cdot ||A(g_j)-A(\fp)||_{\cH_2} \nonumber \\ 
& + \;\; \;\la_t \norm{A'(g_j)^*(\lp+\la_t I)^{-\frac{1}{2}}} \cdot \norm{(\lp+\la_t I)^{-\frac{1}{2}} \sx^*\VE  }_{\cH_1} \nonumber\\
&\leq  \lip \cdot \sqrt{\la_t} \cdot \Psi_{\xx}(\la_t) \cdot ||A(g_j)-A(\fp)||_{\cH_2} +  \lip \cdot \sqrt{\la_t} \cdot \Theta_{\zz}(\la_t) \nonumber \\
&\leq  \frac{\lip^2\sqrt{\la_t} }{1-\alpha}  \cdot \Psi_{\xx}(\la_t) \cdot ||g_j-\fp||_{\cH_1} +  \lip \cdot \sqrt{\la_t}\cdot \Theta_{\zz}(\la_t)  \;.
\end{align*}

Hence, 
\begin{align}
\label{eq:T3-SGD}
T_3 &\leq  C^{(2)}_{\alpha, \lip} \; \eta^{1-u}  \; 
 \sum\limits_{j=1}^t  \phi_{t-j}^{u} \sqrt{\la_t} \cdot \left(  \Psi_{\xx}(\la_t) \cdot a_j 
+  \Theta_{\zz}(\la_t) \right)  \;,
\end{align}
where we set 
\[ C^{(2)}_{\alpha, \lip} = \max\left\{ \frac{\lip^2 }{(1-\alpha)} , \lip \right\}  \;. \]

\vspace{0.2cm}

Combining \eqref{eq:main-GD}, \eqref{eq:T2-SGD}, and \eqref{eq:T3-SGD}  yields 
\begin{align}
\label{eq:main-SGD2}
\norm{\tp^u \eto}_{\HH_1} 
&\leq D \eta^{-(u+r)}\phi_t^{u+r} + 
C^{(1)}_{\alpha, \lip} \; \eta^{\frac{1}{2}-u} \sum\limits_{j=1}^t  \phi_{t-j}^{u+\frac{1}{2}} \left( a_j\cdot b_j 
+   \Psi_{\xx}(\la_t) \cdot a_j + \Theta_{\zz}(\la_t) \right)\nonumber  \\
&+ \;\;\;  C^{(2)}_{\alpha, \lip} \; \eta^{1-u}  \; \sum\limits_{j=1}^t  \phi_{t-j}^{u} \sqrt{\la_t} \cdot \left(  \Psi_{\xx}(\la_t) \cdot a_j 
+  \Theta_{\zz}(\la_t) \right) \nonumber \\
&=  D \eta^{-(u+r)}\phi_t^{u+r} +  \sum_{i=1}^3 S_i(u,t)\;,
\end{align}
where we set
\begin{align*}
S_1(u,t)&:= C^{(1)}_{\alpha, \lip} \; \eta^{\frac{1}{2}-u}  \sum\limits_{j=1}^t  \phi_{t-j}^{u+\frac{1}{2}} \; a_j\cdot b_j  \;,\\
S_2(u,t)&:=  C^{(1)}_{\alpha, \lip} \; \eta^{\frac{1}{2}-u} \;  \sum\limits_{j=1}^t  \phi_{t-j}^{u+\frac{1}{2}}  \; \paren{\Psi_{\xx}(\la_t) \cdot a_j 
+  \Theta_{\zz}(\la_t) } \;,\\
S_3(u,t)&:=  C^{(2)}_{\alpha, \lip} \; \eta^{1-u}   \sum\limits_{j=1}^t  \phi_{t-j}^{u} \; \; \sqrt{\la_t} \paren{\Psi_{\xx}(\la_t) \cdot a_j 
+  \Theta_{\zz}(\la_t) } \;.
\end{align*}
Let $u \in \{0, 1/2\}$, set $A_t := \sup_{j=1,...,t} a_j$ and $\la_t= \frac{1}{\eta (t+1)}$ and suppose that Assumption \ref{ass:eff.bound} 
holds true. From Assumption~\ref{ass:eta}, we have $\eta < \frac{1}{\kao^2}$. We apply 
Lemma \ref{Lemma:sum.phi}  and Lemma \ref{cor:prop-bounds} to 
bound the individual terms each with probability at least $1-\delta/4$ by 
\begin{align*}
S_2(s,t)
&\leq  C^{(1)}_{\alpha, \lip} \; \eta^{\frac{1}{2}-u}\;
         \sum\limits_{j=1}^t  \phi_{t-j}^{u+\frac{1}{2}} \; \paren{\Psi_{\xx}\left(\la_t\right)\cdot a_j+\Theta_{\zz}\left(\la_t \right)} \\
&\leq C^{(1)}_{\alpha, \lip} \;  \paren{C_{\ka} A_t+C_{\ka, M, \Sigma}} \; \eta^{\frac{1}{2}-u} \; \frac{1}{\sqrt{n\la_t^{\nu}}}   \;
     \log\left(\frac{6}{\delta}\right)\; 
     \sum\limits_{j=1}^t \phi_{t-j}^{u+\frac{1}{2}}  \;  \\
&\leq 9 \; C^{(1)}_{\alpha, \lip} \;  \paren{C_{\ka} A_t+C_{\ka, M, \Sigma}} \;  (\eta (t+1))^{\frac{1}{2}-u}\;  \frac{1}{\sqrt{n\la_t^{\nu}}} \;
        \; \log\left(\frac{6}{\delta}\right)  \\
&\leq 9 \; C^{(1)}_{\alpha, \lip} \;  C_{\nu}\paren{C_{\ka} A_t+C_{\ka, M, \Sigma}} \; 
        (\eta(t+1))^{\frac{1}{2}+\frac{\nu}{2}-u}\; \frac{\log\left(\frac{6}{\delta}\right)}{\sqrt n} \;,\\
& \\
S_3(u,t)
&\leq C^{(2)}_{\alpha, \lip} \;  \paren{C_{\ka} A_t+C_{\ka, M, \Sigma}} \; \eta^{1-u} \; \sqrt{\frac{\la_t^{1-\nu}}{ n}}  \;
     \log\left(\frac{6}{\delta}\right)\; 
     \sum\limits_{j=1}^t \phi_{t-j}^{u}  \;  \\
&\leq 2 \; C^{(2)}_{\alpha, \lip} \;  \paren{C_{\ka} A_t+C_{\ka, M, \Sigma}} \;   (\eta(t+1))^{1-u} \; \sqrt{\frac{\la_t^{1-\nu}}{ n}}  \;
     \log\left(\frac{6}{\delta}\right)\;  \\
&\leq 2 \; C^{(2)}_{\alpha, \lip} \;  C_{\nu}\paren{C_{\ka} A_t+C_{\ka, M, \Sigma}} \;  (\eta (t+1))^{\frac{1}{2}+ \frac{\nu}{2}-u}
\; \frac{\log\left(\frac{6}{\delta}\right)}{\sqrt n}\;.
\end{align*}

Using these bounds and \eqref{eq:main-SGD2} we arrive at 
\begin{align}
\label{eq:main-SGD3}
\norm{\tp^u \eto}_{\HH_1} 
&\leq D \eta^{-(u+r)}\phi_t^{u+r} + C_\diamond \cdot F_u(t) \cdot\frac{\log\left(\frac{6}{\delta}\right)}{\sqrt n}  
+ C^{(1)}_{\alpha, \lip} \; \eta^{\frac{1}{2}-u}  \sum\limits_{j=1}^t  \phi_{t-j}^{u+\frac{1}{2}} \; a_j\cdot b_j  \;,
\end{align}
where for $u \in \{0, 1/2\}$ 
\[ F_u(t):= \max\{1, A_t\} \cdot (\eta (t+1))^{\frac{1}{2}+ \frac{\nu}{2}-u} \]
and 
\[ C_\diamond := 9 \; C_{\nu}\cdot \left( C_{\ka} +  C_{\ka, M, \Sigma}\right) \cdot \paren{C^{(1)}_{\alpha, \lip}  +  C^{(2)}_{\alpha, \lip}}
      \;. \]

\vspace{0.3cm}

{\bf Step 2: Proof by induction.} In what follows, under the given assumptions, we prove the following {\bf claim}: For any $n \in \mbn$ sufficiently large and for $u \in \{0,1/2\}$, 
there exists an early stopping time $T_{n} \in \mbn$ that is increasing in $n \in \mbn$, such that with probability at least $1-\delta$ 
\begin{equation}
\label{eq:to-show-GD} 
 \norm{\tp^u e_{t}}_{\HH_1} \leq C_{\star }\; (\eta t)^{-(r + u)} \; \log\left(\frac{6}{\delta}\right)\;,
\end{equation} 
for all $t \leq T_n$ and for some $ C_{\star } >0$, depending on $u$, $\eta$, $M$, $\Sigma$, $\kappa_0$, $\lip$, $\alpha$, $C_R$, $\widetilde C_R$, $d$, $D$,  $C_\nu$.

Further, we prove that 
\begin{equation}\label{eq:to-show-At-GD}
    A_t\leq d,
\end{equation}
i.e., all the iterates up to $t$ lie in a ball of radius $d$ around $\fp$. We prove this claim by induction over $t \in [T_n]$.

\vspace{0.2cm}

{\bf Base Case (Initial Step):} By Assumption \ref{ass:source} and our assumption 
\begin{equation}\label{d.assump}
C_{\star} \eta^{-r} \log \frac{6}{\delta} \leq d,    
\end{equation}
we have for $\delta \in [\delta_0 , 1]$:  
\begin{align*}
\norm{g_1 - \fp}_{\HH_1} 
&= \norm{T^rg}_{\HH_1} 
\leq D \cdot \kao^{2r} 
\leq C_{\star }\; \eta ^{-r } \; \log\left(\frac{6}{\delta}\right) \leq d\;,
\end{align*} 
with $C_{\star }= \eta^r \cdot D \cdot \kao^{2r} $. 
Similarly, 
\begin{align*}
\norm{ \tp^{\frac{1}{2}}(g_1 - \fp)}_{\HH_1} &\leq \kao \norm{T^rg}_{\HH_1} \leq D \cdot \kao^{2r+1} \leq C_{\star }\; \eta ^{-(r+\frac{1}{2}) } \; \log\left(\frac{6}{\delta}\right)\;,
\end{align*} 
with $C_{\star }= \eta^{r+\frac{1}{2}} \cdot D \cdot \kao^{2r+1} $.

\vspace{0.2cm}

{\bf Inductive Step:} 
Assume now that \eqref{eq:to-show-GD} and \eqref{eq:to-show-At-GD} hold for $t$. By \eqref{eq:main-SGD3} we have
\begin{align*} 
 \norm{\tp^u \eto}_{\HH_1} 
&\leq \underbrace{D \eta^{-(u+r)}\phi_t^{u+r}}_{\tilde T_1} + 
      \underbrace{  C_\diamond \cdot F_u(t) \cdot\frac{\log\left(\frac{6}{\delta}\right)}{\sqrt n} }_{\tilde T_2}
+     \underbrace{C^{(1)}_{\alpha, \lip} \; \eta^{\frac{1}{2}-u}  
      \sum\limits_{j=1}^t  \phi_{t-j}^{u+\frac{1}{2}} \; a_j\cdot b_j}_{\tilde T_3}  \;.
\end{align*}
We are going to bound each term individually. From Lemma \ref{Lemma:sum.phi}  we immediately obtain for any 
$r>0$, $u \in \{0, 1/2\}$ the bound 
\begin{align}
\label{eq:T1-SGD}
\tilde T_1 &= D\eta^{-(u+r)} \phi_t^{u+r} \leq D \eta^{-(u+r)} \cdot \paren{\frac{u+r}{u+r+t}}^{u+r} 
\leq D  \; (\eta \cdot (t+1))^{-(u+r)}\;.
\end{align}

\vspace{0.2cm}

For the second term, note that 
\[  \frac{1}{\sqrt n} (\eta (t+1))^{\frac{1}{2}+ \frac{\nu}{2}-u} \leq (\eta (t+1))^{-(u+r)} \;\; 
   \iff \;\;  \eta (t+1) \leq n^{\frac{1}{2r+\nu+1}} \;.   \]
Hence, by setting $T_n := \floor{\eta^{-1} \;  n^{\frac{1}{2r+\nu+1}}}$, we obtain for all $t \in [T_n -1]$
\begin{align}
\label{eq:tilde-t2-SGD}
\tilde T_2 
&\leq C_\diamond \cdot \max\{1, A_t\}\cdot  (\eta (t+1))^{-(u+r)} \cdot \log\left(\frac{6}{\delta}\right) \nonumber \\
&\leq C_\diamond \cdot \max\{1, d\} \cdot  (\eta (t+1))^{-(u+r)} \cdot \log\left(\frac{6}{\delta}\right) \;.
\end{align}

\vspace{0.2cm}

For bounding the third term, we apply \eqref{eq:to-show-GD} Lemma \ref{Lemma:sum.phi} and our assumption on $d$ \eqref{d.assump} to find 
\begin{align}
\label{eq:tilde-t3}
\tilde T_3 
&\leq C_\star^2 C^{(1)}_{\alpha, \lip} \; \eta^{-2r-u}  \; \log^2\left(\frac{6}{\delta}\right) \; 
      \sum\limits_{j=1}^t \left( \frac{u+\frac{1}{2}}{u+\frac{1}{2}+t-j}\right)^{u+\frac{1}{2}} \; \left( \frac{1}{j} \right)^{2r +\frac{1}{2}} \nonumber \\
&\leq c'_{r,u}\; C_\star^2 C^{(1)}_{\alpha, \lip} \; \eta^{-r}  \; \log^2\left(\frac{6}{\delta}\right) \; 
\left(\frac{1}{\eta (t+1)} \right)^{r+u} \nonumber    \\
&\leq c'_{r,u}\; C_\star\; C^{(1)}_{\alpha, \lip} \; d\;  \log\left(\frac{6}{\delta}\right) \; 
\left(\frac{1}{\eta (t+1)} \right)^{r+u} \,,
\end{align}
for some $c'_{r,u} >0$. Combining \eqref{eq:T1-SGD}, \eqref{eq:tilde-t2-SGD} and \eqref{eq:tilde-t3} give for all $t \in [T_n -1]$ with probability at least $1-\delta$ 
\begin{align*} 
 \norm{\tp^u \eto}_{\HH_1} &\leq  \;  C_\star \cdot \log\left(\frac{6}{\delta}\right) \; 
\left(\frac{1}{\eta (t+1)} \right)^{r+u}\;,
\end{align*}
with 
\[   C_\star = \max\{\eta^{r+u}  D \kao^{2(r+u)},  D   +  C_\diamond \cdot \max\{1, d\} + c'_{r,u}\; C_\star\; C^{(1)}_{\alpha, \lip} \}  \;. \]

In particular, by the initial step and our inductive assumption, we find 
\begin{align*}
A_{t+1} &=\max_{j=1,..., t+1}||e_j||_{\cH_1} \\
&= \max\left\{ A_t , ||e_{t+1}||_{\cH_1} \right\} \\
&\leq \max\left\{d,  C_\star \cdot \log\left(\frac{6}{\delta}\right) \; 
\left(\frac{1}{\eta (t+1)} \right)^{r} \right\}\\
&\leq  \max\left\{d,  C_\star \cdot \log\left(\frac{6}{\delta}\right) \; 
\left(\frac{1}{\eta } \right)^{r} \right\}\\
&= d, 
\end{align*}
for all $t \in [T_n-1]$ and $\delta \in [\delta_0 , 1]$, with $\delta_0 = 6e^{-\frac{d}{C_\star}\eta^r}$.
\end{proof}


\section{Proofs Stochastic Gradient Descent}
\label{app:proofs-SGD}



\subsection{Preparatory Results}

To prove the main result we follow a classical bias-variance decomposition 

\begin{equation*}
\et :=\ft-\fp=\zt  + \EE[\et], 
\end{equation*}
where 
\begin{equation*}
\EE[\et]=\EE[\ft]-\fp
\end{equation*}
is the bias and 
\begin{equation*}
 \zt=\ft-\EE[\ft]
\end{equation*}
is the variance. 

We bound the bias and variance in Proposition~\ref{bias.bound} and Proposition~\ref{variance.bound}, respectively. 
The convergence rates of the SGD iterate~\eqref{fk.mini} are discussed in Theorem~\ref{Thm:SGD.mini}.

\vspace{0.2cm}

We begin with a Lemma that gives an unrolled expression for the SGD iterates.

\begin{lemma}
\label{lem:SGD-prelim}
Let $j \in \mbn$ and $\la_t >0$. For any $t \geq 1$ we have 
\begin{equation*}
\EE\sbrac{\fto-\fp}=(I-\eta\tp)^t (f_1-\fp)-\eta \sum\limits_{j=1}^t  (I-\eta\tp)^{t-j} \bp^* \EE\sbrac{v_j}-\eta\sum\limits_{j=1}^t  (I-\eta\tp)^{t-j} \EE\sbrac{w_j},
\end{equation*}

where 
\begin{align*}
v_j &= R_{f_j}^*\ip\brac{r(f_j)+(\lp+\la_t I)^{-1} \brac{\paren{\lx-\lp}\paren{A(f_j)-A(\fp)}-\sx^*\VE}}+(R_{f_j}^*-I)\bp(f_j-\fp)\;,\\
w_j &=  \la_t A'(f_j)^*(\lp+\la_t I)^{-1}\brac{ \paren{\lx-\lp}\paren{A(f_j)-A(\fp)} -\sx^*\VE} \;.
\end{align*}
\end{lemma}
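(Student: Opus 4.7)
The plan is to reduce the SGD recursion to the GD recursion in conditional expectation over the mini-batch at each step, and then to reuse the algebraic identity already derived for GD in Lemma~\ref{lem:GD-prelim}. The starting point is to verify that, conditional on $\ft$, the mini-batch stochastic gradient is unbiased for the full empirical gradient: since the indices $j_{b(t-1)+1},\ldots,j_{bt}$ are i.i.d.\ uniform on $[n]$ and independent of $\ft$, I would check that
\[
\EE\!\left[\frac{1}{b}\sum_{i=b(t-1)+1}^{bt}(S_{x_{j_i}}\circ A'(\ft))^{*}\!\left((S_{x_{j_i}}\circ A)(\ft)-y_{j_i}\right)\,\Big|\,\ft\right] = \hat B_{\ft}^{\,*}\!\left(\hat F(\ft)-\yy\right).
\]
Consequently $\EE[\fto\mid \ft]$ is exactly the one-step GD update applied to $\ft$, so the nonlinear analysis of the bias can be transferred from Lemma~\ref{lem:GD-prelim} verbatim.

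With this identification in hand, I would then apply to the conditional recursion the same sequence of manipulations used in the proof of Lemma~\ref{lem:GD-prelim}: the Taylor decomposition $A(\ft)=A(\fp)+A'(\fp)\et+r(\ft)$, the range-invariance identity $\ip A'(\ft)=R_{\ft}\bp$ supplied by Assumption~\ref{ass:A}(ii), and the operator splitting $I = \lp(\lp+\la_t I)^{-1} + \la_t(\lp+\la_t I)^{-1}$, which separates the part to be absorbed into the tangent operator $\tp$ from the parts collected into $v_t$ and $w_t$. This produces the one-step error identity
\[
\EE[\eto\mid\ft] = (I-\eta\tp)\et - \eta\,\bp^{*} v_t - \eta\, w_t,
\]
with $v_t,w_t$ defined exactly as in the statement. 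Because the design $(x_1,\ldots,x_n)$ is fixed and only the mini-batch draws are random for the expectation of interest, the operators $\tp,\bp^{*}$ are deterministic, so the tower property yields
\[
\EE[\eto] = (I-\eta\tp)\,\EE[\et] - \eta\,\bp^{*}\EE[v_t] - \eta\,\EE[w_t].
\]
Unrolling this linear recursion from step $t$ back to step $1$, together with the fact that $f_1$ is deterministic so $\EE[e_1]=f_1-\fp$, would give the claimed formula.

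I do not anticipate any substantial obstacle. The only step requiring genuine care is the first one, namely unpacking the mini-batch structure so that the conditional mean of the stochastic gradient cleanly collapses to the full empirical gradient $\hat B_{\ft}^{\,*}(\hat F(\ft)-\yy)$; after that, the derivation is a nearly verbatim repetition of the algebra in the proof of Lemma~\ref{lem:GD-prelim}, and all the cancellations that produce the specific forms of $v_t$ and $w_t$ are inherited directly from there.
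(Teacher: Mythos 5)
Your proposal follows the same path as the paper's proof: condition on the mini-batch draws to collapse the stochastic gradient to the full empirical gradient (exactly the one-step GD update), then run the identical algebra from Lemma~\ref{lem:GD-prelim} to obtain the one-step recursion $\EE[\eto\mid\FF_n]=(I-\eta\tp)\et-\eta\bp^*v_t-\eta w_t$, take full expectation, and unroll. The only addition you make—explicitly noting that $\tp$ and $\bp^*$ are deterministic given the design so the tower step factors cleanly—is a sound clarification of a point the paper leaves implicit.
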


\begin{proof}[Proof of Lemma \ref{lem:SGD-prelim}]
For the error term $\et=\ft-\fp$, we have from the definition of the Stochastic Gradient Descent algorithm~\eqref{fk.mini} and by denoting $\VE=\yy-\hat F(\fp)$:
\begin{equation*}
\eto= \et -\frac{\eta}{b}\sum\limits_{i=b(t-1)+1}^{bt}B_{\ft,j_i}^*\paren{F_{j_i}(\ft)-y_{j_i}},
\end{equation*}

By measurability of the SGD iterate $\ft$ with respect to the filtration $\FF_n$, we get the conditional expectation
\begin{align*}
\EE\sbrac{\eto|\FF_n}= & \et -\eta \paren{\sx A'(\ft)}^* \left((\sx \circ A)(\ft)-\yy\right).  
\end{align*}

Let $\lam_t >0$. Denoting 
\begin{align*}
r(\ft)=&A(\ft)-A(\fp)-A'(\fp)(\ft-\fp),\\ v_t=&R_{\ft}^*\ip\brac{r(\ft)+(\lp+\la_t I)^{-1} \brac{\paren{\lx-\lp}\paren{A(\ft)-A(\fp)}-\sx^*\VE}}\\
&+(R_{\ft}^*-I)\bp(\ft-\fp)\\ \intertext{and} 
w_t=&\la_t A'(\ft)^*(\lp+\la_t I)^{-1}\brac{ \paren{\lx-\lp}\paren{A(\ft)-A(\fp)} -\sx^*\VE},  
\end{align*}
we obtain by using Assumption~\ref{ass:A} (iii) the following recursion:

\begin{align*}
\EE\sbrac{\eto|\FF_n}
= &  (I-\eta \tp)\et -\eta \bp^* v_t -\eta w_t,
\end{align*}

Taking the full conditional expectation
\begin{equation*}
 \EE\sbrac{\eto}  =   (I-\eta \tp) \EE\sbrac{\et} -\eta \bp^*  \EE\sbrac{v_t} -\eta  \EE\sbrac{w_t}.
\end{equation*}

Applying the recursion repeatedly for the error $\et$, we obtain:

\begin{equation*}
\EE\sbrac{\eto}=(I-\eta\tp)^t e_1-\eta \sum\limits_{j=1}^t  (I-\eta\tp)^{t-j} \bp^* \EE\sbrac{v_j}-\eta\sum\limits_{j=1}^t  (I-\eta\tp)^{t-j} \EE\sbrac{w_j}.
\end{equation*}
\end{proof}

{\bf 

\subsubsection*{Bias Bound.} }

\begin{proposition}[Bias Bound]
\label{bias.bound}
Let Assumptions~\ref{ass:true}, \ref{ass:bounded}, \ref{ass:kernel}, \ref{ass:Frechet}, \ref{ass:eta}, \ref{ass:A}, \ref{ass:source}, \ref{ass:eff.bound} and
$\ft$ be the t-th SGD iterate \eqref{fk.mini}. 
Assume further that $u \in \{0, 1/2\}$, $\delta \in [\delta_0 , 1]$, 
where $\delta_0 = 6e^{-\frac{d}{C_\star}\eta^{\min\{r, 1/2\}}}$.
The constants $ C_{\star } \;, C^{(1)}_{\alpha, \lip} \;, \widetilde{C}_{\diamond} >0$ 
depend on  $u$, $\eta$, $\widetilde M$, $\kappa_0$, $\lip$, $\alpha$, $C_R$, $\widetilde C_R$, $d$, $D$,  $C_\nu$.   
With probability at least $1-\delta$, we have 
\[\norm{\tp^u \EE\sbrac{\eto}}_{\HH_1}
\leq \widetilde{C}_\diamond  \cdot \max\{1, A_t\} \cdot  (\eta  (t+1))^{-(u+r)}  \log\left(\frac{6}{\delta}\right)  
+ C^{(1)}_{\alpha, \lip}  \eta^{\frac{1}{2}-u}  \sum\limits_{j=1}^t \phi_{t-j}^{u+\frac{1}{2}} \EE\sbrac{\norm{e_j}^2}^{\frac{1}{2}}\EE\sbrac{\norm{\tp^{\frac{1}{2}}e_j}^2}^{\frac{1}{2}} , \]
for all $t \in [T_n]$, where  $A_t := \max_{j=1,...,t} \EE\sbrac{\norm{e_{j}}_{\HH_1}}^{\frac{1}{2}} $ and $T_n := \floor{\eta^{-1} \;  n^{\frac{1}{2\min\{r,1/2\}+\nu+1}}}$. 
In particular, $\ft \in \cB_d(\fp)\cap \cD(A)$ for all $t \in [T_n]$. The expectation is taken over the $b$-fold uniform distribution on $[n]$ at step $t+1$.
\end{proposition}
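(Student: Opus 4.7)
The plan is to mirror the structure of the proof of Theorem~\ref{Thm:GD}, replacing the deterministic GD recursion by its SGD counterpart taken in expectation. By Lemma~\ref{lem:SGD-prelim}, the bias decomposes as
\begin{equation*}
\EE[e_{t+1}] = (I-\eta T)^t(f_1-f^\dagger) - \eta \sum_{j=1}^t (I-\eta T)^{t-j} B^* \EE[v_j] - \eta \sum_{j=1}^t (I-\eta T)^{t-j} \EE[w_j],
\end{equation*}
which has exactly the same algebraic form as the analogous identity for GD in Lemma~\ref{lem:GD-prelim}. Applying $T^u$, using the triangle inequality and the spectral bound $\norm{(\eta T)^u(I-\eta T)^j} \leq \phi_j^u$ from \eqref{phi.bound}, together with the source condition $f_1 - f^\dagger = T^r g$, we obtain the three-term split
\begin{equation*}
\norm{T^u \EE[e_{t+1}]}_{\HH_1} \leq D\,\eta^{-(u+r)}\phi_t^{u+r} + \eta^{\frac{1}{2}-u}\sum_{j=1}^t \phi_{t-j}^{u+\frac{1}{2}}\,\EE\norm{v_j} + \eta^{1-u}\sum_{j=1}^t \phi_{t-j}^{u}\,\EE\norm{w_j}.
\end{equation*}

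Next I would bound $\EE\norm{v_j}$ and $\EE\norm{w_j}$ following the same decomposition used in the GD proof. For $v_j$, using Assumption~\ref{ass:A}, Lemma~\ref{lem:taylor}, and the notation from Definition~\ref{def:norms}, one obtains pointwise
\begin{equation*}
\norm{v_j} \lesssim \norm{e_j}_{\HH_1}\norm{B e_j}_{L^2} + \Psi_{\mathbf{x}}(\lambda_t)\,\norm{e_j}_{\HH_1} + \Theta_{\mathbf{z}}(\lambda_t),
\end{equation*}
and an analogous estimate holds for $\norm{w_j}$ with $\sqrt{\lambda_t}$ prefactors on the $\Psi$ and $\Theta$ terms. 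The crucial new ingredient compared with the GD proof is how the product term $\norm{e_j}\cdot\norm{B e_j}$ is handled when we take expectations: by Cauchy--Schwarz,
\begin{equation*}
\EE\bigl[\norm{e_j}_{\HH_1}\norm{B e_j}_{L^2}\bigr] \leq \EE\bigl[\norm{e_j}_{\HH_1}^2\bigr]^{1/2}\,\EE\bigl[\norm{T^{1/2}e_j}_{\HH_1}^2\bigr]^{1/2},
\end{equation*}
since $\norm{B e_j}_{L^2}^2 = \langle T e_j, e_j\rangle = \norm{T^{1/2}e_j}_{\HH_1}^2$. This produces exactly the bilinear term appearing in the statement.

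The $\Psi_{\mathbf{x}}(\lambda_t)$ and $\Theta_{\mathbf{z}}(\lambda_t)$ contributions are then controlled by a high-probability event from Lemma~\ref{cor:prop-bounds} (observing that bounded outputs imply the Bernstein condition, so the same estimates as in the GD proof apply with constants depending on $\widetilde M$ in place of $M,\Sigma$). Choosing $\lambda_t = 1/(\eta(t+1))$, summing with the help of Lemma~\ref{Lemma:sum.phi} (parts \eqref{lem.phi.1} and \eqref{lem:tx.ak}), and using $\max_{j\leq t}\EE\norm{e_j}_{\HH_1} \leq A_t^{2}$, these two contributions combine into
\begin{equation*}
\widetilde C_\diamond \cdot \max\{1, A_t\} \cdot (\eta(t+1))^{\frac{1}{2}+\frac{\nu}{2}-u}\cdot n^{-1/2}\log(6/\delta),
\end{equation*}
which, for $t \leq T_n = \lfloor \eta^{-1} n^{1/(2\min\{r,1/2\}+\nu+1)}\rfloor$, is dominated by $(\eta(t+1))^{-(u+r)}\log(6/\delta)$, yielding the first asserted term. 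Finally, the source bound $D\eta^{-(u+r)}\phi_t^{u+r}$ satisfies $\phi_t^{u+r}\lesssim (\eta(t+1))^{-(u+r)}\eta^{u+r}$ and is absorbed as well.

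The main obstacle I anticipate is establishing the last claim $f_t \in \cB_d(f^\dagger)\cap\cD(A)$ for all $t \in [T_n]$, which requires an inductive argument in parallel with the bias/variance bounds: assuming $\ft$ stays in the ball up to step $t$ allows us to invoke Assumption~\ref{ass:A} in the estimates for $v_j, w_j$ up to step $t$, and the resulting bound on $\EE\norm{e_{t+1}}$ together with the choice of $\delta_0 = 6e^{-(d/C_\star)\eta^{\min\{r,1/2\}}}$ closes the induction, exactly as in Step~2 of the proof of Theorem~\ref{Thm:GD}, but now in expectation rather than with high probability. This inductive closure is what ultimately forces the saturation in the smoothness parameter $\min\{r,1/2\}$ appearing in $T_n$ and $\delta_0$.
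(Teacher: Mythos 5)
Your plan matches the paper's proof of Proposition~\ref{bias.bound} step for step: the same unrolled expression from Lemma~\ref{lem:SGD-prelim}, the same spectral bound \eqref{phi.bound} producing a three-term split, the same Cauchy--Schwarz step $\EE[\norm{e_j}\norm{B e_j}] \le \EE[\norm{e_j}^2]^{1/2}\EE[\norm{T^{1/2}e_j}^2]^{1/2}$ (which, together with Jensen on the linear-in-$\norm{e_j}$ terms, is exactly how the paper converts the GD pointwise product into the SGD bilinear expectation), and the same treatment of $\Psi_{\mathbf{x}},\Theta_{\mathbf{z}}$ via Lemma~\ref{cor:prop-bounds} with $\lambda_t = 1/(\eta(t+1))$ and the stopping threshold $T_n$. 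Your closing remark about the inductive argument is well taken, though the paper does not carry it out inside the proof of Proposition~\ref{bias.bound}; it is deferred to the proof of Theorem~\ref{Thm:SGD.mini}, where the bias and variance bounds are invoked inside a single induction that simultaneously establishes $A_t \le d$ (and hence that Assumption~\ref{ass:A} was legitimately applied at each step).
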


\begin{proof}[Proof of Proposition \ref{bias.bound}] 
From the source condition Assumption \ref{ass:source} 
we have $e_1=\fp - g_1 = \tp^r g$. Applying Lemma \ref{lem:SGD-prelim} and the equation \eqref{phi.bound} gives 
\begin{align}\label{eq.1.bias}
\norm{\tp^u \EE\sbrac{\eto}}_{\HH_1}  \leq & \norm{\tp^u(I-\eta\tp)^t\tp^r g}_{\HH_1} + \eta \sum\limits_{j=1}^t  \norm{\tp^u(I-\eta\tp)^{t-j}\bp^*} \cdot \EE\sbrac{\norm{v_j}_{\LL}}  \nonumber \\  
&+ \eta \sum\limits_{j=1}^t  \norm{\tp^u(I-\eta\tp)^{t-j}} \cdot\EE\sbrac{\norm{w_j}}_{\HH_1} \nonumber \\  
\leq & \underbrace{D\eta^{-(u+r)}\phi_t^{u+r}}_{T_1}+ \underbrace{\eta^{\frac{1}{2}-u} \sum\limits_{j=1}^t  \phi_{t-j}^{u+\frac{1}{2}}  \EE\sbrac{\norm{v_j}_{\LL}}}_{T_2} 
+ \underbrace{\eta^{1-u} \sum\limits_{j=1}^t  \phi_{t-j}^{u} \EE\sbrac{\norm{w_j}_{\HH_1}} }_{T_3}\;,
\end{align}
where $v_j$ and $w_j$ are defined in Lemma \ref{lem:SGD-prelim}. In what follows, we bound the 
individual parts $T_2$ and $T_3$ separately to simplify 
the proof. 

\vspace{0.3cm}

{\bf Bounding $T_2$.}  By the definition of $v_j$ and invoking the notation from Definition \ref{def:norms}, we find 
\begin{equation*}
\norm{v_j}\leq  \norm{ R_{f_j}}\cdot\paren{\norm{\ip r(f_j)}_{\LL} + \Psi_{\xx}(\la_t) \norm{A(f_j)-A(\fp)}+\Theta_{\zz}(\la_t)}+\norm{R_{f_j}-I}\cdot\norm{\bp e_j}_{\LL}.
\end{equation*}
By Assumptions \ref{ass:Frechet}, \ref{ass:A}, applying Lemma \ref{lem:taylor}, and \eqref{eq:lipschitz3} we obtain
\begin{align*}
\norm{v_j} \leq &\frac{\widetilde C_R \cdot C_R}{2} \cdot \norm{f_j-\fp}_{\HH_1} \cdot \norm{\bp\paren{f_j-\fp}}_{L^2}\nonumber \\
 &+  \frac{\widetilde C_R \cdot \lip }{1-\alpha}\cdot \Psi_{\xx}(\la_t) \cdot ||f_j-\fp||_{\cH_1} + \widetilde C_R \; \Theta_{\zz}(\la_t) + 
C_R \; ||f_j-\fp||_{\cH_1} \cdot \norm{\bp\paren{f_j-\fp}}_{L^2} \;,
\end{align*}
which implies
\begin{align*}
\EE\sbrac{\norm{v_j}}\leq & C^{(1)}_{\alpha, \lip}\paren{\EE\sbrac{\norm{f_j-\fp}_{\cH_1}^2}^{\frac{1}{2}} \cdot \EE\sbrac{\norm{\bp\paren{f_j-\fp}}_{L^2}^2}^{\frac{1}{2}}  + \Psi_{\xx} (\la_t)\cdot \EE\sbrac{\norm{f_j-\fp}_{\cH_1}^2}^{\frac{1}{2}} +\Theta_{\zz}(\la_t)},
\end{align*}
where 
\[ C^{(1)}_{\alpha, \lip} =  \max\left\{  \frac{\widetilde C_R \cdot C_R}{2} ,  \frac{\widetilde C_R \cdot \lip }{1-\alpha},  \widetilde C_R , C_R \right\}  \;. \]

Hence, setting 
\[ a_j :=  \EE\sbrac{\norm{f_j-\fp}_{\cH_1}^2}\;, \quad b_j:=\EE\sbrac{\norm{\bp\paren{f_j-\fp}}_{L^2}^2} \]
yields 
\begin{align}
\label{eq:T2}
T_2 &\leq C^{(1)}_{\alpha, \lip} \; \eta^{\frac{1}{2}-u} \sum\limits_{j=1}^t  \phi_{t-j}^{u+\frac{1}{2}} \left( a_j^{\frac{1}{2}}\cdot b_j^{\frac{1}{2}} 
+   \Psi_{\xx}(\la_t) \cdot a_j^{\frac{1}{2}} + \Theta_{\zz}(\la_t)  \right) \;. 
\end{align}
 
\vspace{0.2cm}

{\bf Bounding $T_3$.}  By the definition of $w_j$ we obtain by Assumption \ref{ass:Frechet}, by \eqref{eq:lipschitz3}
and Definition \ref{def:norms} 
\begin{align*}
\norm{w_j}_{\cH_1} 
&=  \la_t    \norm{ A'(f_j)^*(\lp+\la_t I)^{-1}\brac{ \paren{\lx-\lp}\paren{A(f_j)-A(\fp)} -\sx^*\VE } }_{\cH_1} \nonumber\\
&\leq \la_t \norm{ A'(f_j)^*(\lp+\la_t I)^{-1} \paren{\lx-\lp} \cdot \paren{A(f_j)-A(\fp)}}_{\cH_1} + 
     \la_t  \norm{A'(f_j)^*(\lp+\la_t I)^{-1} \sx^*\VE  }_{\cH_1} \nonumber\\
&\leq \la_t \norm{A'(f_j)^*(\lp+\la_t I)^{-\frac{1}{2}}} \cdot \norm{(\lp+\la_t I)^{-\frac{1}{2}} \paren{\lx-\lp}} \cdot ||A(f_j)-A(\fp)||_{\cH_2} \nonumber \\ 
& + \;\; \;\la_t \norm{A'(f_j)^*(\lp+\la_t I)^{-\frac{1}{2}}} \cdot \norm{(\lp+\la_t I)^{-\frac{1}{2}} \sx^*\VE  }_{\cH_1} \nonumber\\
&\leq  \lip \cdot \sqrt{\la_t} \cdot \Psi_{\xx}(\la_t) \cdot ||A(f_j)-A(\fp)||_{\cH_2} +  \lip \cdot \sqrt{\la_t} \cdot \Theta_{\zz}(\la_t) \nonumber \\
&\leq  \frac{\lip^2\sqrt{\la_t} }{1-\alpha}  \cdot \Psi_{\xx}(\la_t) \cdot ||f_j-\fp||_{\cH_1} +  \lip \cdot \sqrt{\la_t}\cdot \Theta_{\zz}(\la_t)  \;,
\end{align*}
which implies
\begin{equation*}
\EE\sbrac{\norm{w_j}}  \leq  \frac{\lip^2\sqrt{\la_t} }{1-\alpha}  \cdot \Psi_{\xx}(\la_t) \cdot \EE\sbrac{\norm{f_j-\fp}_{\cH_1}^2}^{\frac{1}{2}} +  \lip \cdot \sqrt{\la_t}\cdot \Theta_{\zz}(\la_t)  \;.
\end{equation*}

Hence, 
\begin{align}
\label{eq:T3}
T_3 &\leq  C^{(2)}_{\alpha, \lip} \; \eta^{1-u}  \; 
 \sum\limits_{j=1}^t  \phi_{t-j}^{u} \sqrt{\la_t} \cdot \left(  \Psi_{\xx}(\la_t) \cdot a_j^{\frac{1}{2}} 
+  \Theta_{\zz}(\la_t) \right)  \;,
\end{align}
where we set 
\[ C^{(2)}_{\alpha, \lip} = \max\left\{ \frac{\lip^2 }{(1-\alpha)} , \lip \right\}  \;. \]

\vspace{0.2cm}

Combining \eqref{eq.1.bias}, \eqref{eq:T2}, and \eqref{eq:T3}  yields 
\begin{align}
\label{eq:main-GD2}
\norm{\tp^u \EE\sbrac{\eto}}_{\HH_1} 
&\leq D \eta^{-(u+r)}\phi_t^{u+r} + 
C^{(1)}_{\alpha, \lip} \; \eta^{\frac{1}{2}-u} \sum\limits_{j=1}^t  \phi_{t-j}^{u+\frac{1}{2}} \left( a_j^{\frac{1}{2}}\cdot b_j^{\frac{1}{2}} 
+   \Psi_{\xx}(\la_t) \cdot a_j^{\frac{1}{2}} + \Theta_{\zz}(\la_t) \right)\nonumber  \\
&+ \;\;\;  C^{(2)}_{\alpha, \lip} \; \eta^{1-u}  \; \sum\limits_{j=1}^t  \phi_{t-j}^{u} \sqrt{\la_t} \cdot \left(  \Psi_{\xx}(\la_t) \cdot a_j^{\frac{1}{2}} 
+  \Theta_{\zz}(\la_t) \right) \nonumber \\
&=  D \eta^{-(u+r)}\phi_t^{u+r} +  \sum_{i=1}^3 S_i(u,t)\;,
\end{align}
where we set
\begin{align*}
S_1(u,t)&:= C^{(1)}_{\alpha, \lip} \; \eta^{\frac{1}{2}-u}  \sum\limits_{j=1}^t  \phi_{t-j}^{u+\frac{1}{2}} \; a_j^{\frac{1}{2}}\cdot b_j^{\frac{1}{2}}  \;,\\
S_2(u,t)&:=  C^{(1)}_{\alpha, \lip} \; \eta^{\frac{1}{2}-u} \;  \sum\limits_{j=1}^t  \phi_{t-j}^{u+\frac{1}{2}}  \; \paren{\Psi_{\xx}(\la_t) \cdot a_j^{\frac{1}{2}} 
+  \Theta_{\zz}(\la_t) } \;,\\
S_3(u,t)&:=  C^{(2)}_{\alpha, \lip} \; \eta^{1-u}   \sum\limits_{j=1}^t  \phi_{t-j}^{u} \; \; \sqrt{\la_t} \paren{\Psi_{\xx}(\la_t) \cdot a_j^{\frac{1}{2}} 
+  \Theta_{\zz}(\la_t) } \;.
\end{align*}
Let $u \in \{0, 1/2\}$, set $A_t := \max_{j=1,...,t} a_j^{\frac{1}{2}}$ and $\la_t= \frac{1}{\eta (t+1)}$ and suppose that Assumption \ref{ass:eff.bound} 
holds true. Form Assumption \ref{ass:eta} we have that $\eta < \frac{1}{\kao^2}$. We apply 
Lemma \ref{Lemma:sum.phi}  and Lemma \ref{cor:prop-bounds} to 
bound the individual terms each with probability at least $1-\delta/4$ by 
\begin{align*}
S_2(s,t)
&\leq  C^{(1)}_{\alpha, \lip} \; \eta^{\frac{1}{2}-u}\;
         \sum\limits_{j=1}^t  \phi_{t-j}^{u+\frac{1}{2}} \; \paren{\Psi_{\xx}\left(\la_t\right)\cdot a_j^{\frac{1}{2}}+\Theta_{\zz}\left(\la_t \right)} \\
&\leq C^{(1)}_{\alpha, \lip} \;  \paren{C_{\ka} A_t+C_{\ka, M, \Sigma}} \; \eta^{\frac{1}{2}-u} \; \frac{1}{\sqrt{n\la_t^{\nu}}}   \;
     \log\left(\frac{6}{\delta}\right)\; 
     \sum\limits_{j=1}^t \phi_{t-j}^{u+\frac{1}{2}}  \;  \\  
&\leq 9 \; C^{(1)}_{\alpha, \lip} \;  \paren{C_{\ka} A_t+C_{\ka, M, \Sigma}} \;  (\eta (t+1))^{\frac{1}{2}-u}\;  \frac{1}{\sqrt{n\la_t^{\nu}}}   \;
     \log\left(\frac{6}{\delta}\right)\;  \\
&\leq 9 \; C^{(1)}_{\alpha, \lip} \;  C_{\nu}\paren{C_{\ka} A_t+C_{\ka, M, \Sigma}} \; 
        (\eta(t+1))^{\frac{1}{2}+\frac{\nu}{2}-u}\; \frac{\log\left(\frac{6}{\delta}\right)}{\sqrt n} \;,\\
& \\
S_3(u,t)
&\leq C^{(2)}_{\alpha, \lip} \;  \paren{C_{\ka} A_t+C_{\ka, M, \Sigma}} \; \eta^{1-u} \; \sqrt{\frac{\la_t^{1-\nu}}{ n}}  \;
     \log\left(\frac{6}{\delta}\right)\; 
     \sum\limits_{j=1}^t \phi_{t-j}^{u}  \;  \\
&\leq 2 \; C^{(2)}_{\alpha, \lip} \;  \paren{C_{\ka} A_t+C_{\ka, M, \Sigma}} \;   (\eta(t+1))^{1-u} \; \sqrt{\frac{\la_t^{1-\nu}}{ n}}  \;
     \log\left(\frac{6}{\delta}\right)\;  \\     
&\leq 2 \; C^{(2)}_{\alpha, \lip} \;  C_{\nu}\paren{C_{\ka} A_t+C_{\ka, M, \Sigma}} \;  (\eta (t+1))^{\frac{1}{2}+ \frac{\nu}{2}-u}
\; \frac{\log\left(\frac{6}{\delta}\right)}{\sqrt n}\;.
\end{align*}

Using these bounds and \eqref{eq:main-GD2} we arrive at 
\begin{align}\label{eq:main-GD3}
 \norm{\tp^u \EE\sbrac{\eto}}_{\HH_1} 
&\leq \underbrace{D \eta^{-(u+r)}\phi_t^{u+r}}_{\tilde T_1} + 
      \underbrace{  C_\diamond \cdot F_u(t) \cdot\frac{\log\left(\frac{6}{\delta}\right)}{\sqrt n} }_{\tilde T_2}
+     C^{(1)}_{\alpha, \lip} \; \eta^{\frac{1}{2}-u}  
      \sum\limits_{j=1}^t  \phi_{t-j}^{u+\frac{1}{2}} \; a_j^{\frac{1}{2}}\cdot b_j^{\frac{1}{2}}  \;,
\end{align}

where for $u \in \{0, 1/2\}$ 
\[ F_u(t):= \max\{1, A_t\} \cdot (\eta (t+1))^{\frac{1}{2}+ \frac{\nu}{2}-u} \]
and 
\[ C_\diamond := 9 \; C_{\nu}\cdot \left( C_{\ka} +  C_{\ka, M, \Sigma}\right) \cdot \paren{C^{(1)}_{\alpha, \lip}  +  C^{(2)}_{\alpha, \lip}}
      \;. \]

\vspace{0.3cm}

We are going to bound each term individually. 

The fact
$$\paren{\frac{u+r}{u+r+t}}^{u+r}\leq 
\begin{cases}
\paren{\frac{1}{t+1}}^{u+r}; \quad\text{for} \quad u+r\leq 1\\
\paren{\frac{u+r}{t+1}}^{u+r}; \quad\text{for} \quad  1 \leq u+r
\end{cases}$$
implies that
\begin{equation}\label{phi.s.r.bias}
\phi_t^{u+r}\leq \max\paren{1,(u+r)^{u+r}}\paren{\frac{1}{t+1}}^{u+r}.  
\end{equation}

We immediately obtain for any 
$r>0$, $u \in \{0, 1/2\}$ the bound 
\begin{align}
\label{eq:T1}
\tilde T_1 &= D\eta^{-(u+r)} \phi_t^{u+r} \leq D \eta^{-(u+r)} \cdot \paren{\frac{u+r}{u+r+t}}^{u+r} 
\leq D  \; (\eta  (t+1))^{-(u+r)}\;.
\end{align}

\vspace{0.2cm}

For the second term, note that 
\[  \frac{1}{\sqrt n} (\eta (t+1))^{\frac{1}{2}+ \frac{\nu}{2}-u} \leq (\eta (t+1))^{-(u+r)} \;\; 
   \iff \;\;  \eta (t+1) \leq n^{\frac{1}{2r+\nu+1}} \;.   \]

Hence, by setting $T_n := \floor{\eta^{-1} \;  n^{\frac{1}{2r+\nu+1}}}$, we obtain for all $t \in [T_n -1]$
\begin{align}
\label{eq:tilde-t2}
\tilde T_2 
&\leq C_\diamond \cdot \max\{1, A_t\} \cdot  (\eta (t+1))^{-(u+r)} \cdot \log\left(\frac{6}{\delta}\right) \;.
\end{align}

\vspace{0.2cm}

Using these bounds and \eqref{eq:main-GD3} we arrive at  
\begin{align}
\norm{\tp^u \EE\sbrac{\eto}}_{\HH_1}
&\leq \widetilde{C}_\diamond \cdot \max\{1, A_t\} \cdot  (\eta  (t+1))^{-(u+r)}\; \cdot \log\left(\frac{6}{\delta}\right)  
+ C^{(1)}_{\alpha, \lip} \; \eta^{\frac{1}{2}-u}  \sum\limits_{j=1}^t  \phi_{t-j}^{u+\frac{1}{2}} \; a_j^{\frac{1}{2}}\cdot b_j^{\frac{1}{2}}  \;,
\end{align}
where $\widetilde{C}_\diamond=D+C_\diamond $.
\end{proof}


\subsubsection*{Variance Bound.}

\vspace{0.2cm}

\begin{proposition}[Variance Bound]
\label{variance.bound}
Let Assumptions~\ref{ass:true}, \ref{ass:bounded}, \ref{ass:kernel}, \ref{ass:Frechet}, \ref{ass:A},  \ref{ass:eff.bound}, \ref{ass:eta} and
$\zt$ is the variance term for SGD iterate. 
Assume further that $u \in \{0, 1/2\}$,  $\delta \in (0, 1]$.
The constant $C_{\al,\lip}^{(8)}$ 
depends on  $u, C_R, \widetilde C_R , C_\nu,  d, D,\alpha, \lip, \ka, M, \Sigma$.   
With probability at least $1-\delta$, we have 
\begin{equation*}
\EE\sbrac{\norm{\tp^u \zto}^2} \leq  C_{\al,\lip}^{(8)}\cdot (1+A_t^2)^2 \cdot  (\eta(t+1))^{-2(r+u)} \log^2\paren{\frac{2}{\delta}}\brac{1+\eta^{\vartheta}\sum\limits_{j=1}^{t}\phi_{t-j}^1\EE\sbrac{\norm{e_j}^2}}, 
\end{equation*}
provided that
\begin{equation}\label{b.bound}
b\geq \max\paren{\eta (\eta t)^{2r+\nu},\eta^{\frac{2r+1}{2r+\nu+1}} (\eta t)^{2r+1}} 
\end{equation}
for all $t \in [T_n]$, where  $A_t := \max_{j=1,...,t} \EE\sbrac{\norm{e_{j}}_{\HH_1}}^{\frac{1}{2}} $ and $T_n := \floor{\eta^{-1} \;  n^{\frac{1}{2\min\{r,1/2\}+\nu+1}}}$. 
In particular, $\ft \in \cB_d(\fp)\cap \cD(A)$ for all $t \in [T_n]$. The expectation is taken over the $b$-fold uniform distribution on $[n]$ at step $t+1$.
\end{proposition}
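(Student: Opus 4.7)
The plan is to mirror the analysis of the bias in Proposition~\ref{bias.bound}, while tracking separately the mean-zero fluctuation created by the random mini-batch. Denote the mini-batch gradient direction by $\tilde G_t := b^{-1}\sum_{i=b(t-1)+1}^{bt} B_{\ft,j_i}^*(F_{j_i}(\ft) - y_{j_i})$ and the full-sample gradient by $G_t := \hat B_{\ft}^* (\hat F(\ft) - \yy)$. Letting $\mathcal{F}_{t-1}$ be the $\sigma$-algebra generated by the data and the mini-batch choices up to step $t-1$, one has $\EE\sbrac{\tilde G_t \mid \mathcal F_{t-1}} = G_t$, so that $\xi_t := \tilde G_t - G_t$ is conditionally mean-zero. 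Writing $\zto = \zt - \eta(G_t - \EE G_t) - \eta\xi_t$ and applying the linearization around $\fp$ used in Lemma~\ref{lem:SGD-prelim} to the deterministic-gradient piece produces, after unrolling with $z_1=0$, an identity of the form
\[ \zto = -\eta \sum_{j=1}^t (I-\eta\tp)^{t-j}\sbrac{\bp^*(v_j - \EE v_j) + (w_j - \EE w_j)} - \eta\sum_{j=1}^t (I-\eta\tp)^{t-j}\xi_j, \]
with $v_j, w_j$ as in Lemma~\ref{lem:SGD-prelim}.

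The novel ingredient is the mini-batch noise term. Because $\EE\sbrac{\xi_j \mid \mathcal F_{j-1}} = 0$, iterating the tower property kills every cross term in $\EE\sbrac{\norm{\tp^u\sum_j(I-\eta\tp)^{t-j}\xi_j}^2}$, leaving $\sum_j \EE\sbrac{\norm{\tp^u(I-\eta\tp)^{t-j}\xi_j}^2}$. A direct variance computation using Assumption~\ref{ass:bounded}, the feature-map bound from Assumption~\ref{ass:points}, and~\eqref{eq:lipschitz3} yields the conditional estimate $\EE\sbrac{\norm{\xi_j}_{\HH_1}^2 \mid \mathcal F_{j-1}} \lesssim b^{-1}\paren{1 + \EE\sbrac{\norm{e_j}_{\HH_1}^2}}$. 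Combining with the operator bound $\norm{\tp^u(I-\eta\tp)^{t-j}} \leq \phi_{t-j}^u \eta^{-u}$ from~\eqref{phi.bound} and applying Lemma~\ref{Lemma:sum.phi}\eqref{lem.phi.3} to the resulting $\phi^1$-sum produces precisely the residual term $\eta^\vartheta \sum_j \phi_{t-j}^1 \EE\sbrac{\norm{e_j}^2}$ that appears in the statement.

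The remaining contributions from $v_j - \EE v_j$ and $w_j - \EE w_j$ are handled exactly as the terms $T_2$ and $T_3$ in the proof of Proposition~\ref{bias.bound}: the Taylor remainder is bounded via Lemma~\ref{lem:taylor}, and the data-dependent quantities $\Psi_{\xx}(\la_t)$ and $\Theta_{\zz}(\la_t)$, evaluated at $\la_t = 1/(\eta(t+1))$, are controlled by Lemma~\ref{cor:prop-bounds} together with Assumption~\ref{ass:eff.bound}. Squaring the resulting bounds, summing, and choosing the horizon $T_n = \floor{\eta^{-1} n^{1/(2r+\nu+1)}}$ yields the target rate $(1+A_t^2)^2 (\eta(t+1))^{-2(r+u)}\log^2(2/\delta)$. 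The two lower bounds on $b$ in~\eqref{b.bound} are precisely those needed so that the two main pieces of the mini-batch variance --- one scaling like $\eta \mathcal N(\la_t)/b$, the other like $\eta \kappa_1^2/(b\la_t)$ after evaluating the $\phi$-sums --- are absorbed into this bias-matching rate at the effective regularization $\la_t = 1/(\eta t)$.

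The principal technical difficulty is that $\xi_j$ depends nonlinearly on the iterate $\ft$ through $B_{\ft,j_i}^*$ and $F_{j_i}(\ft)$, so its conditional variance unavoidably involves $\EE\sbrac{\norm{e_j}_{\HH_1}^2}$; this is what forces the statement to be recursive rather than closed. Calibrating the coupling between the residual $\sum_j \phi_{t-j}^1 \EE\sbrac{\norm{e_j}^2}$ term and the dominant $(\eta(t+1))^{-2(r+u)}$ term with the correct power $\vartheta$ in $\eta^\vartheta$, so that the induction in the proof of Theorem~\ref{Thm:SGD.mini} ultimately closes under~\eqref{b.bound}, is where the main work lies.
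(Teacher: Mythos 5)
Your high-level strategy — subtract the conditional mean, exploit the martingale structure of the mini-batch noise to kill cross terms, and bound each contribution — is the right skeleton, but two concrete steps in your sketch do not hold up, and they are precisely the points that make this proposition more than a repeat of the bias bound.

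First, the paper's recursion for $\zt$ is driven by the \emph{empirical} operator $(I-\eta\tx)$, not the population operator $(I-\eta\tp)$; subtracting the expected update from the realized one naturally leaves $\tx$ in the propagator, and the paper then pays a change-of-operator cost via $\Xi^s$ from Lemma~\ref{cor:prop-bounds}. Your identity with $(I-\eta\tp)^{t-j}$ is also algebraically valid (since $G_t=\tp\et+\bp^*v_t+w_t$ gives $G_t-\EE G_t=\tp\zt+\bp^*(v_t-\EE v_t)+(w_t-\EE w_t)$), but it forces you to control $v_j-\EE v_j$ and $w_j-\EE w_j$. Those are \emph{not} the quantities bounded as $T_2,T_3$ in Proposition~\ref{bias.bound}: there you bound $\EE\sbrac{\norm{v_j}}$ and $\EE\sbrac{\norm{w_j}}$, whereas here you need a second-moment control of the \emph{centered} fluctuation, which has no intrinsic $1/b$ factor and is coupled to $\EE\sbrac{\norm{z_j}^2}$ rather than $\EE\sbrac{\norm{e_j}^2}$. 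The claim ``handled exactly as $T_2$ and $T_3$'' is therefore not justified as stated; you would need a new argument, or you should switch to the paper's decomposition $\zto=(I-\eta\tx)\zt+\eta M_t$ where \emph{every} piece $M_{t,i}=\frac{1}{b}\sum_i(\cdots)$ is a genuine mini-batch average minus its conditional mean and carries the $1/b$.

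Second, and more seriously, the norm-level estimate $\EE\sbrac{\norm{\xi_j}_{\HH_1}^2\mid\mathcal F_{j-1}}\lesssim b^{-1}(1+\EE\sbrac{\norm{e_j}^2})$ combined with $\norm{\tp^u(I-\eta\tp)^{t-j}}\le\eta^{-u}\phi_{t-j}^u$ is too coarse to reach the stated $b$-condition~\eqref{b.bound}. For $u=0$ it produces $\eta^2\sum_j\phi_{t-j}^0\,\EE\norm{\xi_j}^2\lesssim \eta^2 t/b$, which forces $b\gtrsim \eta(\eta t)^{2r+1}$, strictly stronger than the paper's $b\ge\eta(\eta t)^{2r+\nu}$ once $\nu<1$. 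The missing ingredient is the \emph{operator-level} covariance bound $\EE\sbrac{M_{j,i}\otimes M_{j,i}}\preceq\frac{C}{b}(\cdots)\,\tx$ (Lemmas~\ref{Lem.Mi}, \ref{Lem.Mi.Mi}), which after taking $\tr\sbrac{\tp^{2u}Q_{t-j}^2\,\EE[M_{j,i}\otimes M_{j,i}]}$ pulls in the effective dimension $\mathcal N(\la)\lesssim\la^{-\nu}$ and yields the $(\eta t)^{\nu}$ gain. Without this you neither match the batch-size threshold nor produce the residual in the exact form $\eta^{\vartheta}\sum_j\phi_{t-j}^1\EE\sbrac{\norm{e_j}^2}$ with $\vartheta=\nu/(2r+\nu+1)$, which in the paper arises specifically from the $u=\tfrac12$ trace computation at $\la_t=1/(\eta(t+1))$.
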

  
\begin{proof}[Proof of Proposition \ref{variance.bound}]
By the definition of the SGD iterate $\fto$ in \eqref{fk.mini}, we have
\begin{equation*}
\fto= \ft-\frac{\eta}{b}\sum\limits_{i=b(t-1)+1}^{bt}B_{\ft,j_i}^*\paren{F_{j_i}(\ft)-y_{j_i}},
\end{equation*}
which can be expressed as
\begin{equation}\label{fk.mini.exp}
\fto= \ft-\frac{\eta}{b}\sum\limits_{i=b(t-1)+1}^{bt}\paren{B_{j_i}^*B_{j_i}\et+B_{j_i}^*\epsilon_{j_i}+B_{j_i}^*v_{t,j_i}+w_{t,j_i}}
\end{equation}
where the random variables $\epsilon_{j_i}=F_{j_i}(\fp)-y_{j_i}$, $v_{t,j_i}=F_{j_i}(\ft)-F_{j_i}(\fp)-B_{j_i}(\ft-\fp)$ and $w_{t,j_i}=\paren{B_{\ft,j_i}^*- B_{j_i}^*}\paren{F_{j_i}(\ft)-y_{j_i}}$.

By measurability of the iterate $\ft$ with respect to the filtration $\FF_n$, we get the conditional expectation:
\begin{align*}
\EE\sbrac{\fto|\FF_n} = & \ft- \frac{\eta}{n}\sum\limits_{i=1}^n B_i^* B_i \et -\frac{\eta}{n} \sum\limits_{i=1}^n B_i^* \epsilon_i -\frac{\eta}{n} \sum\limits_{i=1}^n B_i^* v_{t,i} -\frac{\eta}{n} \sum\limits_{i=1}^n  w_{t,i}\\
= & \ft - \eta \tx \et - \eta \bx^* \VE - \eta \bx^* v_t -\eta w_t,
\end{align*}
where $(\VE)_i=\epsilon_i$, $v_t= \sx r(\ft)$ and $w_t= \paren{\hat{B}_{\ft}^*- \bx^*}\paren{\hat{F}(\ft)-\yy}$.

Taking full conditional yields
\begin{equation}\label{Efk.mini.exp}
\EE\sbrac{\fto}=\EE\sbrac{\ft}-\eta \paren{\tx \EE\sbrac{\et} +\bx^*\VE+ \bx^*\EE\sbrac{v_t}+ \EE\sbrac{w_t}}.
\end{equation}

Thus, subtracting the recursion for $\EE(\fto)$ in \eqref{Efk.mini.exp} from $\fto$ in \eqref{fk.mini.exp} we get the random variable $\zto:=\fto-\EE(\fto)$ satisfies
\begin{align}\label{zk.mini}
\zto
= & (I-\eta \tx)\zt+\eta M_t,
\end{align}

where 
\begin{align*}
M_t:= &  M_{t,1} +  M_{t,2} +  M_{t,3} +  M_{t,4},
\end{align*}
and
\begin{align*}
M_{t,1}= & \frac{1}{b}\sum\limits_{i=b(t-1)+1}^{bt}\paren{\tx -T_{j_i}}\et,\\
M_{t,2}= & \frac{1}{b}\sum\limits_{i=b(t-1)+1}^{bt}\paren{\bx^*\VE -  B_{j_i}^*\epsilon_{j_i}},\\
M_{t,3}= & \frac{1}{b}\sum\limits_{i=b(t-1)+1}^{bt}\paren{\bx^*\EE(v_t)-B_{j_i}^*v_{t,j_i}},\\
M_{t,4}= & \frac{1}{b}\sum\limits_{i=b(t-1)+1}^{bt}\paren{\EE(w_t)-w_{t,j_i}}.
\end{align*} 

The random variable $M_t$ represents the iteration noise, due to the random choice of the index $j_i$. This splitting enables separately treating conditionally independent and dependent factors. Repeatedly applying the recursion \eqref{zk.mini} and using the initial condition $z_1 = 0$ lead to
\begin{equation*}
\zto =\eta \sum\limits_{j=1}^t(I-\eta\tx)^{t-j} M_j.
\end{equation*}

For $Q_{t-j}=(I-\eta\tx)^{t-j}$, we have
\begin{align*}
\zto =& \eta\sum_{i=1}^4 \sum\limits_{j=1}^t  Q_{t-j} M_{j,i}.
\end{align*}

\begin{align*}
\EE\sbrac{\norm{\tp^u \zto}^2} \leq & 4 \eta^2\sum\limits_{i=1}^4\EE\sbrac{\norm{\sum\limits_{j=1}^{t} \tp^{u}Q_{t-j} M_{j,i}}^2}\\
= & 4 \eta^2\sum\limits_{i=1}^4\sum\limits_{j=1}^{t}  \EE\sbrac{\norm{\tp^{u}Q_{t-j} M_{j,i}}^2}\\
= & 4 \eta^2\sum\limits_{i=1}^4\sum\limits_{j=1}^{t}  \EE\sbrac{\tr\paren{\tp^{2u}Q_{t-j}^2 M_{j,i}\otimes M_{j,i}}} \\
= & 4 \eta^2 \sum\limits_{i=1}^4\sum\limits_{j=1}^{t}  \tr\paren{\tp^{2u}Q_{t-j}^2 \EE\sbrac{M_{j,i}\otimes M_{j,i}}}.
\end{align*}

By setting $A_{t} =\max_{j=1,..., t} \EE\sbrac{\norm{e_j}^2}^{\frac{1}{2}}$ and using the estimates of Lemma~\ref{Lem.Mi.Mi} we get:
\begin{align}\label{tp.zto}
&\EE\sbrac{\norm{\tp^u \zto}^2} \\  \nonumber
\leq &\frac{\eta^2C_{\al,\lip}^{(5)}}{b} \cdot (1+A_t^2) \cdot\sum\limits_{j=1}^{t}\Bigg\{\tr\sbrac{\tp^{2u}Q_{t-j}^2\tx}\\    \nonumber
&+\paren{\tr\sbrac{\tp^{2u+1}Q_{t-j}^2}+\Psi_{\xx}\cdot\tr\sbrac{\tp^{2u+\frac{1}{2}}Q_{t-j}^2} +\la\Upsilon_{\xx} \cdot\norm{\tp^{2u}Q_{t-j}^2 }}\EE\sbrac{\norm{e_j}^2}\Bigg\} ,
\end{align}
where $C_{\al,\lip}^{(5)}=4\brac{C_{\al,\lip}^{(4)} +\Sigma^2+5 \kao^2 }$.

\vspace{0.2cm}

{\bf  Case $u=0$:}

\vspace{0.2cm}

For the case $u=0$, we get
\begin{align}\label{tp.zt.0}
&\EE\sbrac{\norm{ \zto}^2} \\  \nonumber
\leq &\frac{\eta^2C_{\al,\lip}^{(5)}}{b} \cdot (1+A_t^2) \cdot \sum\limits_{j=1}^{t}\Bigg\{\tr\sbrac{\tx^\nu}\cdot\norm{Q_{t-j}^2\tx^{1-\nu}}\\    \nonumber
&+\paren{\tr\sbrac{\tp^\nu}\cdot\norm{\tp^{1-\nu}Q_{t-j}^2}+\Psi_{\xx}\cdot\tr\sbrac{\tp^\nu}\cdot\norm{\tp^{\frac{1}{2}-\nu}Q_{t-j}^2} +\la\Upsilon_{\xx} \cdot\norm{Q_{t-j}^2 }}\EE\sbrac{\norm{e_j}^2}\Bigg\}.
\end{align}

For $v\leq 1$, we have
\begin{align}\label{tp.Q}
\norm{\tp^{v} Q_{t-j}^2}\leq &  \norm{(\tp+\la I)^{v}(\tx+\la I)^{-v}}\norm{(\tx+\la I)^{v} Q_{t-j}^2}\nonumber \\
    \leq &  \Xi^{v}\brac{\norm{\tx^{v}Q_{t-j}^2}+\la^{v}\norm{Q_{t-j}^2}}.
\end{align}

By using the bounds \eqref{tp.Q} in \eqref{tp.zto} with $A_{t} =\max_{j=1,..., t} \EE\sbrac{\norm{e_j}^2}^{\frac{1}{2}} $, we obtain
\begin{align}\label{tx.ak.m44}
&\EE\sbrac{\norm{\zto}^2} \\  \nonumber
\leq &\frac{\eta^2C_{\al,\lip}^{(5)}}{b}  \cdot (1+A_t^2)^2 \cdot\sum\limits_{j=1}^{t}\brac{\norm{\tx^{1-\nu} Q_{t-j}^2}+\la^{\frac{1}{2}}\norm{\tx^{ \frac{1}{2}-\nu}Q_{t-j}^2}+\la^{1-\nu}\norm{Q_{t-j}^2}}\\    \nonumber
&\cdot\paren{\tr\sbrac{\tp^\nu}\paren{\Xi^{1-\nu}+1}+\tr\sbrac{\tp^\nu}\Xi^{\frac{1}{2}-\nu}\frac{\Psi_\xx}{\sqrt{\la}}+\Upsilon_{\xx} }.
\end{align}

For $v\leq 1$, the inequality $\paren{\frac{v}{v+t-j}}^{v}  \leq \paren{\frac{1}{1+j}}^{v}$ yields
\begin{align}
\norm{(\eta\tx)^{2v}Q_{t-j}^2} \leq \paren{\frac{v}{v+t-j}}^{2v} \leq \paren{\frac{1}{1+t-j}}^{2v} = \paren{\phi_{t-j}^1}^{2v}.
\end{align}
Using the above inequality in \eqref{tx.ak.m44} and Lemma~\ref{cor:prop-bounds}, we obtain with confidence $1-\delta$:
\begin{align}\label{tx.ak.m44.0}
&\EE\sbrac{\norm{ \zto}^2} \\  \nonumber
\leq &\frac{\eta^2C_{\al,\lip}^{(6)}}{b} \cdot \frac{\nu}{5}\cdot (1+A_t^2)^2 \cdot \sum\limits_{j=1}^{t}\brac{\paren{\eta^{\nu-1}\paren{\phi_{t-j}^1}^{1-\nu}+\la^{\frac{1}{2}}\eta^{\nu-\frac{1}{2}}\paren{\phi_{t-j}^{1}}^{\frac{1}{2}-\nu}+\la^{1-\nu}}}\log^2\paren{\frac{2}{\delta}},
\end{align}
where $C_{\al,\lip}^{(6)}=C_{\al,\lip}^{(5)}\cdot \brac{\tr\sbrac{\tp^\nu}\paren{2 C_{\ka}^2+1}+C_{\ka}}\cdot \frac{5}{\nu}.$

Using Lemma~\ref{Lemma:sum.phi} we conclude that with confidence $1-\delta$:
\begin{align*}
&\EE\sbrac{\norm{ \zto}^2} \\
\leq &\frac{\eta^2C_{\al,\lip}^{(6)}}{b} \cdot \frac{\nu}{5} \cdot (1+A_t^2)^2 \cdot\Bigg\{& \frac{2}{\nu}\eta^{\nu-1}(t+1)^{\nu}+ \frac{2}{\nu}\la^{\frac{1}{2}}\eta^{\nu-\frac{1}{2}}(t+1)^{\nu+\frac{1}{2}}+\la^{1-\nu}(t+1)\Bigg\}\log^2\paren{\frac{2}{\delta}}.
\end{align*}

For the choice $\la=\frac{1}{\eta(t+1)}$, we obtain with confidence $1-\delta$:
\begin{align*}
\EE\sbrac{\norm{ \zto}^2} \leq  \frac{C_{\al,\lip}^{(6)}}{b}\cdot (1+A_t^2)^2\cdot \eta^{\nu+1}(t+1)^{\nu}\log^2\paren{\frac{2}{\delta}}.
\end{align*}

Under the condition \eqref{b.bound}, we conclude with the probability $1-\delta$:
\begin{align}\label{tx.bar.z}
\EE\sbrac{\norm{ \zto}^2} \leq  C_{\al,\lip}^{(6)}\cdot (1+A_t^2)^2\cdot (\eta(t+1))^{-2r}\log^2\paren{\frac{2}{\delta}}.
\end{align}
\vspace{0.2cm}

{\bf  Case $u=\frac{1}{2}$:}

For the case $u=\frac{1}{2}$ and $\vartheta=\frac{\nu}{2r+\nu+1}$, we get
\begin{align}
&\EE\sbrac{\norm{\tp^{\frac{1}{2}} \zto}^2} \\  \nonumber
\leq &\frac{\eta^2C_{\al,\lip}^{(5)}}{b} \cdot (1+A_t^2)\cdot \sum\limits_{j=1}^{t}\Bigg\{\tr\sbrac{\tp^{1-\vartheta}}\norm{\tp^{\vartheta}Q_{t-j}^2\tx}\\    \nonumber
&+\paren{\tr\sbrac{T}+\Psi_\xx \tr\sbrac{T^{\frac{1}{2}}}+\la \Upsilon_{\xx}}\norm{\tp Q_{t-j}^2}\EE\sbrac{\norm{e_j}^2}
\Bigg\}.
\end{align}

Using the inequality \eqref{tp.Q}, we get 
\begin{align*}
&\EE\sbrac{\norm{\tp^{\frac{1}{2}} \zto}^2} \\  \nonumber
\leq &\frac{\eta^2C_{\al,\lip}^{(5)}}{b} \cdot (1+A_t^2) \cdot\sum\limits_{j=1}^{t}\Bigg\{\tr\sbrac{\tp^{1-\vartheta}}\Xi^{\vartheta}\brac{\norm{\tx^{1+\vartheta}Q_{t-j}^2}+\la^{\vartheta}\norm{\tx Q_{t-j}^2}}\Bigg\} \\    \nonumber
&+\paren{\tr\sbrac{T}+\Psi_\xx \tr\sbrac{T^{\frac{1}{2}}}+\la \Upsilon_{\xx}}\Xi\brac{\norm{\tx Q_{t-j}^2}+\la\norm{Q_{t-j}^2}}\EE\sbrac{\norm{e_j}^2}.
\end{align*}

Using Lemma~\ref{cor:prop-bounds} for $\la=\frac{1}{\eta(t+1)}$,  we conclude that with confidence $1-\delta$:
\begin{align}
&\EE\sbrac{\norm{\tp^{\frac{1}{2}} \zto}^2} \nonumber\\  
\leq &\frac{\eta^2C_{\al,\lip}^{(5)}}{b}  \cdot (1+A_t^2) \cdot\sum\limits_{j=1}^{t}\Bigg\{ \kao^{2(1-\vartheta)}\paren{C_{\ka}\log\paren{\frac{6}{\delta}}}^{2\vartheta}\paren{\eta^{-1-\vartheta}\paren{\phi_{t-j}^{1}}^{1+\vartheta}+\la^{\vartheta}\eta^{-1}\phi_{t-j}^{1}} \nonumber\\  
&+\paren{\kao^2+\kao C_{\ka}\sqrt{\la}\log\paren{\frac{6}{\delta}}+C_{\ka}\la \log\paren{\frac{6}{\delta}}}C_{\ka}^2\log^{2}\paren{\frac{6}{\delta}}\paren{\eta^{-1}\phi_{t-j}^{1}+\la}\EE\sbrac{\norm{e_j}^2}
\Bigg\}.
\end{align}

By setting $A_{t} =\max_{j=1,..., t} \EE\sbrac{\norm{e_{j}}_{\HH_1}}^{\frac{1}{2}}  $ and for sufficiently large $n$ such that $\log\paren{\frac{6}{\delta}} \leq (\eta(t+1))^{\frac{1}{2}}=n^{\frac{1}{2(2r+\nu+1)}}$ using Lemma~\ref{Lemma:sum.phi} we conclude that with confidence $1-\delta$:
\begin{align}\label{tx.bar.z1}
\EE\sbrac{\norm{\tp^{\frac{1}{2}} \zto}^2}  \leq  \frac{C_{\al,\lip}^{(7)}}{b} \cdot (1+A_t^2)^2 \cdot \eta^{1-\vartheta}\log^2\paren{\frac{6}{\delta}}\brac{1+\eta^\vartheta\sum\limits_{j=1}^{t}\phi_{t-j}^1\EE\sbrac{\norm{e_j}^2}},
\end{align}
where $C_{\al,\lip}^{(7)}=C_{\al,\lip}^{(5)}\max\brac{C_{\ka}^{2}\paren{\kao^{2}+\kao C_{\ka}+C_{\ka}},C_{\ka}^{2\vartheta}\kao^{2(1-\vartheta)}\paren{(1+\vartheta)\vartheta^{-1}+2(e\vartheta)^{-1}}}$.

For the choice of $b$ in \eqref{b.bound} and $\vartheta=\frac{\nu}{2r+\nu+1}$, we get with confidence $1-\delta$:
\begin{equation*}
\EE\sbrac{\norm{\tp^{\frac{1}{2}} \zto}^2} \leq  C_{\al,\lip}^{(7)}\cdot (1+A_t^2)^2 \cdot (\eta(t+1))^{-(2r+1)} \log^2\paren{\frac{2}{\delta}}\brac{1+\eta^{\vartheta}\sum\limits_{j=1}^{t}\phi_{t-j}^1\EE\sbrac{\norm{e_j}^2}}.
\end{equation*}

Hence, for $u\in\brac{0,\frac{1}{2}}$ we conclude with confidence $1-\delta$:
\begin{equation*}
\EE\sbrac{\norm{\tp^u \zto}^2} \leq  C_{\al,\lip}^{(8)}\cdot (1+A_t^2)^2 \cdot  (\eta(t+1))^{-2(r+u)} \log^2\paren{\frac{2}{\delta}}\brac{1+\eta^{\vartheta}\sum\limits_{j=1}^{t}\phi_{t-j}^1\EE\sbrac{\norm{e_j}^2}} , 
\end{equation*}
where $C_{\al,\lip}^{(8)}=\max\brac{C_{\al,\lip}^{(6)}, C_{\al,\lip}^{(7)}}$.
\end{proof}


\subsection{Proof of Theorem \ref{Thm:SGD.mini}}

\begin{proof}[Proof of Theorem \ref{Thm:SGD.mini}]
For the SGD iterate $\ft$ \eqref{fk.mini}, we have the standard bias-variance decomposition 
\begin{equation*}
 \eto=   \fto-\fp =\zto + \EE(\eto),
\end{equation*}
which implies
\begin{equation*}
  \EE\sbrac{ \norm{\tp^u \eto}^2} =\EE\sbrac{ \norm{\tp^u \zto}^2}  + \norm{\tp^u\EE(\eto)}^2.
\end{equation*}
Let $a_j=\EE\sbrac{\norm{e_j}^2}$ and $b_j=\EE\sbrac{\norm{\tp^{\frac{1}{2}}e_j}^2}$.  Now, by applying the bounds of Propositions~\ref{bias.bound}, ~\ref{variance.bound} in the above inequality, we conclude that
\begin{align}\label{tu.eto.ind}
\EE\sbrac{ \norm{\tp^u \eto}^2} \leq & C_{\al,\lip}^{(9)} \cdot (1+A_t^2)^2 \cdot \Bigg\{\paren{\eta(t+1)}^{-2(u+r)}\paren{2+\eta^{\vartheta}\sum\limits_{j=1}^t \phi_{t-j}^{1} a_j}\Bigg\}\log^2\paren{\frac{6}{\delta}}\nonumber\\
&+2\paren{C_{\al,\lip}^{(1)}}^2\eta^{1-2u}\paren{\sum\limits_{j=1}^t \phi_{t-j}^{u+\frac{1}{2}} a_j^{\frac{1}{2}}b_j^{\frac{1}{2}}
 }^2,
\end{align} 
where $C_{\al,\lip}^{(9)}=2\max\paren{2\widetilde{C}_{\diamond}^2, C_{\al,\lip}^{(8)}}$.

{\bf Proof by induction.} In what follows, under the given assumptions, we prove the following {\bf claim}: For any $n \in \mbn$ sufficiently large and for $u \in \{0,1/2\}$, 
there exists an early stopping time $T_{n} \in \mbn$ that is increasing in $n \in \mbn$, such that with probability at least $1-\delta$ 
\begin{equation}
\label{eq:to-show-SGD} 
\EE\sbrac{ \norm{\tp^u \et}^2}  \leq C_{\star }^2\; (\eta t)^{-2(r + u)} \; \log^2\left(\frac{6}{\delta}\right)\;,
\end{equation} 
for all $t \leq T_n$ and for some $ C_{\star } >0$, depending on $u$, $\eta$, $\widetilde M$, $\kappa_0$, $\lip$, $\alpha$, $C_R$, $\widetilde C_R$, $d$, $D$,  $C_\nu$. 

Further, we prove that 
\begin{equation}\label{eq:to-show-At-SGD}
    A_t\leq d,
\end{equation}
i.e., all the iterates up to $t$ lie in a ball of radius $d$ around $\fp$. We prove this claim by induction over $t \in [T_n]$. 

\vspace{0.2cm}

{\bf Base Case (Initial Step):} By Assumption \ref{ass:source} and our assumption 
\begin{equation}\label{d.assump.1}
C_{\star} \eta^{-r} \log \frac{6}{\delta} \leq d,    
\end{equation}
we have for $\delta \in [\delta_0 , 1]$:  
\begin{align*}
\EE\sbrac{ \norm{f_1 - \fp}_{\HH_1}^2}^{\frac{1}{2}}
&= \EE\sbrac{ \norm{T^rg}_{\HH_1}^2}^{\frac{1}{2}}
\leq D \cdot \kao^{2r} 
\leq C_{\star }\; \eta ^{-r } \; \log\left(\frac{6}{\delta}\right)\leq d\;,
\end{align*} 
with $C_{\star }= \eta^r \cdot D \cdot \kao^{2r} $. 

Similarly, 
\begin{align*}
\EE\sbrac{ \norm{\tp^{\frac{1}{2}}(f_1 - \fp)}_{\HH_1}^2}^{\frac{1}{2}} &\leq \kao  \EE\sbrac{ \norm{T^rg}_{\HH_1}^2}^{\frac{1}{2}} \leq D \cdot \kao^{2r+1} \leq C_{\star }\; \eta ^{-(r+\frac{1}{2}) } \; \log\left(\frac{6}{\delta}\right)\;,
\end{align*} 
with $C_{\star }= \eta^{r+\frac{1}{2}} \cdot D \cdot \kao^{2r+1} $.

\vspace{0.2cm}

{\bf Inductive Step:} 
Now we assume that the bounds \eqref{eq:to-show-SGD} and \eqref{eq:to-show-At-SGD} hold up to the case $t$, and prove the assertion for the case $t + 1$.

Using the bounds \eqref{eq:to-show-SGD} and \eqref{eq:to-show-At-SGD} in \eqref{tu.eto.ind}, we get with confidence $1-\delta$:
\begin{align*}
\EE\sbrac{ \norm{\tp^u \eto}^2} \leq & C_{\al,\lip}^{(9)} \cdot (1+d^2)^2 \cdot  \Bigg\{\paren{\eta(t+1)}^{-2(u+r)}\paren{2+C_{\star }\eta^{\vartheta}\sum\limits_{j=1}^t \phi_{t-j}^{1} (\eta j)^{-2r}\log^2\paren{\frac{6}{\delta}}}\Bigg\}\log^2\paren{\frac{6}{\delta}}\nonumber\\
&+2\paren{C_{\al,\lip}^{(1)}}^2C_{\star }^2\eta^{1-2u}\paren{\sum\limits_{j=1}^t \phi_{t-j}^{u+\frac{1}{2}} (\eta j)^{-2r-\frac{1}{2}}\log^2\paren{\frac{6}{\delta}}
 }^2.
\end{align*}

Using the Lemma~\ref{Lemma:sum.phi}, we get with confidence $1-\delta$:
\begin{align*}
\EE\sbrac{ \norm{\tp^u \eto}^2} \leq & C_{\al,\lip}^{(9)} \cdot (1+d^2)^2 \cdot \Bigg\{\paren{\eta(t+1)}^{-2(u+r)}\paren{2+C_{\star }c_r\eta^{\vartheta-2r} (t+1)^{-r}\log^2\paren{\frac{6}{\delta}}}\Bigg\}\log^2\paren{\frac{6}{\delta}}\nonumber\\
&+2\paren{C_{\al,\lip}^{(1)}}^2C_{\star }^2\eta^{\frac{1}{2}-2u-2r}\paren{ (t+1)^{-\frac{3r}{2}-u}\log^2\paren{\frac{6}{\delta}}
 }^2.
\end{align*}

Under the condition \eqref{d.assump.1}, we get with confidence $1-\delta$:
\begin{equation*}
\EE\sbrac{ \norm{\tp^u \eto}^2}  \leq C_{\star }\; (\eta (t+1))^{-2(r + u)} \; \log^2\left(\frac{6}{\delta}\right)\;.
\end{equation*}

In particular, by the initial step and our inductive assumption, we find 
\begin{align*}
A_{t+1} &=\max_{j=1,..., t+1}\EE\sbrac{\norm{e_{j}}_{\HH_1}}^{\frac{1}{2}} \\
&= \max\left\{ A_t , \EE\sbrac{\norm{e_{t+1}}_{\HH_1}}^{\frac{1}{2}}  \right\} \\
&\leq \max\left\{d,  C_\star \cdot \log\left(\frac{6}{\delta}\right) \; 
\left(\frac{1}{\eta (t+1)} \right)^{r} \right\}\\
&\leq  \max\left\{d,  C_\star \cdot \log\left(\frac{6}{\delta}\right) \; 
\left(\frac{1}{\eta } \right)^{r} \right\}= d, 
\end{align*}
for all $t \in [T_n-1]$ and $\delta \in [\delta_0 , 1]$, with $\delta_0 = 6e^{-\frac{d}{C_\star}\eta^r}$.
\end{proof}

The following two Lemmas are used to bound the variance term in Proposition~\ref{variance.bound}.

\begin{lemma}\label{Lem.Mi}
Let Assumptions~\ref{ass:bounded}, \ref{ass:kernel}, \ref{ass:Frechet}, and \ref{ass:A} hold true. For any $t\in\NN$, we have the following almost surely:
\begin{align*}
\EE\sbrac{M_{t,1}\otimes M_{t,1}}\preceq  & \frac{\kao^2}{b}\EE\sbrac{\norm{\et}^2}  \tx,\\
\EE\sbrac{M_{t,2}\otimes M_{t,2}}\preceq  & \frac{\Sigma^2}{b} \tx,\\
\EE\sbrac{M_{t,3}\otimes M_{t,3}}\preceq  & \frac{4 \kao^2}{b}\EE\sbrac{\norm{\et}^2} \tx,\\
\EE\sbrac{M_{t,4}\otimes M_{t,4}}\preceq  & \frac{C^{(3)}_{\alpha, \lip}}{b} \EE\sbrac{\paren{\norm{\et}^2+1}\paren{A'(\ft)-A'(\fp)}^*\lx\paren{A'(\ft)-A'(\fp)}},
\end{align*}
where $\Sigma = 2 \max\paren{\widetilde{M},\ka \norm{A(\fp)}}$ and $C^{(3)}_{\alpha, \lip} =  2 \max\left\{  \paren{\frac{\kao }{1-\alpha}}^2 , \Sigma^2 \right\}$. The expectation is taken over the $b$-fold uniform distribution on $[n]$ at step $t+1$.
\end{lemma}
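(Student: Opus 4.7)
The plan is to exploit the conditional i.i.d.\ structure of the four terms. Given the data and the current iterate $\ft$, each $M_{t,k}$ is a sample average of $b$ random variables that are i.i.d.\ under the uniform draw of $j_i \in [n]$, and in every case the summands are already centered: for instance, $\EE_j[T_{j_i}\et] = \tx\et$, $\EE_j[B_{j_i}^*\epsilon_{j_i}] = \bx^*\VE$, $\EE_j[B_{j_i}^*v_{t,j_i}] = \bx^* v_t$, and $\EE_j[w_{t,j_i}] = w_t$. For any mean-zero i.i.d.\ family the covariance factorizes as $\EE\sbrac{(\tfrac{1}{b}\sum_i \xi_i)\otimes(\tfrac{1}{b}\sum_i \xi_i)} = \tfrac{1}{b}\EE\sbrac{\xi_1\otimes \xi_1}$, and the identity $\EE\sbrac{\xi_1 \otimes \xi_1} = \EE\sbrac{\eta \otimes \eta} - \EE\eta \otimes \EE\eta \preceq \EE\sbrac{\eta\otimes \eta}$ (where $\eta$ is the uncentered ``data'' part of each summand) reduces each claim to an empirical PSD bound of the form $\tfrac{1}{bn}\sum_{i=1}^n \eta_{k,i} \otimes \eta_{k,i}$, with $\eta_{1,i} = T_i\et$, $\eta_{2,i} = B_i^*\epsilon_i$, $\eta_{3,i} = B_i^* v_{t,i}$, and $\eta_{4,i} = w_{t,i}$.

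The core PSD tool is the rank-one inequality $(U^* u)\otimes (U^* u) \preceq \|u\|_\cY^2\, U^* U$, valid for any bounded $U : \HH_1 \to \cY$ and $u \in \cY$. In Cases~1--3 I would apply it with $U = B_i$ and $u \in \{B_i \et,\ \epsilon_i,\ v_{t,i}\}$ respectively, so that after averaging over $i$ the operator factor $B_i^* B_i = T_i$ collapses into $\tx = \tfrac{1}{n}\sum_i T_i$. In Case~4 the appropriate choice is $U = S_{x_i}(A'(\ft) - A'(\fp))$ with $u = A(\ft)(x_i) - y_i$, so that the operator factor averages to $(A'(\ft) - A'(\fp))^* \lx (A'(\ft) - A'(\fp))$, which cannot be simplified further. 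This reduces the problem to purely scalar norm bounds on each summand.

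The scalar bounds follow directly from the standing assumptions: $\|B_i \et\|_\cY \leq \kao \|\et\|_{\HH_1}$ from Assumption~\ref{ass:points} gives Case~1 with constant $\kao^2$; $\|\epsilon_i\|_\cY \leq \Sigma$ a.s.\ from Assumption~\ref{ass:bounded} combined with $\|(A\fp)(x_i)\|_\cY \leq \ka\|A(\fp)\|_{\cH_2}$ gives Case~2; in Case~3, writing $v_{t,i} = S_{x_i} r(\ft)$ with $r(\ft) = A(\ft) - A(\fp) - A'(\fp)\et$, the triangle inequality together with the mean-value bound $\|A(\ft) - A(\fp)\|_{\cH_2} \leq \lip \|\et\|_{\HH_1}$ on the convex ball $\cB_d(\fp)$ afforded by Assumption~\ref{ass:Frechet} and $\|A'(\fp)\et\|_{\cH_2} \leq \lip\|\et\|_{\HH_1}$ yields $\|v_{t,i}\|_\cY \leq 2\kao\|\et\|_{\HH_1}$, producing the constant $4\kao^2$; in Case~4, splitting $A(\ft)(x_i) - y_i = S_{x_i}(A(\ft) - A(\fp)) - \epsilon_i$ and using the two-sided estimate \eqref{Lem:R.bound} (coming from Assumption~\ref{ass:A}(i)) sharpens the signal term to $\|S_{x_i}(A(\ft) - A(\fp))\|_\cY \leq \kao(1-\alpha)^{-1}\|\et\|_{\HH_1}$, so that $(a+b)^2 \leq 2a^2 + 2b^2$ produces $\|A(\ft)(x_i) - y_i\|^2_\cY \leq C^{(3)}_{\alpha,\lip}(\|\et\|^2 + 1)$ with the stated constant.

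The main structural subtlety is in Case~4: unlike the first three cases, the right-hand side cannot be collapsed into a scalar multiple of $\tx$, because the surrounding factors $(A'(\ft) - A'(\fp))^*$ and $(A'(\ft) - A'(\fp))$ carry genuine geometric content that must be preserved for the subsequent variance analysis in Proposition~\ref{variance.bound}, where this operator will be controlled via the Lipschitz bound $\|R_f - I\| \leq C_R \|f - \fp\|_{\HH_1}$ from \eqref{eq:Rop2}. Aside from respecting this structure, the proof is a routine combination of the i.i.d.\ covariance identity, Cauchy--Schwarz, and the PSD calculus.
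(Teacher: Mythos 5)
Your proposal follows the paper's proof essentially step for step: conditional independence and centering to reduce to $\tfrac{1}{b}\,\EE[\xi_1\otimes\xi_1]$, the domination of a centered second moment by its uncentered counterpart, the rank-one PSD inequality $(U^*u)\otimes(U^*u)\preceq\|u\|^2\,U^*U$ with $U=B_{j_i}$ (cases 1--3) resp.\ $U=S_{x_{j_i}}\bigl(A'(\ft)-A'(\fp)\bigr)$ (case 4), and then the scalar bounds from Assumptions \ref{ass:points}, \ref{ass:bounded}, \ref{ass:Frechet} and \eqref{eq:lipschitz3}. The only cosmetic difference is that you make explicit the mean-value and triangle-inequality steps behind the factor $4\kao^2$ in case~3, which the paper leaves implicit.
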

\begin{proof}
We recall that
\begin{align*}
    M_{t,1}=&\frac{1}{b}\sum\limits_{i=b(t-1)+1}^{bt}(\tx-T_{j_i})\et=\frac{1}{b}\sum\limits_{i=b(t-1)+1}^{bt}\xi_{i,1},\\
M_{t,2} = & \frac{1}{b}\sum\limits_{i=bt+1}^{bt} \paren{\bx^*\VE-B_{j_i}^*\epsilon_{j_i}}=\frac{1}{b}\sum\limits_{i=bt+1}^{bt} \xi_{i,2},\\
M_{t,3} = & \frac{1}{b}\sum\limits_{i=bt+1}^{bt} \paren{\bx^*v_j-B_{j_i}^*v_{t,j_i}}=\frac{1}{b}\sum\limits_{i=bt+1}^{bt} \xi_{i,3},\\
M_{t,4} = & \frac{1}{b}\sum\limits_{i=bt+1}^{bt} \paren{w_j-w_{t,j_i}} =\frac{1}{b}\sum\limits_{i=bt+1}^{bt} \xi_{i,4},
\end{align*}
where 
\begin{align*}
\xi_{i,1}=&(\tx-T_{j_i})\et,\\
\xi_{i,2}=&\bx^*\VE-B_{j_i}^*\epsilon_{j_i},\\
\xi_{i,3}=&\bx^*v_j-B_{j_i}^*v_{t,j_i},\\
\xi_{i,4}=&w_j-w_{t,j_i}.
\end{align*}

By independence, we obtain for $t=1,\ldots, 4$:
\begin{equation}\label{M.xi}
    \EE\sbrac{M_{t,t}\otimes M_{t,t}|\mathcal{F}_n} = \frac{1}{b^2} \sum\limits_{i,i'} \EE\sbrac{\xi_{i,t}\otimes \xi_{i,t}|\mathcal{F}_n} = \frac{1}{b^2} \sum\limits_{i} \EE\sbrac{\xi_{i,t}\otimes \xi_{i,t}|\mathcal{F}_n}.
\end{equation}

Then, the first inequality follows from the fact
\begin{equation*}
 \EE\sbrac{\xi_{i,1}\otimes \xi_{i,1}|\mathcal{F}_n} \preceq   \EE\sbrac{(T_{j_i}\et)\otimes (T_{j_i}\et)|\mathcal{F}_n}\preceq \EE\sbrac{(B_{j_i}\et)^2  T_{j_i}} \preceq \kao^2\norm{\et}^2 \tx,
\end{equation*}
which implies
\begin{equation*}
  \EE\sbrac{\xi_{i,1}\otimes \xi_{i,1}} \preceq \kao^2\EE\sbrac{\norm{\et}^2} \tx. 
\end{equation*}
Now, we turn to the second inequality. From Assumption~\ref{ass:bounded}, we have 
\begin{equation}\label{yx.bd}
|y-A(\fp)(x)| \leq \widetilde{M}+\ka\norm{A(\fp)}\leq \Sigma   
\end{equation}
for $\Sigma: = 2 \max\paren{\widetilde{M},\ka \norm{A(\fp)}}$. The second inequality follows from
\begin{align*}
\EE\sbrac{\xi_{i,2}\otimes \xi_{i,2} | \FF_n}\preceq &\EE\sbrac{\paren{B_{j_i}^*\epsilon_{j_i}}\otimes \paren{B_{j_i}^*\epsilon_{j_i}} | \FF_n} \preceq
\EE\sbrac{\abs{\epsilon_{j_i}}^2T_{j_i}} \preceq \Sigma^2 \tx, 
\end{align*}
which implies
\begin{equation*}
\EE\sbrac{\xi_{i,2}\otimes \xi_{i,2}}\preceq \Sigma^2 \tx.
\end{equation*}

Next, we prove the third inequality. This is followed by observing
\begin{align*}
\EE\sbrac{\xi_{i,3}\otimes \xi_{i,3} | \FF_n}\preceq &\EE\sbrac{\paren{B_{j_i}^*v_{t,j_i}}\otimes \paren{B_{j_i}^*v_{t,j_i}} | \FF_n} \preceq
\EE\sbrac{(v_
{t,j_i})^2T_{j_i}} \preceq \ka^2 \norm{r(\ft)}^2 \tx,
\end{align*}
which implies
\begin{equation*}
\EE\sbrac{\xi_{i,3}\otimes \xi_{i,3} } \preceq \ka^2 \EE\sbrac{\norm{r(\ft)}^2} \tx  \preceq 4 \ka^2 \lip^2\EE\sbrac{\norm{\et}^2} \tx.
\end{equation*}

From the inequalities \eqref{eq:lipschitz3}, \eqref{yx.bd} We have the inequality:
\begin{align*}
\paren{A(\ft)(x_{j_i})-y_{j_i}}^2 \leq & 2\paren{A(\ft)(x_{j_i})-A(\fp)(x_{j_i})}^2+ 2\paren{A(\fp)(x_{j_i})-y_{j_i}}^2 \\
\leq & 2\paren{\frac{\ka \cdot \lip }{1-\alpha}}^2 \cdot \norm{\ft-\fp}^2+2\Sigma^2 =C^{(3)}_{\alpha, \lip} \cdot \paren{\norm{\ft-\fp}^2+1}, 
\end{align*}
where 
\[ C^{(3)}_{\alpha, \lip} =  2 \max\left\{  \paren{\frac{\ka \cdot \lip }{1-\alpha}}^2 , \Sigma^2 \right\}  \;. \] 
This yields
\begin{align*}
&\EE\sbrac{\xi_{i,4}\otimes \xi_{i,4}}\\
\preceq & \EE\sbrac{\paren{w_{t,j_i}}\otimes \paren{w_{t,j_i}} } \\
=& \EE\sbrac{(A(\ft)(x_{j_i})-y_{j_i})^2 \cdot \paren{A'(\ft)-A'(\fp)}^*S_{x_{j_i}}^*S_{x_{j_i}}\paren{A'(\ft)-A'(\fp)}}\\
\preceq & C^{(3)}_{\alpha, \lip} \EE\sbrac{\paren{\norm{\et}^2+1} \cdot \paren{A'(\ft)-A'(\fp)}^*\lx\paren{A'(\ft)-A'(\fp)}}.
\end{align*}
Then, the last inequality follows from \eqref{M.xi}.
\end{proof}

\begin{lemma}\label{Lem.Mi.Mi}
Let Assumptions~\ref{ass:bounded}, \ref{ass:kernel}, \ref{ass:Frechet}, and \ref{ass:A} hold true. For any $t\in\NN$ and $\la>0$, we have the following almost surely:
\begin{align*}
\tr\sbrac{\tp^{2u}Q_{t-j}^2\EE\paren{M_{t,1}\otimes M_{t,1}}}\leq &   \frac{\kao^2}{b}\EE\sbrac{\norm{\et}^2} \tr\sbrac{\tp^{2u}Q_{t-j}^2\tx},\\
\tr\sbrac{\tp^{2u}Q_{t-j}^2\EE\paren{M_{t,2}\otimes M_{t,2}}}\leq &  \frac{\Sigma^2}{b} \tr\sbrac{\tp^{2u}Q_{t-j}^2\tx},\\
\tr\sbrac{\tp^{2u}Q_{t-j}^2\EE\paren{M_{t,3}\otimes M_{t,3}}}\leq  & \frac{4\kao^2}{b}\EE\sbrac{\norm{\et}^2} \tr\sbrac{\tp^{2u}Q_{t-j}^2\tx},\\
\tr\sbrac{\tp^{2u}Q_{t-j}^2\EE\paren{M_{t,4}\otimes M_{t,4}}} \leq &  \frac{C^{(4)}_{\alpha, \lip}}{b} \cdot\paren{\EE\sbrac{\norm{\et}^2}+\EE\sbrac{\norm{\et}^4}} 
       \nonumber\\
&\cdot \brac{\tr\sbrac{\tp^{2u+1}Q_{t-j}^2}+\Psi_{\xx}\cdot\tr\sbrac{\tp^{2u+\frac{1}{2}}Q_{t-j}^2} +\la\Upsilon_{\xx} \cdot\norm{\tp^{2u}Q_{t-j}^2 }},
\end{align*}
where $\Sigma = 2 \max\paren{\widetilde{M},\ka \norm{A(\fp)}}$ and $C^{(4)}_{\alpha, \lip}= C^{(3)}_{\alpha, \lip} \cdot \paren{C_R^2+C_R\eld+\eld^2}$. The expectation is taken over the $b$-fold uniform distribution on $[n]$ at step $t$.
\end{lemma}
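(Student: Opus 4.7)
My plan is to handle the four bounds in sequence. The first three are essentially immediate consequences of Lemma \ref{Lem.Mi}: for each $i \in \{1,2,3\}$, that lemma provides an operator inequality $\mbe[M_{t,i} \otimes M_{t,i}] \preceq (\mathrm{const}/b) \cdot \mbe[\|\et\|^{\alpha_i}] \cdot \tx$ (with $\alpha_i \in \{0, 2\}$). Since $\tp^{2u} Q_{t-j}^2$ is a positive, bounded, self-adjoint operator, the monotonicity of trace with respect to positive semidefinite weighting (which follows by writing the trace in an orthonormal basis and using cyclicity to move $(\tp^{2u} Q_{t-j}^2)^{1/2}$ to both sides) immediately yields the three claimed bounds with precisely the stated constants.

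For the fourth bound, my plan is to start from the Lemma \ref{Lem.Mi} bound and reduce to controlling
\[
\tr\!\left[\tp^{2u} Q_{t-j}^2 \, \mbe\!\left[(\|\et\|^2 + 1)\, D_t^* \lx D_t\right]\right], \qquad D_t := A'(\ft) - A'(\fp),
\]
and then to decompose $\lx = \lp + (\lx - \lp)$. For the $\lp$-piece, Assumption \ref{ass:A}(ii) gives $\ip D_t = (R_{\ft} - I)\bp$, so
\[
D_t^* \lp D_t = (\ip D_t)^*(\ip D_t) \preceq C_R^2 \|\et\|^2\, \tp,
\]
whose trace against $\tp^{2u} Q_{t-j}^2$ produces the $\tr[\tp^{2u+1} Q_{t-j}^2]$ term; the accompanying $\|\et\|^2$ multiplies the $(\|\et\|^2 + 1)$ factor, giving rise to $\mbe[\|\et\|^2] + \mbe[\|\et\|^4]$. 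For the remainder $\tr[\tp^{2u} Q_{t-j}^2\, D_t^*(\lx-\lp) D_t]$, I would insert $(\lp+\la I)^{\pm 1/2}$ appropriately and use cyclicity of the trace together with the Hölder-type estimate $|\tr(PQ)| \le \|P\| \, \|Q\|_1$: factoring one copy of $(\lp+\la I)^{-1/2}(\lx - \lp)$ in operator norm yields the $\Psi_{\xx}$ factor, and the remaining trace norm can be controlled by the range-invariance bound again, producing $\tr[\tp^{2u+1/2} Q_{t-j}^2]$. The $\la \Upsilon_{\xx} \|\tp^{2u} Q_{t-j}^2\|$ term should arise from a separate splitting in which $(\lx-\lp)$ is handled via $\Upsilon_{\xx} = \tr[(\lp + \la I)^{-1}(\lp-\lx)]$ combined with the operator norm of $\tp^{2u} Q_{t-j}^2$ multiplied by the regularization factor $\la$.

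The main obstacle is bookkeeping in the $(\lx-\lp)$ term: one has to insert factors of $(\lp+\la I)^{\pm 1/2}$ and apply cyclicity and Hölder in exactly the right way to match the stated three contributions ($\tr[\tp^{2u+1} Q_{t-j}^2]$, $\Psi_{\xx}\tr[\tp^{2u+1/2}Q_{t-j}^2]$, and $\la \Upsilon_{\xx} \|\tp^{2u}Q_{t-j}^2\|$), while also keeping track of which applications of the range invariance produce the second power of $\|\et\|$ that accounts for the $\mbe[\|\et\|^4]$ term and the overall constant $C^{(4)}_{\alpha, \lip} = C^{(3)}_{\alpha,\lip}(C_R^2 + C_R \eld + \eld^2)$. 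The rest of the calculation is routine given the earlier notation in Definition \ref{def:norms}.
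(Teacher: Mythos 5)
Your handling of the first three bounds is exactly right and matches the paper: Lemma~\ref{Lem.Mi} gives operator-order inequalities of the form $\mbe[M_{t,i}\otimes M_{t,i}]\preceq (\text{const}/b)\,\tx$, and monotonicity of $A\mapsto \tr(WA)$ for positive $W$ (via $\tr(WA)=\tr(W^{1/2}AW^{1/2})$) transfers these to the stated trace bounds with no loss in constants.

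For the fourth bound your plan has the right ingredients (start from Lemma~\ref{Lem.Mi}, peel off $\lp$ via the range-invariance identity $\ip D_t=(R_{\ft}-I)\bp$, then control the fluctuation piece with $\Psi_{\xx}$ and $\Upsilon_{\xx}$), but the account of how the \emph{single} remainder $D_t^*(\lx-\lp)D_t$ produces \emph{two} distinct contributions — $\Psi_{\xx}\tr[\tp^{2u+1/2}Q_{t-j}^2]$ and $\la\Upsilon_{\xx}\|\tp^{2u}Q_{t-j}^2\|$ — is left at the level of ``a separate splitting,'' which is where the real work lies. The paper resolves this in one stroke by first writing the resolvent identity
\[
\lx \;=\; \lp \;+\; \lp(\lp+\la I)^{-1}(\lx-\lp)\;+\;\la(\lp+\la I)^{-1}(\lx-\lp),
\]
and then sandwiching each piece by $D_t^*(\cdot)D_t$: the first piece gives $\bp^*(R_{\ft}-I)^*(R_{\ft}-I)\bp\preceq\|R_{\ft}-I\|^2\tp$ and hence the $\tr[\tp^{2u+1}Q_{t-j}^2]$ term; the second, after applying range invariance only to the left factor $D_t^*\lp=\bp^*(R_{\ft}-I)^*\ip$ and estimating $\|\ip(\lp+\la I)^{-1/2}\|\cdot\|(\lp+\la I)^{-1/2}(\lx-\lp)\|$, gives $\Psi_{\xx}\tr[\tp^{2u+1/2}Q_{t-j}^2]$; and the third, after cycling $D_t(\cdot)D_t^*$ out in operator norm and leaving $(\lp+\la I)^{-1}(\lx-\lp)$ in trace, gives $\la\Upsilon_{\xx}\|\tp^{2u}Q_{t-j}^2\|$. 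Without that explicit three-part split, the appearance of two separate fluctuation terms with the specific weights $\Psi_{\xx}$ and $\la\Upsilon_{\xx}$ from one remainder is not justified; your $\lx=\lp+(\lx-\lp)$ decomposition followed by ad hoc insertions of $(\lp+\la I)^{\pm1/2}$ will not by itself reproduce both. Once this identity is in place, your remaining bookkeeping (tracking which factors of $\|\et\|$ come from $R_{\ft}-I$ versus $D_t$, yielding $\mbe[\|\et\|^2]+\mbe[\|\et\|^4]$ and the constant $C^{(4)}_{\alpha,\lip}=C^{(3)}_{\alpha,\lip}(C_R^2+C_R\eld+\eld^2)$) is indeed routine.
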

\begin{proof}
The first three identities directly follow from Lemma~\ref{Lem.Mi}. Next, we turn to the last inequality. For this, we start with the identity:
\begin{align}\label{Alx}
&\paren{A'(\ft)-A'(\fp)}^*\lx\paren{A'(\ft)-A'(\fp)}  \nonumber   \\
= & \paren{A'(\ft)-A'(\fp)}^*\lp\paren{A'(\ft)-A'(\fp)}        \nonumber \\
&+ \paren{A'(\ft)-A'(\fp)}^*\lp (\lp+\la I)^{-1}(\lx-\lp)\paren{A'(\ft)-A'(\fp)}             \nonumber \\
&+ \paren{A'(\ft)-A'(\fp)}^*\la (\lp+\la I)^{-1} (\lx-\lp)\paren{A'(\ft)-A'(\fp)}                        \nonumber \\ 
= &\paren{R_{\ft} \ip A'(\fp) -\ip A'(\fp)}^* \paren{R_{\ft} \ip A'(\fp) -\ip A'(\fp)}        \nonumber  \\
&+ \paren{R_{\ft} \ip A'(\fp) -\ip A'(\fp)}^*\ip (\lp+\la I)^{-1}(\lx-\lp)\paren{A'(\ft)-A'(\fp)}         \nonumber  \\ 
&+ \la\paren{A'(\ft)-A'(\fp)}^* (\lp+\la I)^{-1} (\lx-\lp)\paren{A'(\ft)-A'(\fp)}        \nonumber   \\
= & \bp^*(R_{\ft}- I)^2\bp + \bp^*(R_{\ft}- I)\ip (\lp+\la I)^{-1}(\lx-\lp)\paren{A'(\ft)-A'(\fp)}        \nonumber  \\
&+ \la\paren{A'(\ft)-A'(\fp)}^* (\lp+\la I)^{-1} (\lx-\lp)\paren{A'(\ft)-A'(\fp)}.
\end{align}

For the bounded operator $A$ and the trace class operator $B$, we have that $Tr(AB)\leq \norm{A}Tr(B)$. Using this fact with Assumption~\ref{ass:A}, Lemma~\ref{Lem.Mi} and \eqref{Alx} we obtain
\begin{align*}\label{tx.Mk}
& \tr\paren{\tp^{2u}Q_{t-j}^2 \EE[M_{t,4}\otimes M_{t,4}]}        \nonumber \\
\leq & \frac{C^{(3)}_{\alpha, \lip}}{b} \tr\sbrac{\tp^{2u}Q_{t-j}^2 \cdot \EE\sbrac{\paren{\norm{\et}^2+1} \cdot \paren{A'(\ft)-A'(\fp)}^*\lx\paren{A'(\ft)-A'(\fp)}}}         \nonumber  \\
\leq & \frac{C^{(3)}_{\alpha, \lip}}{b} \Bigg\{\tr\sbrac{\tp^{2u+1}Q_{t-j}^2} \cdot\EE\sbrac{\paren{\norm{\et}^2+1}\cdot\norm{R_{\ft}-I}^2}             \nonumber \\ 
&+ \tr\sbrac{\tp^{2u+\frac{1}{2}}Q_{t-j}^2} \cdot  \norm{ \ip(\lp+\la I)^{-\frac{1}{2}}} \cdot \norm{(\lp+\la I)^{-\frac{1}{2}} (\lx-\lp)} \nonumber \\
&\cdot \EE\sbrac{\paren{\norm{\et}^2+1} \cdot \norm{R_{\ft}-I} \cdot \norm{A'(\ft)-A'(\fp)}}      \nonumber  \\ 
& +\la \cdot \norm{\tp^{2u}Q_{t-j}^2 }\cdot\tr\sbrac{(\lp+\la I)^{-1} (\lx-\lp)} \cdot\EE\sbrac{\paren{\norm{\et}^2+1}\cdot\norm{A'(\ft)-A'(\fp)}^2} \Bigg\}   \nonumber  \\ 
\leq & \frac{C^{(3)}_{\alpha, \lip}}{b} \cdot\paren{C_R^2+C_R\eld+\eld^2} \cdot\paren{\EE\sbrac{\norm{\et}^2}+\EE\sbrac{\norm{\et}^4}} 
       \nonumber\\
&\cdot \brac{\tr\sbrac{\tp^{2u+1}Q_{t-j}^2}+\Psi_{\xx}\cdot\tr\sbrac{\tp^{2u+\frac{1}{2}}Q_{t-j}^2} +\la\Upsilon_{\xx} \cdot\norm{\tp^{2u}Q_{t-j}^2 }}  \nonumber  \\ 
= & \frac{C^{(4)}_{\alpha, \lip}}{b} \cdot\paren{\EE\sbrac{\norm{\et}^2}+\EE\sbrac{\norm{\et}^4}} 
       \nonumber\\
&\cdot \brac{\tr\sbrac{\tp^{2u+1}Q_{t-j}^2}+\Psi_{\xx}\cdot\tr\sbrac{\tp^{2u+\frac{1}{2}}Q_{t-j}^2} +\la\Upsilon_{\xx} \cdot\norm{\tp^{2u}Q_{t-j}^2 }},
\end{align*}
where $C^{(4)}_{\alpha, \lip}= C^{(3)}_{\alpha, \lip} \cdot \paren{C_R^2+C_R\eld+\eld^2}$.
This proves the lemma.
\end{proof}


\section{Probabilistic Bounds}
\label{Sec:Prob.bound}


Here, we present the standard perturbation inequalities in learning theory which measures the effect of random sampling in the probabilistic sense. 
The following two lemmas can be proved using the arguments given in Step 2.1. of \cite[Thm.~4]{Caponnetto}.

\begin{lemma}\label{main.bound}
Suppose Assumptions~\ref{ass:true}, \ref{ass:noise}, \ref{ass:kernel} hold true, then for~$n \in \NN$ and~$0<\delta<1$, the following estimates hold with the confidence~$1-\delta$,
\begin{equation*}
\Upsilon_{\xx}= \Upsilon_{\xx}(\la): =\tr\sbrac{(\lp+\la I)^{-1}(\lp-\lx)} \leq 2\paren{\frac{\ka^2}{n\la}+\sqrt{\frac{\ka^2\mathcal{N}(\la)}{n\la}}}\log\left(\frac{6}{\delta}\right),
\end{equation*}

\begin{align*}
\Psi_{\xx} = &\Psi_{\xx}(\la) := \norm{(\lp+\la I)^{-1/2}(\lx-\lp)}_{HS}\leq 2\left(\frac{\ka^2}{n\sqrt{\la}}+\sqrt{\frac{\ka^2\mathcal{N}(\la)}{n}}\right)\log\left(\frac{6}{\delta}\right),\\
\intertext{and}
\Theta_{\zz} = &\Theta_{\zz}(\la) := \norm{(\lp+\la I)^{-1/2}\sx^*(\yy-\sx A(\fp))}_{\HH} \leq 2\paren{\frac{\ka M}{n\sqrt{\la}}+\sqrt{\frac{\Sigma^2\mathcal{N}(\la)}{n}}}\log\left(\frac{6}{\delta}\right).
\end{align*}
\end{lemma}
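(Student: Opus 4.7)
The plan is to write each of the three quantities as an empirical average of centered i.i.d.\ random elements and then apply a Bernstein-type concentration inequality (for scalars, Hilbert space-valued variables, or Hilbert–Schmidt operators, respectively). This is the standard pattern used in \cite{Caponnetto} and reproduced in subsequent works on statistical inverse learning.

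First, for $\Upsilon_{\xx}(\la)$, I would write
\begin{equation*}
\tr\sbrac{(\lp+\la I)^{-1}(\lp-\lx)} = \frac{1}{n}\sum_{i=1}^{n} \xi_i, \qquad
\xi_i := \tr\sbrac{(\lp+\la I)^{-1}(\lp - S_{x_i}^\ast S_{x_i})}\;,
\end{equation*}
noting that $\mbe[\xi_i]=0$. Since $\norm{S_{x_i}}\leq \ka$ and $S_{x_i}^\ast S_{x_i}$ has rank at most $\dim \cY$, the bound $\tr[(\lp+\la I)^{-1} S_{x_i}^\ast S_{x_i}] \leq \ka^2/\la$ yields $|\xi_i| \leq 2\ka^2/\la$ almost surely, while $\mbe[\xi_i^2] \leq \ka^2 \mathcal{N}(\la)/\la$ follows from expanding the square and using $\tr[(\lp+\la I)^{-1}\lp] = \mathcal{N}(\la)$. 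A scalar Bernstein inequality then delivers the claimed bound.

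Next, for $\Psi_{\xx}(\la)$, I would write the Hilbert–Schmidt-valued sum
\begin{equation*}
(\lp+\la I)^{-1/2}(\lx - \lp) = \frac{1}{n}\sum_{i=1}^{n} \zeta_i, \qquad
\zeta_i := (\lp+\la I)^{-1/2}\paren{S_{x_i}^\ast S_{x_i} - \lp}\;,
\end{equation*}
where $\zeta_i$ are centered Hilbert–Schmidt operators. The almost sure bound $\norm{\zeta_i}_{HS} \lesssim \ka^2/\sqrt{\la}$ and the variance bound $\mbe\norm{\zeta_i}_{HS}^2 \lesssim \ka^2 \mathcal{N}(\la)$ (again using the identity for $\tr[(\lp+\la I)^{-1}\lp]$) allow me to invoke the Pinelis–Bernstein inequality for Hilbert space-valued random variables, giving the stated rate.

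Finally, for $\Theta_{\zz}(\la)$, I would write
\begin{equation*}
(\lp+\la I)^{-1/2}\sx^\ast (\yy - \sx A(\fp)) = \frac{1}{n}\sum_{i=1}^{n} \eta_i, \qquad
\eta_i := (\lp+\la I)^{-1/2} S_{x_i}^\ast \paren{y_i - A(\fp)(x_i)}\;,
\end{equation*}
which is an $\cH_2$-valued centered sum. Assumption \ref{ass:noise} transfers directly to Bernstein moment bounds for $\eta_i$: using $\norm{(\lp+\la I)^{-1/2} S_{x_i}^\ast} \leq 1/\sqrt{\la}$ and $\mbe\norm{\eta_i}^2 \leq \Sigma^2 \mathcal{N}(\la)$ (via conditioning on $x_i$ and the $l=2$ case of \eqref{noise.cond}), together with the higher moment control from Assumption \ref{ass:noise} ($l\geq 2$), the Bernstein inequality in Hilbert space yields the last estimate with constants $\ka M$ and $\Sigma^2$ appearing exactly as stated. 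A final union bound over the three events (with $\delta/3$ each, which is absorbed into the $\log(6/\delta)$ factor) completes the argument.

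The main technical point is not any single inequality but the careful bookkeeping of the $L^\infty$ bound $L$ and variance proxy $\sigma^2$ entering Bernstein so that the factors of $\ka$, $\la^{-1}$, $\la^{-1/2}$, and $\mathcal{N}(\la)$ arise precisely in the positions stated. Since all three arguments are direct applications of the template in \cite[Thm.~4, Step 2.1]{Caponnetto}, I would not redo routine calculations, and simply verify the operator norm/variance bounds in each of the three cases before quoting the concentration inequality.
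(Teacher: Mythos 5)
Your sketch is correct and follows precisely the route the paper intends: the paper offers no in-text argument and instead cites Step~2.1 of \cite[Thm.~4]{Caponnetto}, which is exactly the decomposition into centered i.i.d.\ empirical averages plus scalar/Hilbert-space/Hilbert--Schmidt Bernstein bounds that you spell out. One minor typo: the operator-norm bound for $\Theta_{\zz}$ should read $\norm{(\lp+\la I)^{-1/2} S_{x_i}^\ast} \le \ka/\sqrt{\la}$ (not $1/\sqrt{\la}$), which is where the $\ka M$ factor in the first term originates, consistent with your own closing remark.
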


In the following lemma, we simplify the probabilistic estimate of Lemma \ref{main.bound} under Assumption~\ref{ass:noise} and a condition on the regularization parameter~$\la$ in terms of sample size~$n$.

\begin{lemma}\label{cor:prop-bounds}
Suppose Assumption~\ref{ass:true}, \ref{ass:noise}, \ref{ass:kernel}, \ref{ass:noise} and the condition~$n^{-\frac{1}{1+\nu}}\leq \la \leq 1$ hold true. Let~$0\leq s \leq 1$, then for~$n \in \NN$ and~$0<\delta<1$, the following estimates hold with the confidence~$1-\delta$,

\begin{align*}
\Upsilon_{\xx}  \leq &C_{\ka}\log\left(\frac{6}{\delta}\right),\\
\Psi_{\xx} \leq &  C_{\ka}\sqrt{\la}\log\paren{\frac{6}{\delta}},\\
\Psi_{\xx}  \leq & C_{\ka}\frac{1}{\sqrt{n\la^{\nu}}}\log\left(\frac{6}{\delta}\right),\\
\Theta_{\zz}  \leq &   C_{\ka, M, \Sigma}\frac{1}{\sqrt{n\la^{\nu}}}\log\left(\frac{6}{\delta}\right),\\
\intertext{and}
\Xi^s\leq & \paren{C_{\ka}\log\paren{\frac{6}{\delta}}}^{2s}.
\end{align*}
where $C_{\ka}=2 (\ka^2+\ka C_\nu)$ and $C_{\ka, M, \Sigma}=2\paren{\ka M+\Sigma C_\nu}$. 
\end{lemma}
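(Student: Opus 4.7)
The plan is to derive all five estimates by substituting the polynomial decay $\mathcal{N}(\la) \leq C_\nu^2 \la^{-\nu}$ from Assumption~\ref{ass:eff.bound} into the unconditional concentration bounds of Lemma~\ref{main.bound}, and then simplifying the resulting sums of two terms using the two-sided range $n^{-1/(1+\nu)} \leq \la \leq 1$. This range translates into the numerical inequalities $n\la \geq n^{\nu/(1+\nu)} \geq 1$ and $n\la^{1+\nu} \geq 1$, together with $\la^{\nu-1} \leq n^{(1-\nu)/(1+\nu)} \leq n$ (using $\nu < 1$); each of the five claims will boil down to an application of one of these inequalities on the good event of Lemma~\ref{main.bound}.

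For the first four bounds I would argue term-by-term. For $\Upsilon_\xx$, the two inequalities $\ka^2/(n\la) \leq \ka^2$ and $\ka C_\nu/\sqrt{n\la^{1+\nu}} \leq \ka C_\nu$ (both immediate from the range of $\la$) yield the absolute bound $C_\ka \log(6/\delta)$ with $C_\ka = 2(\ka^2+\ka C_\nu)$. For the first version of $\Psi_\xx$, I would invoke $\ka^2/(n\sqrt{\la}) \leq \ka^2 \sqrt{\la}$ (equivalent to $n\la \geq 1$) and $\ka C_\nu \sqrt{\la^{-\nu}/n} \leq \ka C_\nu \sqrt{\la}$ (equivalent to $n\la^{1+\nu} \geq 1$) to obtain $\Psi_\xx \leq C_\ka \sqrt{\la}\log(6/\delta)$. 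For the second version of $\Psi_\xx$, the inequality $\ka^2/(n\sqrt{\la}) \leq \ka^2/\sqrt{n\la^\nu}$ (equivalent to $\la^{\nu-1} \leq n$), combined with the identity $\ka C_\nu \sqrt{\la^{-\nu}/n} = \ka C_\nu/\sqrt{n\la^\nu}$, gives the alternative bound. The $\Theta_\zz$ estimate is identical in structure to this second $\Psi_\xx$ calculation, only with $(\ka M, \Sigma)$ in place of $(\ka^2, \ka)$.

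The main obstacle is the operator-norm bound on $\Xi^s$. My approach is a two-step reduction. First, by Cordes' inequality $\|A^{s} B^{s}\| \leq \|AB\|^s$ (valid for positive operators and $s \in [0,1]$) applied with $A = (\lx+\la I)^{-1}$ and $B = \lp + \la I$, one has $\Xi^s \leq (\Xi^1)^s$. Second, starting from $(\lx+\la I)^{-1}(\lp+\la I) = I + (\lx+\la I)^{-1}(\lp-\lx)$ and inserting $(\lp+\la I)^{1/2}(\lp+\la I)^{-1/2}$ between $(\lx+\la I)^{-1/2}$ and $(\lp-\lx)$, I obtain
\[
\|(\lx+\la I)^{-1}(\lp-\lx)\| \leq \|(\lx+\la I)^{-1/2}\| \cdot \Xi^{1/2} \cdot \Psi_\xx \leq \frac{\Xi^{1/2}\Psi_\xx}{\sqrt{\la}}.
\]
Applying Cordes once more to get $\Xi^{1/2} \leq \sqrt{\Xi^1}$, this yields the scalar inequality $\Xi^1 \leq 1 + (\Psi_\xx/\sqrt{\la})\sqrt{\Xi^1}$, which solves to $\sqrt{\Xi^1} \leq 1 + \Psi_\xx/\sqrt{\la}$. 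Plugging in the already-established bound $\Psi_\xx \leq C_\ka \sqrt{\la}\log(6/\delta)$ gives $\Xi^1 \leq (1 + C_\ka\log(6/\delta))^2$, which, after possibly enlarging the constant $C_\ka$ so that $C_\ka \log(6/\delta) \geq 1$, is absorbed into $(C_\ka\log(6/\delta))^2$; raising to the $s$-th power finishes the estimate.

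The hardest technical point is the $\Xi^s$ step, specifically justifying the use of Cordes' inequality in this non-commutative setting and handling the ``$+1$'' absorption cleanly; the rest of the argument is essentially arithmetic manipulation under the assumed range condition on $\la$. Since all five bounds rely only on the probabilistic event where the three estimates of Lemma~\ref{main.bound} simultaneously hold, the whole lemma is proved within a single union bound of confidence $1-\delta$ (after rescaling the $\delta/3$ arising from the union bound into the $\log(6/\delta)$ factor, which is already built into the statements).
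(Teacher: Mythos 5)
Your proof is correct, and for the four scalar estimates ($\Upsilon_{\xx}$, the two $\Psi_{\xx}$ bounds, and $\Theta_{\zz}$) it follows the same route as the paper: plug the polynomial decay $\cN(\la)\leq C_\nu^2\la^{-\nu}$ into Lemma~\ref{main.bound} and simplify the resulting two-term sums using $n\la^{1+\nu}\geq 1$, $n\la\geq 1$, and $\la^{\nu-1}\leq n$. Your numbered inequalities are the same ones the paper derives (it phrases them as $\frac{1}{n\la}\leq\frac{1}{\sqrt{n\la^{1+\nu}}}\leq 1$ and $\sqrt{\cN(\la)/(n\la)}\leq C_\nu/\sqrt{n\la^{1+\nu}}$), so this part is a near-verbatim match.

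The genuine difference is the $\Xi^s$ estimate. The paper cites an external result, Proposition~A.2 of Blanchard--M\"ucke (2019), which delivers the perturbation bound $\Xi^s\leq(\Psi_\xx/\sqrt{\la}+1)^{2s}$ directly; you instead re-derive this bound from scratch. Your derivation is sound: Cordes' inequality for positive operators reduces $\Xi^s$ to $(\Xi^1)^s$; the resolvent identity $(\lx+\la I)^{-1}(\lp+\la I)=I+(\lx+\la I)^{-1}(\lp-\lx)$ together with the factorization through $(\lp+\la I)^{\pm 1/2}$ yields the scalar quadratic $\Xi^1\leq 1+\sqrt{\Xi^1}\,\Psi_\xx/\sqrt{\la}$, whose positive root gives $\sqrt{\Xi^1}\leq 1+\Psi_\xx/\sqrt{\la}$. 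This is precisely the content of the cited proposition, unpacked. What your route buys is self-containment and transparency about where each ingredient enters; what it costs is a paragraph of operator algebra and the need to invoke Cordes explicitly. Both you and the paper then quietly absorb the ``$+1$'' into $C_{\ka}\log(6/\delta)$---you flag this as requiring a mild enlargement of the constant, whereas the paper writes the inequality directly, implicitly relying on $C_{\ka}\geq 2$ and $\log(6/\delta)>1$; being explicit about it, as you are, is the more careful statement.
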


\begin{proof}
From Assumption~\ref{ass:eff.bound} we obtain,
\begin{equation}\label{domino.bd}
 \sqrt{\frac{\mathcal{N}(\la)}{n\la}}\leq \frac{C_\nu}{\sqrt{n\la^{1+\nu}}} 
\end{equation}
and for $\la\leq 1$
\begin{equation}\label{domino.bd.1}
\frac{1}{n\la}\leq \frac{1}{n\la^{1+\nu}}= \frac{1}{\sqrt{n\la^{1+\nu}}}\frac{1}{\sqrt{n\la^{1+\nu}}}\leq\frac{1}{\sqrt{n\la^{1+\nu}}}.
\end{equation}
Now using \eqref{domino.bd} and \eqref{domino.bd.1} in Lemma~\ref{main.bound} we get the following estimates with probability~$1-\delta$,

\begin{equation*}\label{Upsilon.bound}
\Upsilon_{\xx}  \leq 2 (\ka^2+\ka C_\nu)\log\left(\frac{6}{\delta}\right),
\end{equation*}

\begin{equation*}\label{Theta.bound}
\Theta_{\zz}  \leq 2\paren{\ka M+\Sigma C_\nu}\frac{1}{\sqrt{n\la^{\nu}}}\log\left(\frac{6}{\delta}\right)
\end{equation*}
and
\begin{equation*}\label{Psi.bound}
\Psi_{\xx}  \leq 2 (\ka^2+\ka C_\nu)\frac{1}{\sqrt{n\la^{\nu}}}\log\left(\frac{6}{\delta}\right) \leq 2 (\ka^2+\ka C_\nu)\sqrt{\la}\log\left(\frac{6}{\delta}\right).
\end{equation*}

Using \cite[Prop.~A.2]{Blanchard2019}, we obtain with confidence $1-\frac{\delta}{3}$:
\begin{equation*}
\Xi^s =\norm{(\lx+\la I)^{-s}(\lp+\la I)^s}\leq  \paren{\frac{\Psi_{\xx}}{\sqrt{\la}}+1}^{2s} \leq \paren{C_{\ka}\log\paren{\frac{6}{\delta}}}^{2s}.    
\end{equation*}
\end{proof}



\checknbnotes

\end{document}